\definecolor{codegreen}{rgb}{0,0.6,0}
\definecolor{codegray}{rgb}{0.5,0.5,0.5}
\definecolor{codepurple}{rgb}{0.58,0,0.82}
\definecolor{backcolour}{rgb}{0.95,0.95,0.92}
\lstdefinestyle{mystyle}{
    backgroundcolor=\color{backcolour},
    commentstyle=\color{codegreen},
    keywordstyle=\color{blue},
    numberstyle=\tiny\color{codegray},
    stringstyle=\color{codepurple},
    basicstyle=\ttfamily\small,
    breakatwhitespace=false,
    breaklines=true,
    captionpos=b,
    keepspaces=true,
    numbers=left,
    numbersep=5pt,
    showspaces=false,
    showstringspaces=false,
    showtabs=false,
    tabsize=2
}
\DeclareRobustCommand\onedot{\futurelet\@let@token\@onedot}
\def\@onedot{\ifx\@let@token.\else.\null\fi\xspace}
\def\eg{\emph{e.g}\onedot} 
\def\ie{\emph{i.e}\onedot} 
\def\st{\emph{s.t}\onedot}
\def\wrt{w.r.t\onedot} 
\declaretheorem[name=Theorem,refname=Thm.]{theorem}
\declaretheorem[name=Lemma]{lemma}
\declaretheorem[name=Proposition,refname=Prop.]{proposition}
\declaretheorem[name=Remark,style=remark]{remark}
\newcommand{\E}[2]{\mathbb{E}_{#1}{\left[#2\right]}}
\newcommand{\R}{\mathbb{R}}
\newcommand{\ind}{\mathrel{\perp\!\!\!\perp}}
\newcommand*{\argmax}{\mathop{\mathrm{argmax}}}
\newcommand{\defeq}{\mathrel{\mathop:}=}
\newcommand{\LHS}{\mathrm{LHS}}
\newcommand{\RHS}{\mathrm{RHS}}
\newcommand{\kl}{D_{\mathrm{KL}}}
\newcommand{\rank}{\mathrm{rank}}
\newcommand{\cmark}{\ding{51}}%
\newcommand{\xmark}{\ding{55}}%
\title{Bridging State and History Representations:
Understanding Self-Predictive RL}
\newcommand{\customsize}{\fontsize{8}{8}\selectfont}
\author{%
  Tianwei Ni$^\clubsuit$,
  Benjamin Eysenbach$^\spadesuit$,
  Erfan Seyedsalehi$^\diamondsuit$\thanks{Work done while ES was at McGill University.} , 
  Michel Ma$^\clubsuit$,
  Clement Gehring$^\clubsuit$,
  \\
  \textbf{Aditya Mahajan}$^\diamondsuit$,
  \textbf{Pierre-Luc Bacon}$^\clubsuit$
\\
  $^\clubsuit$Mila, Université de Montréal,
  $^\spadesuit$Princeton University,
  $^\diamondsuit$Mila, McGill University
  \\
  {\customsize\texttt{\{tianwei.ni, michel.ma, clement.gehring, pierre-luc.bacon\}@mila.quebec},}\\
  {\customsize\texttt{eysenbach@princeton.edu}, \texttt{erfan.seyedsalehi@mail.mcgill.ca},
  \texttt{aditya.mahajan@mcgill.ca}}
}
\begin{document}

\maketitle

\doparttoc %
\faketableofcontents %

\vspace{-1em}
\begin{abstract}

Representations are at the core of all \emph{deep} reinforcement learning (RL) methods for both Markov decision processes (MDPs) and partially observable Markov decision processes (POMDPs). Many representation learning methods and theoretical frameworks have been developed to understand what constitutes an effective representation. However, the relationships between these methods and the shared properties among them remain unclear. In this paper, we show that many of these seemingly distinct methods and frameworks for state and history abstractions are, in fact, based on a common idea of \emph{self-predictive} abstraction. 
Furthermore, we provide theoretical insights into the widely adopted objectives and optimization, such as the stop-gradient technique, in learning self-predictive representations.
These findings together yield a minimalist algorithm to learn self-predictive representations for states and histories. We validate our theories by applying our algorithm to standard MDPs, MDPs with distractors, and POMDPs with sparse rewards. These findings culminate in a set of preliminary guidelines for RL practitioners.\footnote{Code is available at \url{https://github.com/twni2016/self-predictive-rl}.}\looseness=-1

\end{abstract}

\vspace{-1em}
\section{Introduction}
\vspace{-0.7em}

Reinforcement learning holds great potential to automatically learn optimal policies, mapping observations to return-maximizing actions. 
However, the application of RL in the real world encounters challenges when observations are high-dimensional and/or noisy.
These challenges become even more severe in partially observable environments, where the observation (history) dimension grows over time.
In fact, current RL algorithms are often brittle and sample inefficient in these settings~\citep{wang2019benchmarking,stone2021distracting,tomar2021learning,morad2023popgym}.

To address the curse of dimensionality, a substantial body of work has focused on compressing observations into a latent state space, known as state abstraction in MDPs~\citep{dayan1993improving,dean1997model,li2006towards}, history abstraction in POMDPs~\citep{littman2001predictive,castro2009equivalence,baisero2020learning}, and sufficient statistics or information states in stochastic control~\citep{striebel1965sufficient,Kwakernaak1965, bohlin1970information, Kumar1986}. 
Traditionally, this compression has been achieved through hand-crafted feature extractors~\citep{sutton1995generalization,konidaris2011value} or with the discovery of a set of core tests sufficient for predicting future observations~\citep{littman2001predictive,SinghLittmanJongPardoeStone2003}. Modern approaches learn the latent state space using an encoder to automatically filter out irrelevant parts of observations~\citep{lange2010deep,watter2015embed,munk2016learning}.  
Furthermore, \textit{deep} RL enables end-to-end and online learning of compact state or history representations alongside policy training.
As a result, numerous representation learning techniques for RL have surfaced (refer to \autoref{tab:main_prior_work}), drawing inspiration from diverse fields within ML and RL. However, this abundance of methods may have inadvertently presented practitioners with a ``paradox of choice'', hindering their ability to identify the best approach for their specific RL problem.

This paper aims to offer systematic guidance regarding the essential characteristics that good representations should possess in RL (the ``\textbf{what}'') and effective strategies for learning such representations (the ``\textbf{how}''). 
We begin our analysis from first principles by comparing and connecting various representations proposed in prior works for MDPs and POMDPs, resulting in a unified view.
Remarkably, these representations are all connected by a \textbf{self-predictive} condition -- the encoder can predict its next latent state~\citep{subramanian2022approximate}. 
Next, we examine how to learn such self-predictive condition in RL, a difficult subtask due to the bootstrapping effect~\citep{gelada2019deepmdp,schwarzer2020data,tang2022understanding}. We provide fresh insights on why the popular ``stop-gradient'' technique, in which the parameters of the encoder do not update when used as a target, has the promise of learning the desired condition without representational collapse in POMDPs. Building on our new theoretical findings, we introduce a minimalist RL algorithm that learns self-predictive representations end-to-end with a \textit{single} auxiliary loss, \textit{without} the need for reward model learning (thereby removing planning)~\citep{franccois2019combined,gelada2019deepmdp,tomar2021learning,hansen2022temporal,ghugare2022simplifying,ye2021mastering,subramanian2022approximate}, reward regularization~\citep{eysenbach2021robust}, multi-step predictions and projections~\citep{schwarzer2020data,guo2020bootstrap}, and metric learning~\citep{zhang2020learning,castro2021mico}. Furthermore, the simplicity of our approach allows us to investigate the role of representation learning in RL, in isolation from policy optimization.

The core contributions of this paper are as follows. 
We establish a unified view of state and history representations with novel connections (\autoref{sec:representation}), revealing that many prior methods optimize a collection of closely interconnected properties, each representing a different facet of the same fundamental concept. Moreover, we enhance the understanding of self-predictive learning in RL regarding the choice of the objective and its impact on the optimization dynamics (\autoref{sec:learning}). Our theory results in a simplified and novel RL algorithm designed to learn self-predictive representations fully end-to-end (\autoref{sec:algo}). 
Through extensive experimentation across three benchmarks (\autoref{sec:experiments}), we provide empirical evidence substantiating all our theoretical predictions using our simple algorithm. 
Finally, we offer our recommendations for RL practitioners in \autoref{sec:recommend}.
Taken together, we believe that our work potentially aids in addressing the longstanding challenge of learning representations in MDPs and POMDPs.\looseness=-1

\vspace{-0.7em}
\section{Background}
\label{sec:background}
\vspace{-0.7em}

\textbf{MDPs and POMDPs.} In the context of a POMDP $\mathcal M_O = (\mathcal O, \mathcal A, P, R, \gamma, T)$\footnote{While the classic definition of a POMDP~\citep{cassandra1994acting} features a state space, we assume it to be unknown, thus our view of a POMDP is a black-box input-output system~\citep{subramanian2022approximate}. It is also similar to predictive state representations~\citep{littman2001predictive} that rely solely on observable quantities.
}, an agent receives an observation $o_t \in \mathcal O$ at time step $t$, selects an action $a_t \in \mathcal A$ based on the observed history $h_t \defeq (h_{t-1},a_{t-1},o_t) \in \mathcal H_t$\footnote{In general, $h_t \defeq (h_{t-1}, a_{t-1}, r_{t-1}, o_t)$~\citep{izadi2005using}. In this study, we assume rewards are inaccessible during policy inference.},
and obtains a reward $r_t \sim R(h_t,a_t)$ along with the subsequent observation $o_{t+1} \sim P(\cdot \mid h_t,a_t)$. The initial observation $h_1\defeq o_1$ is sampled from the distribution $P(o_1)$. The total time horizon is denoted as $T \in \mathbb N^+ \cup \{+\infty\}$, and the discount factor is $\gamma \in [0,1]$ (less than $1$ for infinite horizon).
To maintain brevity, we employ the ``prime'' symbol to represent the next time step, for example writing $h'=(h,a,o')$.
Under the above assumptions, our agent acts according to a policy $\pi(a\mid h)$ with action-value $Q^\pi(h,a)$. Furthermore, it can be shown that there exists an optimal value function $Q^*(h,a)$ such that $Q^*(h,a) = \E{}{r\mid h,a} + \gamma \E{o'\sim P(\mid h,a)}{\max_{a'} Q^*(h',a')}$, and a deterministic optimal policy $\pi^*(h) = \argmax_a Q^*(h,a)$. 
In an MDP $\mathcal M_S = (\mathcal S,\mathcal A, P, R, \gamma, T)$, the observation $o_t$ and history $h_t$ are replaced by the state $s_t \in \mathcal S$\footnote{In finite-horizon MDPs, we assume $s$ includes the time step $t$.}.

\textbf{State and history representations.} In a POMDP, an \textbf{encoder} is a function $\phi: \mathcal H_t \to \mathcal Z$ that produces a history \textbf{representation} $z=\phi(h) \in \mathcal Z$. Similarly, in an MDP, we replace $h$ with $s$, resulting in a state encoder $\phi: \mathcal S \to \mathcal Z$ and a state representation $z=\phi(s) \in \mathcal Z$.
This representation is known as an ``abstraction''~\citep{li2006towards} or a ``latent state''~\citep{gelada2019deepmdp}. 
Such encoders are sometimes shared and simultaneously updated by downstream components (\eg policy, value, world model) of an RL system~\citep{hafner2020mastering,hansen2022temporal}.
In this paper, we are interested in such a shared encoder, or \textbf{the encoder learned for the value function} if the encoders are separately learned.\looseness=-1

Below, we present the key abstractions that are central to this paper, along with their established connections. We will highlight the \textbf{conditions} met by each abstraction. We defer additional common abstractions and related concepts to \autoref{sec:more_def}.

\textbf{1. $Q^*$-irrelevance abstraction.} An encoder $\phi_{Q^*}$ provides a $Q^*$-irrelevance abstraction~\citep{li2006towards} if it contains the necessary information for predicting the return. 
Formally, if $\phi_{Q^*}(h_i) = \phi_{Q^*}(h_j)$, then
$
Q^*(h_i, a) = Q^*(h_j,a), \forall a
$.
A $Q^*$-irrelevance abstraction can be achieved as a by-product of learning an encoder $\phi$ through a value function $\mathcal Q(\phi(h),a)$ end-to-end using model-free RL. If the optimal values match, then $\mathcal Q^*(\phi_{Q^*}(h),a) = Q^*(h,a),\forall h,a$.

\textbf{2. Self-predictive (model-irrelevance) abstraction.} We view the model-irrelevance concept~\citep{li2006towards} from a self-predictive standpoint. Specifically, a model-irrelevant encoder $\phi_L$
fulfills two conditions: \textbf{expected reward prediction (RP)} and \textbf{next latent state distribution prediction (ZP)}\footnote{RP and ZP are labeled as (P1) and (P2), respectively, in~\citet{subramanian2022approximate}.}, ensuring that the encoder can be used to predict expected reward and the next latent state distribution.\looseness=-1
\vspace{0.5em} %
\begin{align}
\label{eq:RP}
&\exists R_z: \mathcal Z \times \mathcal A \to \R, \quad \st\quad  \E{}{r\mid h,a} = R_z (\phi_L(h),a), \quad \forall h,a, \tag{RP}  \\
\label{eq:ZP}
&\exists P_{z}: \mathcal Z \times \mathcal A \to \Delta(\mathcal Z), \quad \st\quad P(z'\mid h,a) = P_z(z' \mid \phi_L(h), a), \quad \forall h,a,z', \tag{ZP} \\ 
\label{eq:EZP}
& \E{}{z' \mid h,a} = \E{}{z' \mid \phi_L(h),a},\quad \forall h,a. \tag{EZP}
\end{align}
\newcommand{\RP}{\ref{eq:RP}\xspace}%
\newcommand{\ZP}{\ref{eq:ZP}\xspace}%
\newcommand{\EZP}{\ref{eq:EZP}\xspace}%
A weak version of \ZP is the \textbf{expected next latent state $z$ prediction (EZP)} condition.
\ZP can be interpreted as a sufficient statistics condition on $\phi_L$: the next latent state $z'$ is conditionally independent of the history $h$ when $\phi_L(h)$ and $a$ is known, symbolized as $z' \ind h \mid \phi_L(h), a$. 
Satisfying \ZP only is trivial and can be achieved by employing a constant representation $\phi(h) = c$, where $c$ is a fixed constant. 
Therefore, \ZP must be used in conjunction with other conditions (\eg, \RP) to avoid such degeneration.
The $\phi_L$ is known as a bisimulation generator~\citep{givan2003equivalence} in MDPs and an information state generator~\citep{subramanian2022approximate} in POMDPs.

\newcommand{\OP}{\ref{eq:OP}\xspace}%
\newcommand{\Rec}{\ref{eq:Rec}\xspace}%
\newcommand{\OR}{\ref{eq:OR}\xspace}%
\textbf{3. Observation-predictive (belief) abstraction.} 
This abstraction is implicitly introduced by \citet{subramanian2022approximate}, which we denote by $\phi_O$, and satisfies three conditions: expected reward prediction \RP, \textbf{recurrent encoder (Rec)} and \textbf{next observation distribution prediction (OP)}\footnote{OP and Rec are labeled as (P2a) and (P2b), respectively, in \citet{subramanian2022approximate}.}.
\begin{align}
\label{eq:Rec}
&\exists \psi_z: \mathcal Z \times \mathcal A \times \mathcal O \to \mathcal Z, \quad \st\quad \phi(h') = \psi_z(\phi_O(h),a,o'), \quad \forall h,a,o', \tag{Rec} \\
\label{eq:OP}
&\exists P_{o}: \mathcal Z \times \mathcal A \to \Delta(\mathcal O), \quad \st\quad P(o'\mid h,a) = P_o(o' \mid \phi_O(h), a), \quad \forall h,a,o', \tag{OP} \\
\label{eq:OR}
& \exists \psi_{o}: \mathcal Z \to \mathcal O,\quad \st\quad o = \psi_o(\phi_O(h)),\quad \forall h. \tag{OR}
\end{align}
Similarly, the \OP condition is equivalent to $o' \ind h \mid \phi_O(h), a$, and \OP is closely related to \textbf{observation reconstruction (OR)}, widely used in practice~\citep{yarats2021improving}.
The recurrent encoder (\Rec) condition is satisfied for encoders parameterized with feedforward or recurrent neural networks~\citep{elman1990finding,hochreiter1997long}, but not Transformers~\citep{vaswani2017attention}. In this paper, we assume the \Rec condition is always satisfied. 
In POMDPs, $\phi_O$ is well-known as a belief state generator~\citep{kaelbling1998planning}.

We extend the relations between these abstractions known in MDPs~\citep{li2006towards} to POMDPs.\looseness=-1
\begin{theorem}[\textbf{Relationships between common abstractions (informal)}]
\label{thm:hierarchy_informal}
An encoder satisfying $\phi_O$ also belongs to $\phi_L$; an encoder satisfying $\phi_L$ also belongs to $\phi_{Q^*}$; the reverse is not necessarily true. 
\end{theorem}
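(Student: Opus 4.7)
The plan is to establish the chain of inclusions $\phi_O \subseteq \phi_L \subseteq \phi_{Q^*}$ as two separate implications, and then to sketch small counterexamples for the non-reverse directions. Throughout, I assume the \Rec condition is in force, as stated in the background.

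For the first inclusion $\phi_O \subseteq \phi_L$, any $\phi_O$-encoder already satisfies \RP by definition, so only \ZP needs to be produced. The natural step is to marginalize the next observation out of the next latent state distribution:
\begin{equation*}
P(z' \mid h, a) \;=\; \sum_{o'} P(z' \mid h, a, o')\, P(o' \mid h, a).
\end{equation*}
By \Rec, $P(z' \mid h, a, o') = P(z' \mid h') = P_m(z' \mid \phi_O(h), a, o')$, which depends on $h$ only through $\phi_O(h)$. By \OP, $P(o' \mid h, a) = P_o(o' \mid \phi_O(h), a)$, which again depends on $h$ only through $\phi_O(h)$. Substituting and defining $P_z(z' \mid \phi_O(h), a) := \sum_{o'} P_m(z' \mid \phi_O(h), a, o')\, P_o(o' \mid \phi_O(h), a)$ yields \ZP.

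For the second inclusion $\phi_L \subseteq \phi_{Q^*}$, I would build a latent Bellman operator $(\mathcal{T}_z Q)(z, a) := R_z(z, a) + \gamma\, \mathbb{E}_{z' \sim P_z(\cdot \mid z, a)} [\max_{a'} Q(z', a')]$ and let $\mathcal{Q}^*$ be its fixed point (unique by the standard contraction argument for $\gamma < 1$, or defined by backward induction when $T < \infty$). I would then claim $\mathcal{Q}^*(\phi_L(h), a) = Q^*(h, a)$ for all $h, a$. The cleanest proof is by induction on the horizon in the finite case: the base step uses \RP to match the one-step reward, and the inductive step uses \ZP to convert $\mathbb{E}_{o' \mid h, a}[\max_{a'} Q^*(h', a')] = \mathbb{E}_{z' \mid h, a}[\max_{a'} \mathcal{Q}^*(z', a')] = \mathbb{E}_{z' \sim P_z(\cdot \mid \phi_L(h), a)}[\max_{a'} \mathcal{Q}^*(z', a')]$. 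In the infinite-horizon case I would instead show that $(h, a) \mapsto \mathcal{Q}^*(\phi_L(h), a)$ satisfies the history-level Bellman optimality equation, and invoke the uniqueness of its bounded fixed point. Either way, once values agree, $Q^*$-irrelevance of $\phi_L$ is immediate: $\phi_L(h_i) = \phi_L(h_j)$ forces $Q^*(h_i, a) = Q^*(h_j, a)$ for every $a$.

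To show the reverses fail, I would exhibit small counterexamples. For $\phi_{Q^*} \not\Rightarrow \phi_L$, take any MDP with two states whose optimal values coincide but whose transition kernels differ; collapsing them respects $Q^*$-irrelevance but breaks \ZP. For $\phi_L \not\Rightarrow \phi_O$, construct a POMDP whose latent dynamics under some encoder $\phi_L$ are self-predictive and reward-predictive, yet the observation emission conditional on $\phi_L(h), a$ is not recoverable (e.g.\ multiple histories with the same latent state but distinguishable observation distributions); this satisfies \ZP and \RP but violates \OP.

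The main obstacle I expect is the fixed-point step in the infinite-horizon argument for $\phi_L \subseteq \phi_{Q^*}$: one must argue that the latent Bellman operator has a well-defined bounded solution and that its pullback through $\phi_L$ coincides with $Q^*$, which requires appealing carefully to contraction on an appropriate function space (and to \RP-induced boundedness of rewards). The marginalization for $\phi_O \subseteq \phi_L$ and the counterexamples are otherwise routine.
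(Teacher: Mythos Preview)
Your proposal is correct and matches the paper's approach: the marginalization for $\phi_O \subseteq \phi_L$ is identical to the paper's Proposition~3 (which it attributes to \citet{subramanian2022approximate}), and for $\phi_L \subseteq \phi_{Q^*}$ the paper simply cites \citet[Theorems~5 and~25]{subramanian2022approximate}, whose argument is precisely the Bellman induction/fixed-point you sketch. The paper does not spell out explicit counterexamples for the non-reverse directions, so your treatment there actually goes slightly beyond what is written.
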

\vspace{-0.5em}

\vspace{-0.7em}
\section{A Unified View on State and History Representations}
\label{sec:representation}
\vspace{-0.5em}

\subsection{An Implication Graph of Representations in RL}
\label{sec:graph}
\vspace{-0.5em}

\begin{wrapfigure}{r}{0.43\linewidth}
\vspace{-3em}
    \centering
    \begin{tikzpicture}[>=latex',line join=bevel,very thick,scale=0.5]
\node (ZP) at (119.17bp,94.309bp) [draw,fill=yellow,circle] {\ZP};
  \node (Q) at (27.794bp,41.552bp) [draw,circle] {$\phi_{Q^*}$};
  \node (RP) at (27.794bp,147.07bp) [draw,circle] {\RP};
  \node (Rec) at (212.85bp,23.372bp) [draw,circle] {\Rec};
  \node (OP) at (205.37bp,187.41bp) [draw,circle] {\OP};
  \node (OR) at (230.8bp,117.55bp) [draw,circle] {\OR};
  \draw [brown,->] (ZP) ..controls (89.708bp,77.298bp) and (74.614bp,68.584bp)  .. (Q);
  \draw [orange,->] (ZP) ..controls (87.938bp,112.34bp) and (69.843bp,122.79bp)  .. (RP);
  \draw [gray,->] (ZP) ..controls (153.26bp,77.125bp) and (174.22bp,61.644bp)  .. (Rec);
  \draw [red,->] (ZP) ..controls (141.31bp,128.5bp) and (162.76bp,152.08bp)  .. (OP);
  \draw [orange,->] (Q) ..controls (20.866bp,83.269bp) and (20.796bp,101.24bp)  .. (RP);
  \draw [brown,->] (RP) ..controls (34.494bp,113.26bp) and (34.841bp,95.206bp)  .. (Q);
  \draw [blue,->] (Rec) ..controls (175.55bp,42.861bp) and (155.01bp,58.208bp)  .. (ZP);
  \draw [blue,->] (OP) ..controls (182.27bp,152.13bp) and (160.89bp,128.7bp)  .. (ZP);
  \draw [transparent,->] (OR) ..controls (191.7bp,109.41bp) and (167.92bp,104.46bp)  .. (ZP);
  \draw [red,->] (OR) ..controls (220.89bp,144.79bp) and (218.52bp,151.3bp)  .. (OP);
\end{tikzpicture}
    \vspace{-0.7em}
\caption{\footnotesize \textbf{An implication graph} showing the relations between the conditions on \textit{history} representations. The source nodes of the edges with the same color \textit{together} imply the target node. In MDPs, \OR implies all the other conditions.
All the connections are discovered in this work, except for (1) \OP + \Rec implying \ZP, (2) \ZP + \RP implying $\phi_{Q^*}$.\looseness=-1}
\label{fig:relation_main}
\vspace{-2.5em}
\end{wrapfigure}
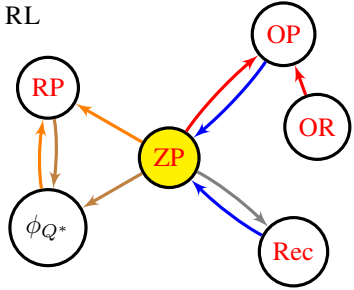

Using the taxonomy of state and history abstractions, it becomes possible to establish theoretical links among different representations and their respective conditions discussed earlier. These connections are succinctly illustrated in a directed graph, as shown in \autoref{fig:relation_main}.
In this section, we highlight the most significantly novel finding, while postponing the other propositions and proofs to \autoref{sec:unified_view}. 

The definition of self-predictive and observation-predictive abstractions suggests the classic \textit{phased} training framework. In phased training, we alternatively train an encoder to predict expected rewards (\RP) and predict next latent states (\ZP) or next observations (\OP), and also train an RL or planning agent on the latent space with the encoder ``detached'' from downstream components.
On the other hand, we show in our \autoref{thm:EG_ZP_implies_Er} that if we learn an encoder \textit{end-to-end} in a model-free fashion but using \ZP (or \OP) as an auxiliary task, then the ground-truth expected reward can be induced by the latent $Q$-value and latent transition. Thus, the encoder also satisfies \RP and generates  $\phi_L$ (or $\phi_O$) representation already. 

\begin{theorem}[\textbf{\ZP + $\phi_{Q^*}$ imply \RP}]
\label{thm:EG_ZP_implies_Er}
If an encoder $\phi$ satisfies \ZP, and  $\mathcal Q(\phi(h),a) = Q^*(h,a),\forall h,a$, then we can construct a latent reward function $\mathcal R_z(z,a) \defeq \mathcal Q(z,a) - \gamma 
\E{z' \sim P_z(\mid z,a)}{\max_{a'} \mathcal Q(z', a')}$, such that $\mathcal R_z(\phi(h),a) = \E{}{r \mid h,a},\forall h,a$.
\end{theorem}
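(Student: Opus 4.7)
The plan is to invert the Bellman optimality equation at $Q^*$ and then use the two hypotheses in sequence. Concretely, starting from
\[
Q^*(h,a) \;=\; \mathbb{E}[R(h,a)\mid h,a] \;+\; \gamma\, \mathbb{E}_{o'\sim P(\cdot\mid h,a)}\bigl[\max_{a'} Q^*(h',a')\bigr],
\]
I would solve for the expected reward and then replace $Q^*$ by $\mathcal{Q}\circ\phi$ using the second hypothesis. This gives
\[
\mathbb{E}[R(h,a)\mid h,a] \;=\; \mathcal{Q}(\phi(h),a) \;-\; \gamma\, \mathbb{E}_{o'\sim P(\cdot\mid h,a)}\bigl[\max_{a'}\mathcal{Q}(\phi(h'),a')\bigr].
\]
At this point the right-hand side of the target identity $\mathcal{R}_z(\phi(h),a)$ has exactly the same first term, so the whole claim reduces to showing that the expectation over $o'$ (pushed through $\phi$) equals the expectation over $z'\sim P_z(\cdot\mid \phi(h),a)$.

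The key step is to rewrite the expectation over $o'$ as an expectation over $z'=\phi(h')=\phi(h,a,o')$ via the tower/change-of-variables property. Since $z'$ is a deterministic function of $(h,a,o')$, the law of $z'$ given $(h,a)$ is the pushforward of $P(\cdot\mid h,a)$ under this map, which is precisely $P(z'\mid h,a)$. Because the integrand $\max_{a'}\mathcal{Q}(z',a')$ depends on $o'$ only through $z'$, we get
\[
\mathbb{E}_{o'\sim P(\cdot\mid h,a)}\bigl[\max_{a'}\mathcal{Q}(\phi(h'),a')\bigr] \;=\; \mathbb{E}_{z'\sim P(\cdot\mid h,a)}\bigl[\max_{a'}\mathcal{Q}(z',a')\bigr].
\]
Now \ZP kicks in: $P(z'\mid h,a) = P_z(z'\mid \phi(h),a)$, so the expectation coincides with $\mathbb{E}_{z'\sim P_z(\cdot\mid \phi(h),a)}[\max_{a'}\mathcal{Q}(z',a')]$. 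Plugging back yields exactly $\mathcal{R}_z(\phi(h),a)$ as defined in the statement.

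The main obstacle I would watch carefully is the change-of-variables step, since it silently relies on two facts: that $\phi$ acts deterministically on $(h,a,o')$ to produce $z'$ (so the pushforward is well-defined and carries the integrand), and that \ZP is a statement about the full conditional distribution rather than just moments — using the full distributional equality is what lets the nonlinear $\max_{a'}$ integrand pass through. No continuity or measurability subtleties arise beyond what is already implicit in the POMDP setup, and no appeal to \RP or to any further abstraction property is needed, so the argument is self-contained given the Bellman equation in the Background and the hypothesis $\mathcal{Q}\circ\phi = Q^*$.
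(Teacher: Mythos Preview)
Your proposal is correct and follows essentially the same approach as the paper: invert the Bellman optimality equation, replace $Q^*$ by $\mathcal Q\circ\phi$ via the $\phi_{Q^*}$ hypothesis, rewrite the expectation over $o'$ as one over $z'=\phi(h')$ by pushforward, and then invoke \ZP to replace $P(z'\mid h,a)$ by $P_z(z'\mid \phi(h),a)$. Your remark that the full distributional form of \ZP (not merely \EZP) is what licenses passing the nonlinear $\max_{a'}$ through is exactly the right observation.
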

\vspace{-0.7em}

This result holds significance, as it suggests that end-to-end approaches to learning $\phi_{Q^*}$ + \ZP (\OP) have similar theoretical justification as classic phased approaches. 

\subsection{Which Representations Do Prior Methods Learn?}

\bgroup
\def\arraystretch{0.95}
\setlength{\tabcolsep}{0.2em}
\begin{table}[t]
    \vspace{-3.5em}
    \centering
    \footnotesize
\caption{\footnotesize \textbf{Which optimal representation will be learned by the value function in prior works?} The ``PO'' column shows if the approach applies to POMDPs. The ``Conditions'' column shows the conditions that the encoder of the optimal value satisfies (see the appendix for the ``metric'' and ``regularization'' conditions). The \ZP loss shows the loss function they use to learn \ZP condition. The \ZP target shows whether they use online or stop-gradient (including detached and EMA) encoder target.
Due to the space limit, we omit the citations for recurrent model-free RL~\citep{hausknecht2015deep,kapturowski2018recurrent,ni2021recurrent} and belief-based methods~\citep{wayne2018unsupervised,Hafner2019LearningLD,han2019variational,lee2019stochastic}.
}
\vspace{-1em}
    \label{tab:main_prior_work}
    \begin{tabular}{cccccc}
    \toprule
      \textbf{Work}   &  \textbf{PO?}  & \textbf{Abstraction} & \textbf{Conditions} & \textbf{\ZP loss} & \textbf{\ZP target} \\
      \midrule
      Model-Free \& Classic Model-Based RL & \xmark & $\phi_{Q^*}$ & $\phi_{Q^*}$ & N/A & N/A \\ 
      MuZero~\citep{schrittwieser2020mastering} & \xmark & unknown & $\phi_{Q^*}$ + \RP & N/A & N/A \\
      MICo~\citep{castro2021mico} & \xmark & unknown & $\phi_{Q^*}$ + metric & N/A & N/A \\ 
     CRAR~\citep{franccois2019combined} & \xmark & $\phi_L$ & $\phi_{Q^*}$ + \RP + \ZP + reg. & $\ell_2$ & online   \\
     DeepMDP~\citep{gelada2019deepmdp} & \xmark & $\phi_L$ & $\phi_{Q^*}$ + \RP + \ZP & W ($\ell_2$) & online   \\ 
     SPR~\citep{schwarzer2020data} & \xmark & $\phi_L$ &$\phi_{Q^*}$ + \ZP & $\cos$ & EMA \\ 
     DBC~\citep{zhang2020learning} & \xmark &$\phi_L$ & $\phi_{Q^*}$ + \RP + \ZP + metric & FKL & detached \\
     LSFM~\citep{lehnert2020successor} & \xmark & $\phi_L$ &  $\phi_{Q^*}$ + \RP + \EZP & SF & detached \\
     Baseline in~\citep{tomar2021learning} & \xmark &$\phi_L$ & $\phi_{Q^*}$ + \RP + \ZP & $\ell_2$ & detached \\
     EfficientZero~\citep{ye2021mastering} & \xmark & $\phi_L$ & $\phi_{Q^*}$ + \RP + \ZP & $\cos$ & detached \\ 
     TD-MPC~\citep{hansen2022temporal} & \xmark & $\phi_L$ & $\phi_{Q^*}$ + \RP + \ZP & $\ell_2$ & EMA \\

    ALM~\citep{ghugare2022simplifying} & \xmark & $\phi_L$ & $\phi_{Q^*}$ + \ZP & RKL & EMA \\
    TCRL~\citep{zhao2023simplified}& \xmark & $\phi_L$ & \RP + \ZP & $\cos$ & EMA  \\
    OFENet~\citep{ota2020can} & \xmark & $\phi_O$ & $\phi_{Q^*}$ + \OP & N/A & N/A  \\ %
     \noalign{\vskip 2pt}
    \hdashline[5pt/2pt]
     \noalign{\vskip 2pt}
      Recurrent Model-Free RL & \cmark & $\phi_{Q^*}$ & $\phi_{Q^*}$ & N/A & N/A \\ 
     PBL~\citep{guo2020bootstrap} & \cmark & $\phi_L$ & $\phi_{Q^*}$ + \ZP & $\ell_2$ & detached \\
    AIS~\citep{subramanian2022approximate} & \cmark & $\phi_L$, $\phi_O$ & \RP + \ZP or \OP  & $\ell_2$, FKL  & detached \\ 
    Belief-Based Methods  & \cmark & $\phi_O$ & \RP + \ZP + \OR & FKL & online \\
    Causal States~\citep{zhang2019learning} & \cmark & $\phi_O$ & \RP + \OP  & N/A & N/A \\
    \midrule
    Minimalist $\phi_L$ (this work) & \cmark &  $\phi_L$ & $\phi_{Q^*}$  + \ZP   &  $\ell_2$, KL & stop-grad  \\
     \bottomrule
    \end{tabular}
    \vspace{-1em}
\end{table}
\egroup

With the unified view of state and history representations, we can categorize prior works based on the conditions satisfied by the \textit{optimal} encoders of their value functions. \autoref{tab:main_prior_work} shows representative examples. 
The unified view enables us to draw interesting connections between prior works, even though they may differ in RL or planning algorithms and the encoder objectives. Here we highlight some important connections and provide a more detailed discussion of all prior works in \autoref{app:prior_works}.

To begin with, it is important to recognize that classic model-based RL actually learns  $\phi_{Q^*}$ in value function. Model-based RL trains a policy and value by rolling out on the learned model. However, the policy and value do not share representations with the model~\citep{sutton1990integrated,sutton2012dyna,chua2018deep,kaiser2019model,janner2019trust}, or learn their representations from maximizing returns~\citep{tamar2016value,oh2017value,silver2017predictron}.
Secondly, as shown in \autoref{tab:main_prior_work}, there is a wealth of prior work on approximating $\phi_L$, stemming from different perspectives. These include bisimulation~\citep{gelada2019deepmdp}, information states~\citep{subramanian2022approximate}, variational inference~\citep{eysenbach2021robust,ghugare2022simplifying}, successor features~\citep{barreto2017successor,lehnert2020successor}, and self-supervised learning~\citep{schwarzer2020data,guo2020bootstrap}. The primary differences between these approaches lie in their selection of (1) architecture (whether learning \RP, $\phi_{Q^*}$, or both), (2) \ZP objectives (such as $\ell_2$, cosine, forward or reverse KL, as discussed in \autoref{sec:biased}), and (3) \ZP targets for optimization (including online, detached, EMA, as detailed in \autoref{sec:optim}).
Finally, observation-predictive representations are typically studied in POMDPs, where they are known as belief states~\citep{kaelbling1998planning} and related to predictive state representations~\citep{littman2001predictive}.\looseness=-1

\section{On Learning Self-Predictive Representations in RL}
\label{sec:learning}

The implication graph (\autoref{fig:relation_main}) establishes the theoretical connections among various representations in RL, yet it does not address the core learning problems. This section aims to give some theoretical answers to \textbf{how} to learn self-predictive representations. 
While self-predictive representation holds promise, it poses significant learning challenges compared to grounded model-free and observation-predictive representations. The bootstrapping effect, where $\phi$ appears in both sides of \ZP (since $z'$ also relies on $\phi(h')$), contributes to this difficulty. 
We present detailed analyses of the objectives in \autoref{sec:biased} and optimization in \autoref{sec:optim}, with proofs deferred to \autoref{app:optim}.
Building on these analyses, we propose a simple representation learning algorithm for $\phi_L$ in RL in \autoref{sec:algo}.

\subsection{Are Practical \ZP Objectives Biased?}
\label{sec:biased}

\autoref{thm:EG_ZP_implies_Er} suggests that we can learn $\phi_L$ by simply training an auxiliary task of \ZP on a model-free agent. Prior works have proposed several auxiliary losses, summarized in \autoref{tab:main_prior_work}'s \ZP loss column.
Formally, we parametrize an encoder with $f_\phi: \mathcal H_t \to \mathcal Z$ (deterministic case) or $f_\phi: \mathcal H_t \to \Delta(\mathcal Z)$ (probabilistic case)\footnote{Despite being a special case of probabilistic encoders, deterministic encoders deserve distinct discussion because they can be optimal in POMDPs and have been frequently used in prior works. In addition, it should be noted that a probabilistic one may help as it smooths the objective.}.
The latent transition function $P_z(z'\mid z,a)$ is parameterized by $g_\theta: \mathcal Z \times \mathcal A \to \mathcal Z $ (deterministic case) or $g_\theta: \mathcal Z \times \mathcal A \to \Delta(\mathcal Z)$ (probabilistic case). 
We use $\mathbb P_\phi(z\mid h)$ and $ \mathbb P_{\theta}(z' \mid z,a)$ to represent the encoder and latent transition, respectively.
The self-predictive metric for \ZP is \textit{ideally}:
\begin{align}
\label{eq:zp_loss}
\mathcal L_{\ZP,\mathbb D}(\phi,\theta;h,a) \defeq \E{z\sim \mathbb P_\phi(\mid h)}{\mathbb D(\mathbb P_{\theta}(z' \mid z,a)\mid \mid \mathbb P_\phi(z'\mid h,a))},
\end{align}
where $\mathbb P_\phi(z' \mid h, a) = \E{o'\sim P(\cdot \mid h,a)}{\mathbb P_\phi(z' \mid h')}$. $\mathbb D(\cdot\mid \mid \cdot)\in\R_{\ge 0}$ compares two distributions. When \autoref{eq:zp_loss} reaches minimum, then for any $z \sim \mathbb P_\phi(\mid h)$, the \ZP condition is satisfied.

When designing \textit{practical} \ZP loss, prior works are mainly divided into deterministic $\ell_2$ approach~\citep{gelada2019deepmdp,schwarzer2020data,tomar2021learning,hansen2022temporal,ye2021mastering}\footnote{Cosine distance is an $\ell_2$ distance on the normalized vector space $\mathcal Z = \{\ z \in \R^d \mid \|z\|_2=1\}$.
} or probabilistic $\mathtt f$-divergence approach~\citep{zhang2020learning,ghugare2022simplifying,Hafner2019LearningLD} that includes forward and reverse KL divergences (in short, FKL and RKL):\looseness=-1
\begin{align}
\label{eq:l2}
J_\ell(\phi,\theta,\widetilde{\phi}; h,a) &\defeq \E{o'\sim P(\mid h,a)}{\|g_\theta(f_\phi(h),a)-f_{\widetilde{\phi}}(h')\|_2^2},\\
\label{eq:kl}
J_{D_\mathtt f}(\phi,\theta,\widetilde{\phi}; h,a) &\defeq \E{z\sim \mathbb P_{\phi}(\mid h), o'\sim P(\mid h,a)}{D_\mathtt f\left(\mathbb P_{\widetilde{\phi}}(z'\mid h')\mid\mid \mathbb P_{\theta}(z'\mid z,a)\right)},
\end{align}
where $\widetilde{\phi}$, called \ZP \textbf{target}, can be \textbf{online} (exact $\phi$ that allows gradient backpropagation), or the \textbf{stop-gradient} version $\overline \phi$ (detached from the computation graph and using a copy or exponential moving average (EMA) of $\phi$). The update rule is $\overline \phi \gets\tau \overline \phi + (1-\tau)\phi $, with $\tau = 0$ for \textbf{detached} and $\tau \in (0,1)$ for generic \textbf{EMA}. We summarize the choices of \ZP targets in one column of \autoref{tab:main_prior_work}.

We first investigate the relationship between the ideal objective~\autoref{eq:zp_loss}  and practical objectives~\autoref{eq:l2}~and~\autoref{eq:kl} to better understand their implications. 

\begin{proposition}[The practical $\ell_2$ objective~\autoref{eq:l2} is an \textbf{upper bound} of the ideal objective~\autoref{eq:zp_loss} $\mathcal L_{\ZP,\ell}(\phi,\theta;h,a)$ that targets \EZP condition. The equality holds in deterministic environments.]
\label{prop:l2}
\end{proposition}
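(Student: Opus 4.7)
The plan is to unpack the definition of the ideal objective $\mathcal L_{\ZP,\ell}$ for the specific pseudo-distance $\mathbb D = \mathbb D_\ell$ that ``targets \EZP,'' namely $\mathbb D_\ell(P\,\|\,Q) \defeq \|\mathbb E_{z'\sim P}[z'] - \mathbb E_{z'\sim Q}[z']\|_2^2$, and then to recognize $J_\ell - \mathcal L_{\ZP,\ell}$ as the (nonnegative) variance term in a bias-variance decomposition. With this identification the upper-bound claim and the equality-in-deterministic-environments claim both fall out.

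First, I would write out $\mathcal L_{\ZP,\ell}$ under the assumption that $f_\phi$ and $g_\theta$ are deterministic (as in \autoref{eq:l2}). Since the encoder is a Dirac at $f_\phi(h)$ and the latent model is a Dirac at $g_\theta(f_\phi(h),a)$, we have $\mathbb E_{z'\sim\mathbb P_\theta(\cdot\mid z,a)}[z'] = g_\theta(f_\phi(h),a)$ and $\mathbb E_{z'\sim\mathbb P_\phi(\cdot\mid h,a)}[z'] = \mathbb E_{o'\sim P(\cdot\mid h,a)}[f_{\widetilde\phi}(h')]$, so
\begin{equation*}
\mathcal L_{\ZP,\ell}(\phi,\theta;h,a) \;=\; \bigl\|\,g_\theta(f_\phi(h),a)\;-\;\mathbb E_{o'\sim P(\cdot\mid h,a)}[f_{\widetilde\phi}(h')]\,\bigr\|_2^2.
\end{equation*}

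Next I would invoke the elementary identity $\mathbb E[\|x - Y\|_2^2] = \|x - \mathbb E[Y]\|_2^2 + \mathbb E[\|Y - \mathbb E[Y]\|_2^2]$ for any constant vector $x$ and random vector $Y$. Setting $x = g_\theta(f_\phi(h),a)$ and $Y = f_{\widetilde\phi}(h')$ with $o'\sim P(\cdot\mid h,a)$ yields
\begin{equation*}
J_\ell(\phi,\theta,\widetilde\phi;h,a) \;=\; \mathcal L_{\ZP,\ell}(\phi,\theta;h,a) \;+\; \mathbb E_{o'\sim P(\cdot\mid h,a)}\!\bigl[\bigl\|f_{\widetilde\phi}(h')-\mathbb E_{o'}[f_{\widetilde\phi}(h')]\bigr\|_2^2\bigr].
\end{equation*}
The variance term is nonnegative, giving the upper bound, and it vanishes exactly when $f_{\widetilde\phi}(h')$ is a.s.\ constant in $o'$; in particular if $P(\cdot\mid h,a)$ is a Dirac (deterministic environment) the variance is zero, yielding equality.

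The proof is essentially a one-line bias-variance calculation once the correct $\mathbb D_\ell$ is identified, so I do not anticipate a genuine obstacle. The only subtle point I would be careful to spell out is the interpretation of ``targets \EZP'': $\mathbb D_\ell$ is a pseudo-distance that compares only the first moments of the two distributions, so minimizing $\mathcal L_{\ZP,\ell}$ enforces $\mathbb E[z'\mid\phi(h),a] = \mathbb E[z'\mid h,a]$, i.e.\ exactly the \EZP condition defined in \autoref{eq:EZP}, rather than the full \ZP condition. I would also briefly note that the stop-gradient variant of $\widetilde\phi$ does not affect the decomposition, since the identity is purely probabilistic and $\widetilde\phi$ is treated as a fixed function inside the expectation.
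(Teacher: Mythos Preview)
Your proposal is correct and follows essentially the same route as the paper: both identify $\mathcal L_{\ZP,\ell}(\phi,\theta;h,a)=\|g_\theta(f_\phi(h),a)-\E{o'}{f_\phi(h')}\|_2^2$ and then show $J_\ell-\mathcal L_{\ZP,\ell}=\E{o'}{\|f_\phi(h')-\E{o'}{f_\phi(h')}\|_2^2}\ge 0$. The only cosmetic difference is that you invoke the bias--variance identity directly, whereas the paper expands the squares and cancels cross terms by hand; your added remarks on the \EZP interpretation and the irrelevance of stop-gradients are also in line with the paper.
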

\begin{proposition}[The practical $\mathtt f$-divergence objective~\autoref{eq:kl} is an \textbf{upper bound} of the ideal objective~\autoref{eq:zp_loss} $\mathcal L_{\ZP,D_\mathtt f}(\phi,\theta;h,a)$ that targets \ZP condition. The equality holds in deterministic environments.]
\label{prop:fdiv}
\end{proposition}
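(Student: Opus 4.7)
The plan is to derive this upper bound as a direct consequence of Jensen's inequality, exploiting the fact that an $\mathtt{f}$-divergence $D_\mathtt{f}(P\mid\mid Q)$ is jointly convex in its arguments and, in particular, convex in its first slot for any fixed second slot. This is the $\mathtt{f}$-divergence analogue of \autoref{prop:l2}, in which convexity of $\|\cdot\|_2^2$ played the same role. Throughout I would set the target $\widetilde{\phi}=\phi$ so that both losses are functions of $(\phi,\theta)$, and read $\mathbb D=D_\mathtt{f}$ in \autoref{eq:zp_loss} with the same argument order as in \autoref{eq:kl}; this convention is immaterial because the class of $f$-divergences is closed under swapping arguments via the conjugate $f^*(t)=tf(1/t)$.

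The concrete steps are as follows. First, isolate the outer expectation over $o'\sim P(\cdot\mid h,a)$ in \autoref{eq:kl} and observe that, conditional on $(h,a,z)$, the second argument $\mathbb P_\theta(z'\mid z,a)$ does not depend on $o'$, whereas the first argument $\mathbb P_\phi(z'\mid h')$ is the only term that does. Applying Jensen's inequality to the convex map $P\mapsto D_\mathtt{f}(P\mid\mid \mathbb P_\theta(\cdot\mid z,a))$ then yields
\begin{align*}
\E{o'\sim P(\mid h,a)}{D_\mathtt{f}(\mathbb P_\phi(z'\mid h')\mid\mid \mathbb P_\theta(z'\mid z,a))} \;\ge\; D_\mathtt{f}\bigl(\E{o'}{\mathbb P_\phi(z'\mid h')}\mid\mid \mathbb P_\theta(z'\mid z,a)\bigr).
\end{align*}
Second, invoke the marginalization identity $\E{o'\sim P(\mid h,a)}{\mathbb P_\phi(z'\mid h')}=\mathbb P_\phi(z'\mid h,a)$ stated immediately after \autoref{eq:zp_loss}, so that the right-hand side collapses to $D_\mathtt{f}(\mathbb P_\phi(z'\mid h,a)\mid\mid \mathbb P_\theta(z'\mid z,a))$. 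Averaging over $z\sim\mathbb P_\phi(\cdot\mid h)$ then recovers exactly $\mathcal L_{\ZP,D_\mathtt{f}}(\phi,\theta;h,a)$, completing the upper bound.

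For the equality claim, in a deterministic environment the kernel $P(\cdot\mid h,a)$ is a Dirac mass, so the inner expectation over $o'$ degenerates and the Jensen step is tight. I anticipate essentially no technical obstacle beyond a bookkeeping check that joint convexity applies to the standard $\mathtt{f}$-divergences used in practice (forward and reverse KL, Jensen--Shannon, $\chi^2$, total variation). The only mildly subtle point is the direction convention on the divergence noted above, which is resolved by the conjugate identity and leaves the inequality itself unaffected; a secondary matter, not strictly required for the proposition, is whether the bound remains meaningful when $\widetilde{\phi}\neq\phi$ (e.g., a stop-gradient target), which I would relegate to a brief remark rather than bake into the statement.
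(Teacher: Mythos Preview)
Your proposal is correct and follows essentially the same route as the paper: both apply Jensen's inequality to push the expectation over $o'$ inside, the paper at the level of the scalar convex generator $f$ pointwise in $z'$, you at the level of the convex functional $P\mapsto D_\mathtt{f}(P\mid\mid Q)$. Your explicit handling of the argument-order convention via the conjugate $f^*(t)=tf(1/t)$ is a detail the paper leaves implicit.
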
\vspace{-0.5em}
These propositions show that environment stochasticity ($P(o'\mid h,a)$) affects both the practical $\ell_2$ and $\mathtt f$-divergence objectives. While unbiased in \textit{deterministic} tasks to learn the \ZP condition\footnote{The \EZP condition is equivalent to the \ZP condition in deterministic tasks.}, they are problematic in \textit{stochastic} tasks (\eg with data augmentation or noisy distractors) due to double sampling issue~\citep{baird1995residual}, as the ideal objective~\autoref{eq:zp_loss} cannot be used (see \autoref{app:optim} for discussion). 

\subsection{Why Do Stop-Gradients Work for \ZP Optimization?}
\label{sec:optim}

In this subsection, we further discuss optimizing the practical \ZP objective~\autoref{eq:l2}. Specifically, we aim to justify that stop-gradient (detached or EMA) \ZP targets, widely used in practice~\citep{schwarzer2020data,zhang2020learning,ghugare2022simplifying}, play an important role in optimization. 
We find that they may lead to \EZP condition in stochastic environments (\autoref{prop:stationary_point}) and can avoid representational collapse under some linear assumptions (\autoref{thm:collapse}). Meanwhile, online \ZP targets lack these properties.

Before introducing the results, we want to clarify the discrepancy between learning \ZP and the well-known TD learning~\citep{sutton1988learning}. 
At first glance, \autoref{eq:l2} (stop-gradient version) is reminiscent of mean-squared TD error, both having the bootstrapping structures. However, \autoref{eq:l2} has an extra challenge due to the missing reward, leading to a trivial solution of constant representation, known as complete representational collapse~\citep{jing2021understanding}.
Moreover, \ZP requires distribution matching, while the Bellman equations that TD learning aims to optimize only require matching expectations. 

\begin{proposition}[The $\ell_2$ objective \autoref{eq:l2} with \textit{stop gradients} ($J_\ell(\phi,\theta,\overline \phi; h,a)$) ensures stationary points that satisfy \EZP, but the $\ell_2$ objective with \textit{online} targets lacks this guarantee.]
\label{prop:stationary_point}
\end{proposition}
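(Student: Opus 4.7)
My plan is to compute the first-order optimality conditions of $J_\ell$ under each target choice and see which ones actually enforce \EZP. For the stop-gradient version $J_\ell(\phi,\theta,\overline{\phi};h,a)$, the target $\overline{\phi}$ is treated as a constant during differentiation, so the $\theta$-gradient is
\[
\nabla_\theta J_\ell \;=\; \mathbb{E}_{o'\sim P(\cdot\mid h,a)}\!\bigl[\,2\bigl(g_\theta(f_\phi(h),a) - f_{\overline{\phi}}(h')\bigr)\cdot \nabla_\theta g_\theta(f_\phi(h),a)\,\bigr].
\]
Assuming $g_\theta$ is parameterized richly enough that setting $\nabla_\theta J_\ell = 0$ for all $(h,a)$ forces the residual in the bracket to vanish in conditional expectation, I conclude $g_\theta(f_\phi(h),a) = \mathbb{E}_{o'}[f_{\overline{\phi}}(h')]$. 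At any fixed point of the stop-gradient update one has $\overline{\phi}=\phi$ (immediate for detached copies once $\phi$ stops moving, and likewise for the EMA fixed point), giving $g_\theta(f_\phi(h),a) = \mathbb{E}_{o'}[f_\phi(h')]$. Since the left-hand side depends on $h$ only through $f_\phi(h)$, so does the right-hand side, which is exactly \EZP. As a sanity check, the $\phi$-gradient in this case is proportional to the same residual contracted against $o'$-independent factors $\nabla_\phi g_\theta \cdot \nabla_\phi f_\phi(h)$, which may be pulled outside the expectation to show that it vanishes automatically.

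For the online-target version $J_\ell(\phi,\theta,\phi;h,a)$, the $\theta$-gradient is unchanged, but the $\phi$-gradient picks up an additional term from differentiating $f_\phi(h')$ in the target:
\[
\nabla_\phi J_\ell^{\mathrm{online}} \;=\; \nabla_\phi J_\ell^{\mathrm{SG}}\big|_{\overline{\phi}=\phi} \;-\; \mathbb{E}_{o'}\!\bigl[\,2\bigl(g_\theta(f_\phi(h),a) - f_\phi(h')\bigr)\cdot \nabla_\phi f_\phi(h')\,\bigr].
\]
Joint stationarity is therefore a strictly weaker condition on $(\phi,\theta)$: the extra target-gradient term can cancel against the prediction-gradient term even when the residual $g_\theta(f_\phi(h),a) - \mathbb{E}_{o'}[f_\phi(h')]$ is nonzero. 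To deliver the ``lacks this guarantee'' clause, I would exhibit an explicit counterexample in a small stochastic MDP: pick two histories $h_1,h_2$ with distinct next-observation distributions, and find a $(\phi,\theta)$ pair where the two $\phi$-gradient terms cancel while \EZP is violated. The trivial constant-encoder collapse ($\phi \equiv c$) is one such point, and a nontrivial witness can be engineered by solving the two first-order equations simultaneously in a low-dimensional parameterization.

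The main obstacle I expect is making the online counterexample airtight while keeping it small: one must check that the stationary point is a genuine joint critical point (not an artifact of an overly restrictive parameterization), and ideally produce a nondegenerate instance so that the failure of \EZP cannot be dismissed as collapse alone. A secondary technical point is stating the richness assumption on $g_\theta$ carefully, so that ``set the bracket to zero pointwise'' in the stop-gradient analysis is rigorous rather than only ``orthogonal to $\nabla_\theta g_\theta$''; I would state this as an explicit universal-approximation hypothesis on the latent transition model.
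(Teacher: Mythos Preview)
Your reading of the proposition reverses the implication the paper actually establishes. The paper shows that any $(\phi,\theta)$ satisfying \EZP is a stationary point of the stop-gradient objective: with $\overline\phi=\phi$ (which holds at any fixed point of the detached/EMA update), the gradient $\nabla_{\phi,\theta}\mathbb{E}_{h,a}[J_\ell]$ is the \EZP residual $g_\theta(f_\phi(h),a)-\mathbb{E}_{o'}[f_\phi(h')]$ contracted against $\nabla_{\phi,\theta}g_\theta$, and \EZP sets that residual to zero. For the online objective the paper evaluates the extra term you correctly identified \emph{at} an \EZP point: it becomes $\mathbb{E}_{h,a,o'}\big[(\mathbb{E}_{o'}[f_\phi(h')]-f_\phi(h'))^\top\nabla_\phi f_\phi(h')\big]$, which is generically nonzero in stochastic environments since $f_\phi(h')$ fluctuates around its mean and is correlated with its own $\phi$-gradient. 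Hence \EZP solutions are \emph{not} stationary under online targets, and gradient descent would drift away from them. That is the content of ``lacks this guarantee.''

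Your argument instead tries to prove stationary $\Rightarrow$ \EZP for stop-gradients and to exhibit non-\EZP stationary points for online targets. Two problems with that attempt. First, stationarity means $\nabla_{\phi,\theta}\mathbb{E}_{h,a}[J_\ell]=0$, not $\nabla_\theta J_\ell(\cdot;h,a)=0$ for every $(h,a)$; even with a universal $g_\theta$, the $\theta$-optimality condition only pins $g_\theta(z,a)$ to the \emph{average} of $\mathbb{E}_{o'}[f_\phi(h')]$ over histories with $f_\phi(h)=z$, which does not force those expectations to coincide across such histories and hence does not yield \EZP. Second, your proposed online counterexample is not one: the constant encoder $\phi\equiv c$ trivially satisfies \EZP (both conditional expectations equal $c$), so it cannot witness a stationary point that violates \EZP.
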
\vspace{-0.5em}
\autoref{prop:stationary_point} suggests the adoption of stop-gradient targets in \autoref{eq:l2} to preserve the stationary points of \EZP in both deterministic and stochastic tasks.

\begin{theorem}[\textbf{Stop-gradient provably avoids representational collapse in linear models}] 
\label{thm:collapse}
Assume a linear encoder  
$f_\phi(h) \defeq  \phi^\top h_{-k:} \in \R^d$ with parameters $\phi \in \R^{k(|\mathcal O|+|\mathcal A|)\times d}$, which always operates on $h_{-k:}$, a recent-$k$ truncation of history $h$. Assume a linear deterministic latent transition $g_\theta(z,a) \defeq \theta_z^\top z + \theta_a^\top a \in \R^d
$ with parameters $\theta_z \in \R^{d\times d}$ and $\theta_a \in \R^{|\mathcal A|\times d}$. 
If we train $\phi,\theta$ using the stop-gradient $\ell_2$ objective $\E{h,a}{J_\ell(\phi,\theta,\overline \phi;h,a)}$ without RL loss, and $\theta$ relies on $\phi$ by reaching the stationary point with $\nabla_\theta \E{h,a}{J_\ell(\phi,\theta,\overline \phi;h,a)} = 0$, then the matrix multiplication $\phi^\top\phi$ will retain its initial value over continuous-time training dynamics.
\end{theorem}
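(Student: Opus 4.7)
The plan is to reduce everything to matrix calculus in the linear setting and then exploit the stationarity of $\theta$ to cancel the only term that could grow $\phi^\top\phi$. First I would write out the objective explicitly. Let $x \defeq h_{-k:}$ and $x' \defeq h'_{-k:}$, and define the prediction residual
\[
e \defeq \theta_z^\top \phi^\top x + \theta_a^\top a - \overline{\phi}^\top x' \in \R^d .
\]
Then $\E{h,a}{J_\ell(\phi,\theta,\overline\phi;h,a)} = \E{h,a,o'}{e^\top e}$, with $\overline\phi$ treated as a detached constant by the stop-gradient.

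Next I would compute the two gradients that matter. A direct entrywise differentiation gives
\[
\nabla_\phi \E{}{e^\top e} = 2\,\E{}{x e^\top \theta_z^\top},
\qquad
\nabla_{\theta_z} \E{}{e^\top e} = 2\,\E{}{(\phi^\top x)\, e^\top} = 2\phi^\top\, \E{}{x e^\top}.
\]
The stop-gradient is essential here: without it, $\nabla_\phi$ would also collect a $-2\,\E{}{x'(\text{something})^\top}$ term from differentiating $\overline\phi^\top x'$, which would spoil the argument below. By hypothesis, $\theta$ sits at a stationary point of the loss, so $\nabla_{\theta_z} \E{}{e^\top e} = 0$, i.e.\ $\phi^\top \E{}{x e^\top} = 0$ as a $d\times d$ matrix.

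Now I would write down the gradient flow $\dot\phi = -\nabla_\phi \E{}{e^\top e} = -2\,\E{}{x e^\top \theta_z^\top}$ and compute
\[
\phi^\top \dot\phi = -2\,\phi^\top \E{}{x e^\top}\, \theta_z^\top = 0,
\]
where I have used the stationarity identity to kill the middle factor. Since $\phi^\top \dot\phi = 0$, we also have $\dot\phi^\top \phi = (\phi^\top \dot\phi)^\top = 0$, and therefore
\[
\frac{d}{dt}\bigl(\phi^\top \phi\bigr) \;=\; \dot\phi^\top \phi + \phi^\top \dot\phi \;=\; 0,
\]
which shows $\phi^\top\phi$ is conserved along the continuous-time dynamics and thus equal to its initial value, ruling out complete collapse ($\phi^\top\phi\to 0$).

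The only real obstacle I anticipate is bookkeeping: getting the matrix shapes, the transposes, and the direction of the chain rule right so that the stationarity condition slots in exactly as $\phi^\top \E{}{xe^\top}$ inside $\phi^\top\dot\phi$. It is also worth noting explicitly in the write-up (i) that without the stop-gradient the $\nabla_\phi$ expression picks up an extra $\overline\phi$-derived term that is \emph{not} annihilated by the $\theta$-stationarity and (ii) that only the stationarity of $\theta_z$ (not of $\theta_a$) is needed, because $\theta_a$ does not appear as a left multiplier of $\phi^\top x$ in $e$. These two remarks clarify why the conclusion is specifically about stop-gradient targets and is consistent with \autoref{prop:stationary_point}.
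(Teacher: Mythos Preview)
Your proof is correct and follows essentially the same route as the paper's own argument: compute $\nabla_\phi$ and $\nabla_{\theta_z}$ of the stop-gradient $\ell_2$ loss in the linear setup, observe that $\phi^\top\dot\phi$ factors as $-\nabla_{\theta_z}\E{}{J_\ell}\,\theta_z^\top$ (up to a constant), invoke the $\theta_z$-stationarity assumption to kill it, and conclude $\tfrac{d}{dt}(\phi^\top\phi)=0$. Your added remarks on why the stop-gradient is essential and why only $\theta_z$-stationarity (not $\theta_a$) is needed are nice touches that go slightly beyond the paper's write-up.
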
\vspace{-0.5em}

\begin{wrapfigure}{r}{0.5\linewidth}
    \centering
    \vspace{-1em}
    \includegraphics[width=0.49\linewidth]{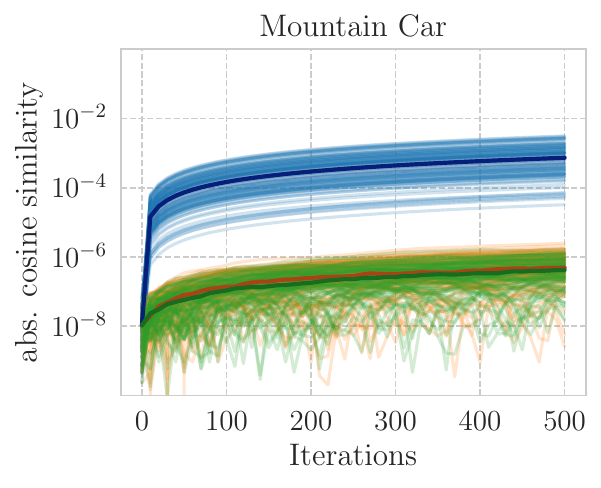}
    \includegraphics[width=0.49\linewidth]{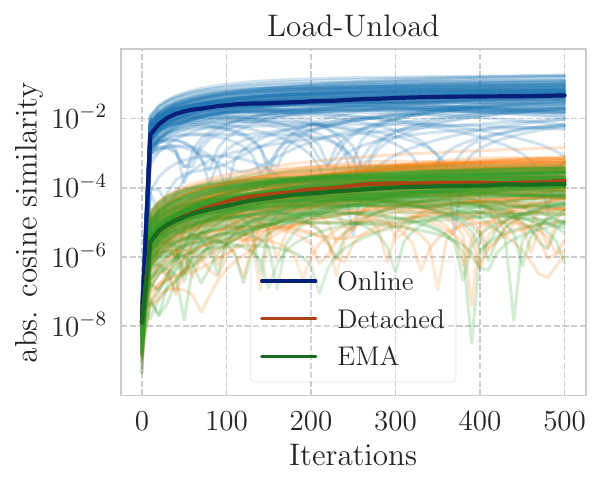}
    \vspace{-2em}
    \caption{\footnotesize The absolute normalized inner product of the two column vectors in the learned encoder when using online, detached, or EMA \ZP target in an MDP \emph{(left)}  and a POMDP \emph{(right)}. We plot the results for 100 different seeds, which controls the rollouts used to sample transition and the initialization of the representation. The bold lines represent the median of the seeds.}
    \label{fig:innerproduct}
    \vspace{-2em}
\end{wrapfigure}
\autoref{thm:collapse} extends the results of~\citep[Theorem~1]{tang2022understanding} to action-dependent latent transition, POMDP, and EMA settings. This theorem also implies that $\phi$ will keep full-rank during training if the initialized $\phi$ is full-rank\footnote{This is due to the fact that $\rank(A^\top A) = \rank(A)$ for any real-valued matrix $A$.}.\looseness=-1

Similar to \citet{tang2022understanding}, we illustrate our theoretical contribution by examining the behavior of the learned encoder over time when starting from a random orthogonal initialization. We extend these results by considering both the MDP and the POMDP setting and consider two classical domains, mountain car~\citep{moore1990efficient} (MDP) and load-unload~\citep{meuleau2013solving} (POMDP), where we fit an encoder $\phi$ with a latent state dimension of $2$. 
\autoref{fig:innerproduct} shows the orthogonality-preserving effect of the stop-gradient by comparing the cosine similarity between columns of the learned $\phi$. As expected by \autoref{thm:collapse}, we see this similarity stay several orders of magnitude smaller when using stop-gradient (detached or EMA) compared to the online case.
Note that although our theory discusses the continuous-time dynamics, we can approximate them with gradient steps with a small learning rate, as was done for these results.

\subsection{A Minimalist RL Algorithm for Learning Self-Predictive Representations}
\label{sec:algo}

\begin{figure}[t]
\vspace{-4.5em}
\begin{algorithm}[H]
    \caption{\textbf{Minimalist $\phi_L$}: learning self-predictive representations in RL}
    \label{pseudo-code}
    \begin{algorithmic}[1]
    \Require Encoder $f_\phi: \mathcal H_t \to \mathcal Z$, Actor $\pi_{\nu}: \mathcal Z \to \mathcal A$, Critic $Q_{\omega}: \mathcal Z \times \mathcal A \to \R$, 
    Latent Transition Model $g_\theta: \mathcal Z \times \mathcal A \to \mathcal Z$. Learning Rate $\alpha > 0$ and Loss Coefficient $\lambda > 0$. 
    \Procedure{Update}{$h,a,o',r$} 
    \State Compute any model-free RL loss $\mathcal L_{\text{RL}}$ (based on DDPG~\citep{lillicrap2015continuous} here) let $Q^{\text{tar}}(h',r) \defeq r + \gamma Q_{\overline{\omega}}(f_{\overline\phi}(h'),\pi_{\overline{\nu}} (f_{\overline\phi}(h')) )$, 
    \vspace{-0.5em}
    \begin{align}
    \label{eq:RL_loss}
    \mathcal L_{\text{RL}}(\phi, \omega, \nu; h',r) = (Q_{\omega}(f_{\phi}(h),a) - Q^{\text{tar}}(h',r))^2 -Q_{\overline{\omega}}(f_{\overline\phi}(h),\pi_{\nu} (f_{\phi}(h))).
    \end{align}
    \vspace{-1em}
    \State Compute the auxiliary \ZP loss 
    $
    \mathcal L_{\text{aux}}(\phi,\theta;h') = \|g_\theta(f_\phi(h),a)-f_{\overline{\phi}} (h')\|_2^2
    $.
    \State Optimize all parameters using the sum of losses:
    \vspace{-0.5em}
    \begin{align}
    \label{eq:update}
    [\phi, \theta, \nu, \omega] \gets [\phi, \theta, \nu, \omega] - \alpha \nabla (\mathcal L_{\text{RL}}(\phi, \omega, \nu; h',r)  + \lambda  \mathcal L_{\text{aux}}(\phi,\theta;h')).
    \end{align}
    \vspace{-1.5em}
    \EndProcedure
    \end{algorithmic}
\end{algorithm}
\vspace{-3em}
\end{figure}

Our theory leads to a straightforward RL algorithm that can target $\phi_L$. Essentially, it integrates a single auxiliary task into any model-free RL algorithm (\eg, DDPG~\citep{lillicrap2015continuous} and R2D2~\citep{kapturowski2018recurrent}), as indicated by \autoref{thm:EG_ZP_implies_Er}. 
Algo.~\ref{pseudo-code} provides the pseudocode for the update rule of all parameters in our algorithm given a tuple of transition data, with PyTorch code included in Appendix. 
The $\ell_2$ \ZP loss can be replaced with KL objective with probabilistic encoder and latent model, especially in stochastic environments, as suggested by \autoref{prop:fdiv}.  
The $f_{\overline \phi}(h')$ in \ZP loss stops the gradient from the encoder, following \autoref{sec:optim}.
The actor loss $-Q_{\overline{\omega}}(f_{\overline\phi}(h),\pi_{\nu} (f_{\phi}(h))$ freezes the parameters of the critic and encoder, suggested by recent work on memory-based RL~\citep{ni2023transformers}.\looseness=-1 

Our algorithm greatly simplifies prior self-predictive methods and enables a fair comparison spanning from model-free to observation-predictive representation learning. 
It is characterized as \textbf{minimalist} by removing reward learning, planning, multi-step predictions, projections, and metric learning. It is also \textbf{novel} by being the first to learn self-predictive representations end-to-end in POMDP literature. 

Our algorithm also bridges model-free and observation-predictive representation learning. 
We derive learning $\phi_{Q^*}$ by setting the coefficient $\lambda$ to $0$, and learning $\phi_O$ by replacing \ZP loss with \OP loss like OFENet~\citep{ota2020can}. 
As a by-product, by comparing $\{\phi_{Q^*}, \phi_L, \phi_O\}$ derived from our algorithm, we can \textbf{disentangle} representation learning from policy optimization because all representations are learned by the same RL algorithm. This is rarely seen in prior works, as learning $\phi_O$ or $\phi_L$ typically involves planning, while model-free RL does not.
Such disentanglement allows us to examine the sample efficiency benefits derived purely from representation learning. \looseness=-1

\section{Experiments}
\label{sec:experiments}

We conduct experiments to compare RL agents learning the three representations $\{\phi_{Q^*}, \phi_L, \phi_O\}$, respectively. 
To decouple representation learning from policy optimization, we follow our minimalist algorithm (Algo.~\ref{pseudo-code}) to learn $\phi_L$, and instantiate $\phi_{Q^*}$ and $\phi_O$ by setting $\lambda = 0$ and replacing \ZP loss with \OP loss, as we discuss in \autoref{sec:algo}.
We evaluate the algorithms in standard MDPs, distracting MDPs\footnote{Distracting MDPs refers to MDPs with distracting observations irrelevant to optimal control in this work.}, and sparse-reward POMDPs. 
The experimental details are shown in \autoref{sec:experiment_details}. 
Through the subsequent experiments, we aim to validate five hypotheses based on our theoretical insights in \autoref{sec:representation}~and~\autoref{sec:learning}, with their motivation shown in \autoref{sec:motivation}.\looseness=-1

\vspace{-0.5em}
\begin{itemize}[leftmargin=*,itemsep=0pt, topsep=0pt]
    \item \textbf{Sample efficiency hypothesis}: do the extra \OP and \ZP signals help $\phi_O$ and $\phi_L$ have better sample-efficiency than $\phi_{Q^*}$ in standard MDPs (\autoref{sec:standard_mdps}) and \textit{especially} in sparse-reward tasks (\autoref{sec:pomdps})?\looseness=-1 
    \item \textbf{Distraction hypothesis} (\autoref{sec:distracted_mdps}): since learning $\phi_O$ may struggle with predicting distracting observations, is it less sample-efficient than learning $\phi_L$?
    \item \textbf{End-to-end hypothesis} (\autoref{sec:pomdps}): as predicted by \autoref{thm:EG_ZP_implies_Er}, is training an encoder end-to-end with an auxiliary task of \ZP (\OP) comparable to the phased training with \RP and \ZP (\OP)? 
    \item \textbf{\ZP objective hypothesis} (\autoref{sec:standard_mdps}, \autoref{sec:distracted_mdps}): as predicted by \autoref{prop:l2} and \autoref{prop:fdiv}, is using $\ell_2$ loss as \ZP objective similar to KL divergence loss in deterministic tasks, but not necessarily stochastic tasks?\looseness=-1
    \item \textbf{\ZP stop-gradient hypothesis} (\autoref{sec:standard_mdps}, \autoref{sec:pomdps}): as predicted by \autoref{thm:collapse}, does stop-gradient on \ZP targets mitigate representational collapse compared to online \ZP targets? 
\end{itemize}

\subsection{State Representation Learning in Standard MDPs}
\label{sec:standard_mdps}

We first evaluate our algorithm in the relatively low-dimensional MuJoCo benchmark~\citep{todorov2012mujoco}. We implement it by simplifying ALM(3)~\citep{ghugare2022simplifying}, a state-of-the-art algorithm on MuJoCo. ALM(3) aims to learn $\phi_L$ end-to-end with EMA \ZP target and reverse KL, shown by \autoref{thm:EG_ZP_implies_Er}.
ALM(3) requires a reward model in the encoder objective and optimizes the actor via SVG~\citep{heess2015learning} with planning for $3$ steps. Following our Algo.~\ref{pseudo-code}, we remove the reward model and multi-step predictions; instead, we train the encoder using our loss (\autoref{eq:l2}~or~\autoref{eq:kl}) and the actor-critic is conditioned on representations and trained using model-free  TD3~\citep{fujimoto2018addressing}, while keeping the other hyperparameters the same. These simplifications result in a $50\%$ faster speed than ALM(3). Please see \autoref{sec:mdp_details} for a detailed comparison between our algorithm and ALM(3). 
\looseness=-1

\textbf{Validation of sample efficiency and \ZP objective hypotheses.}
\autoref{fig:mujoco} shows that our minimalist $\phi_L$ using EMA \ZP targets can attain similar (in Ant) or \textit{even better} (in HalfCheetah and Walker2d) sample efficiency at 500k steps compared to ALM(3), and greatly outperforms $\phi_{Q^*}$, across the entire benchmark except for Humanoid task. 
This suggests that the primary advantage ALM(3) brings to model-free RL in these MuJoCo tasks, lies in state representation rather than policy optimization. This benefit could be further enhanced by streamlining the algorithmic design. In Humanoid task, ALM(3)'s superior performance is likely due to SVG policy optimization's use of first-order gradient information from latent dynamics, which is particularly beneficial in high-dimensional tasks like this.
In line with \autoref{prop:l2} and \autoref{prop:fdiv}, different \ZP objectives ($\ell_2$, FKL, RKL) perform similarly on these tasks, which are nearly deterministic. Thus, these results support our sample efficiency and \ZP objective hypotheses.\looseness=-1

\begin{figure}[t]
    \vspace{-3.5em}
    \centering
    \includegraphics[width=0.24\linewidth]{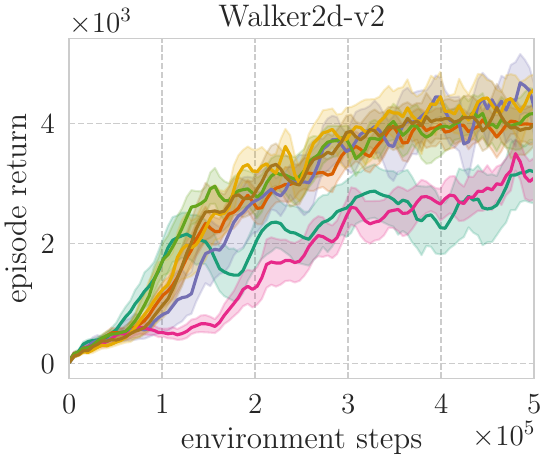} %
    \includegraphics[width=0.25\linewidth]{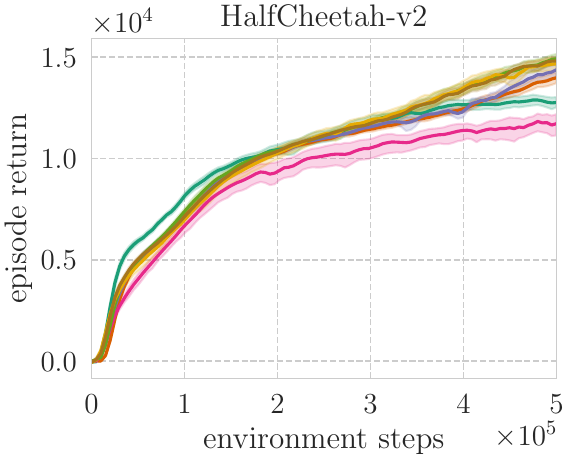} %
    \includegraphics[width=0.24\linewidth]{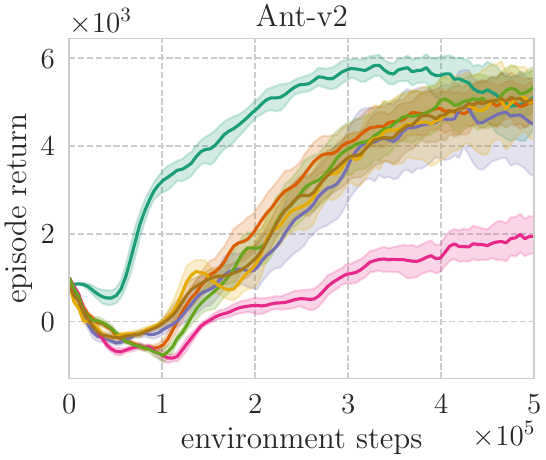} %
    \includegraphics[width=0.24\linewidth]{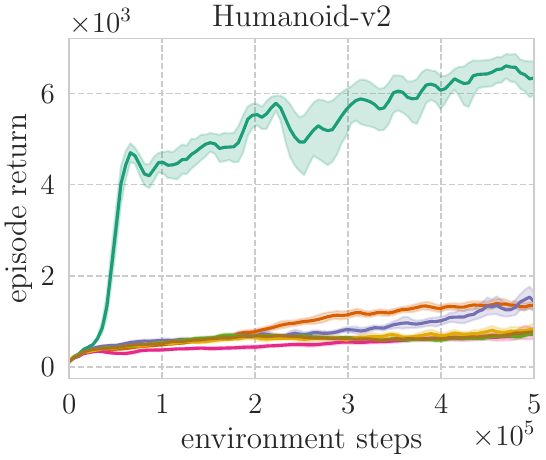} %
    \includegraphics[width=\linewidth]{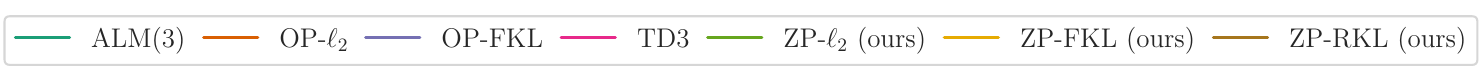} 
    \vspace{-2em}
    \caption{\footnotesize \textbf{Decoupling representation learning from policy optimization using our algorithm based on ALM(3)~\citep{ghugare2022simplifying}.} Comparison between $\phi_{Q^*}$ (TD3), $\phi_L$ (our algorithm (\ZP-$\ell_2$, \ZP-FKL, \ZP-RKL) and ALM(3)), $\phi_O$ (\OP-$\ell_2$, \OP-FKL), in the standard MuJoCo benchmark for 500k steps, averaged over 12 seeds. The observation dimension increases from left figure to right figure ($17,17,111,376$).}
    \label{fig:mujoco}
\end{figure}

\textbf{Validation of \ZP stop-gradient hypothesis.}
We observe a significant performance degradation when switching from the stop-gradient \ZP targets to \textit{online} ones in all MuJoCo tasks for all \ZP objectives ($\ell_2$, FKL, RKL). \autoref{fig:mujoco_optim} (top) shows the results for the $\ell_2$ objective \autoref{eq:l2}. 
To estimate the rank of the associated linearized operator for the MLP encoder, we compute the matrix rank of latent states given a batch of inputs\footnote{The rank of an $m\times n$ matrix $A$ is the dimension of the image of mapping $f:\R^n \to \R^m$, where $f(x) = Ax$.}, where the batch size is $512$ and the latent state dimension is $50$. The MLP encoders are orthogonally initialized~\citep{saxe2013exact} with a full rank of $50$. 
\autoref{fig:mujoco_optim} (bottom) shows that the estimated rank of $\ell_2$ objective with online targets collapses from full rank to low rank, a phenomenon known as \textit{dimensional collapse} in self-supervised learning~\citep{jing2021understanding}. Interestingly, the high dimensionality of a task worsens the rank collapse by comparing from left to right figures. In contrast, stop-gradient targets suffer less from rank collapse, in support to our \ZP stop-gradient hypothesis. For KL objectives, switching to online targets also decreases the rank, though less severely (see \autoref{fig:additional_kl_rank}). \looseness=-1

\begin{figure}[t]
    \centering
    \includegraphics[width=0.24\linewidth]{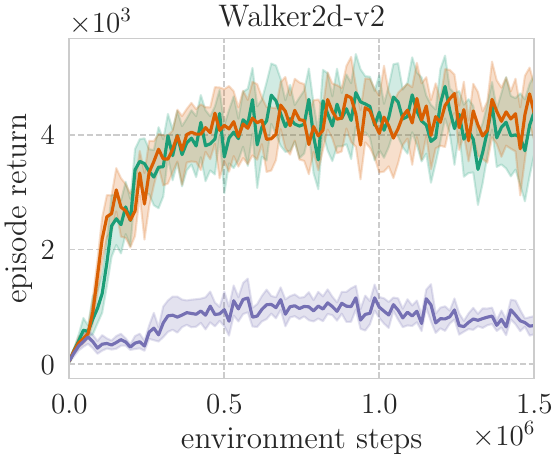}
    \includegraphics[width=0.24\linewidth]{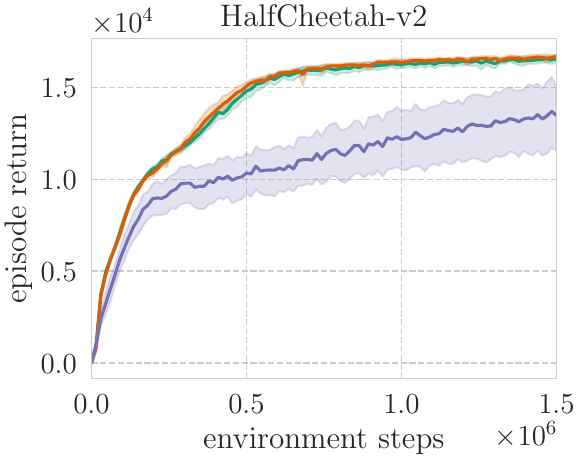}
    \includegraphics[width=0.26\linewidth]{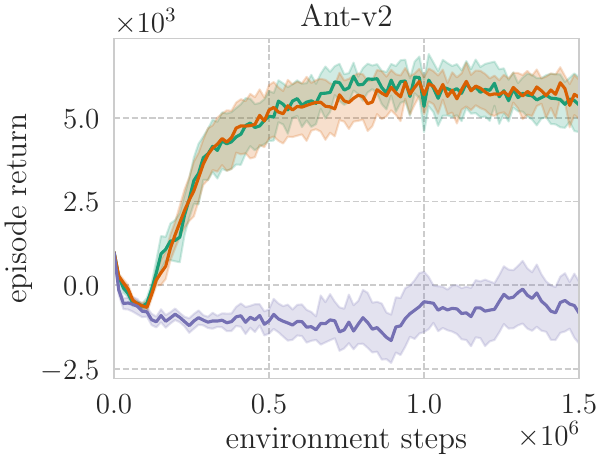}
    \includegraphics[width=0.24\linewidth]{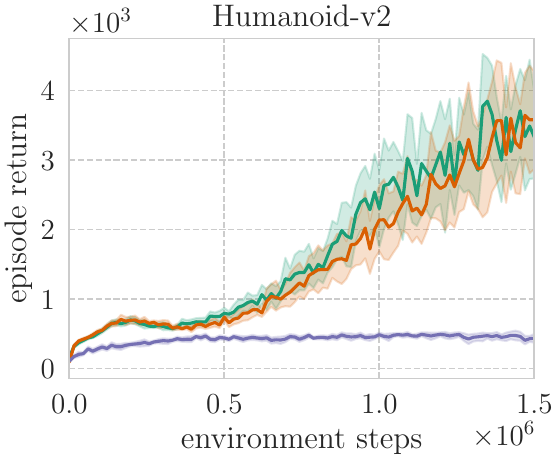}
    
\includegraphics[width=0.24\linewidth]{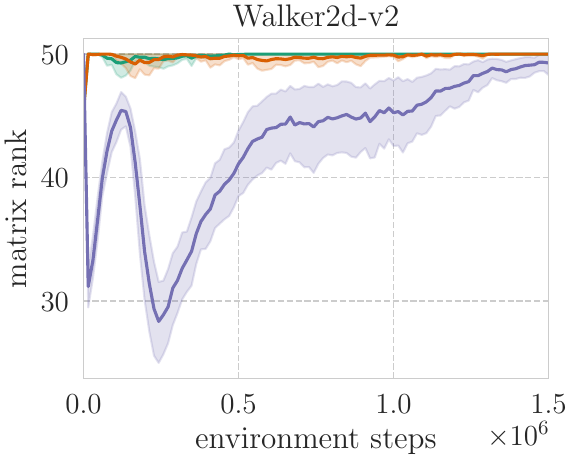}
    \includegraphics[width=0.24\linewidth]{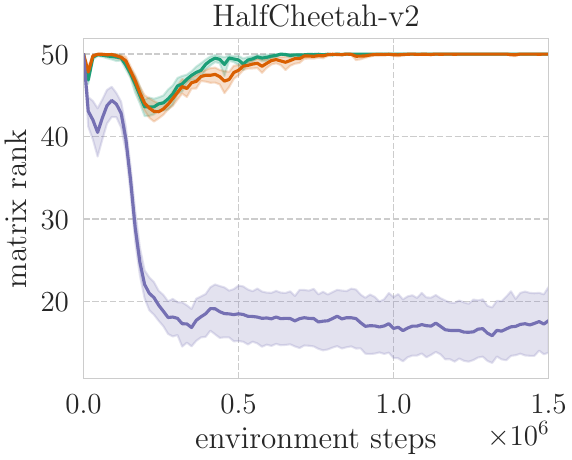}
    \includegraphics[width=0.24\linewidth]{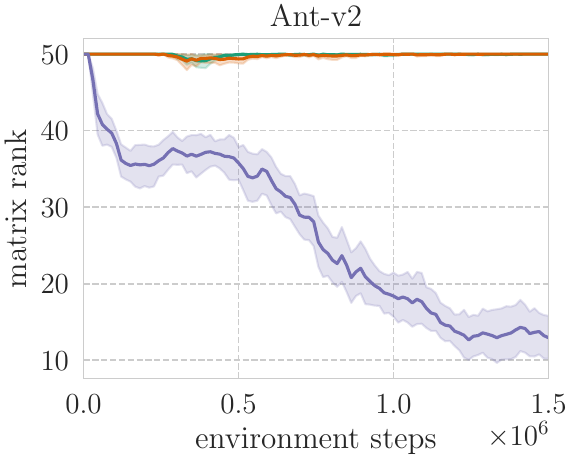}
    \includegraphics[width=0.24\linewidth]{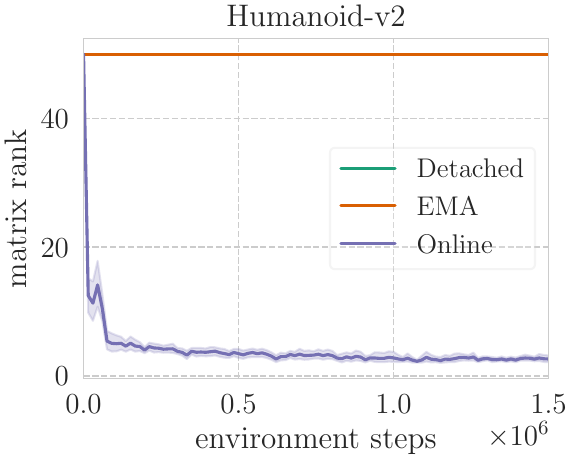}
    \vspace{-0.5em}
    \caption{\footnotesize \textbf{Representation collapse with online targets.} On four benchmark tasks, we observe that using the online \ZP target in $\ell_2$ objectives results in lower returns \emph{(top)}  and low-rank representations \emph{(bottom)}. In line with our theory, using a detached or EMA \ZP target mitigates the representational collapse and yields higher returns.
    }
    \label{fig:mujoco_optim}
    \vspace{-1.5em}
\end{figure}

\begin{figure}[t]
\vspace{-3.5em}
    \centering
    \includegraphics[width=0.25\linewidth]{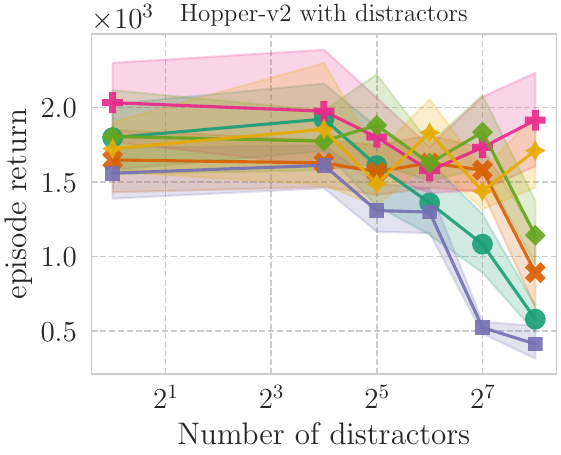}
    \includegraphics[width=0.24\linewidth]{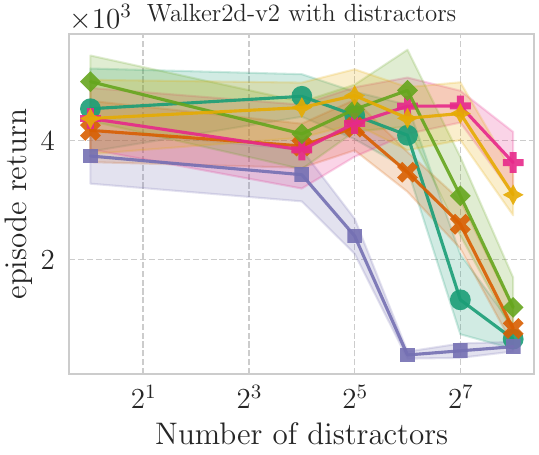}
    \includegraphics[width=0.25\linewidth]{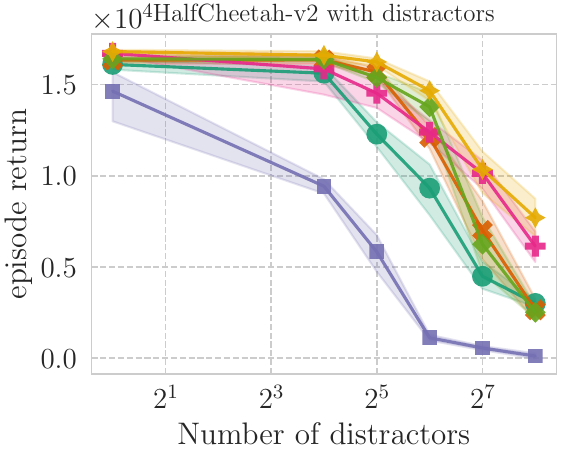}
    \includegraphics[width=0.24\linewidth]{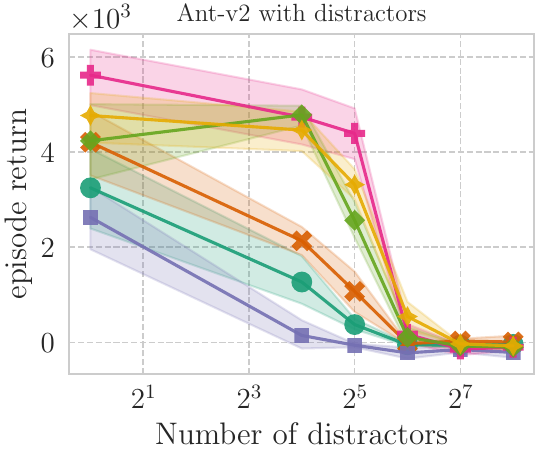}
    \includegraphics[width=\linewidth]{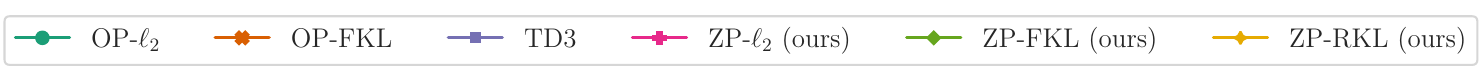}
    \vspace{-2em}
    \caption{\footnotesize \textbf{Self-predictive representations are more robust.} Comparison between $\phi_{Q^*}$ (TD3), $\phi_L$ (\ZP-$\ell_2$, \ZP-FKL, \ZP-RKL) using our algorithm, $\phi_O$ (\OP-$\ell_2$, \OP-FKL) in the \textbf{distracting} MuJoCo benchmark, varying the distractor dimension from $2^4$ to $2^8$, averaged over 12 seeds. The y-axis is final performance at 1.5M steps.\looseness=-1}
    \label{fig:mujoco_distracted}
\end{figure}

\subsection{State Representation Learning in Distracting MDPs} 
\label{sec:distracted_mdps}

We then evaluate the robustness of representation learning by augmenting states with distractor dimensions in the MuJoCo benchmark. The distractors are i.i.d. standard isotropic Gaussians, varying the number of dimensions from $2^4$ to $2^8$, following the practice in \citet{nikishin2022control}. The distracting task has the same optimal return as the original one and can be challenging to RL algorithms, even if model-based RL could perfectly model Gaussians. We use the same code in standard MDPs. \autoref{fig:mujoco_distracted} shows the final averaged returns of each algorithm (variant) in the distracting MuJoCo benchmark.\looseness=-1  

\textbf{Validation of distraction hypothesis.}
By comparing \ZP with \OP objectives, with higher-dimensional distractors, learning $\phi_O$ degrades much faster than learning $\phi_L$, verifying our distraction hypothesis. Surprisingly, model-free RL ($\phi_{Q^*}$) performs worse than $\phi_O$, as $\phi_{Q^*}$ does not need to predict the distractors. However, we still observe a severe degradation in Ant for all methods when there are $128$ distractors, which we will study in the future work.\looseness=-1

\textbf{Extending \ZP objective hypothesis to stochastic tasks.}
In stochastic tasks, \autoref{prop:l2} and \autoref{prop:fdiv} tell us about the (strict) upper bounds for learning \EZP and \ZP conditions with the practical $\ell_2$ and KL objectives, respectively. Yet, they do not indicate which objective is better for stochastic tasks based on these bounds.
In fact, we observe that both $\ell_2$ and reverse KL perform better than forward KL in the distracting tasks, possibly because the entropy term in reverse KL smooths the training objective, and $\ell_2$ objective simplifies the learning.\looseness=-1

\subsection{History Representation Learning in Sparse-Reward POMDPs}
\label{sec:pomdps}

\begin{figure}[t]
    \centering
    \includegraphics[width=0.49\linewidth]{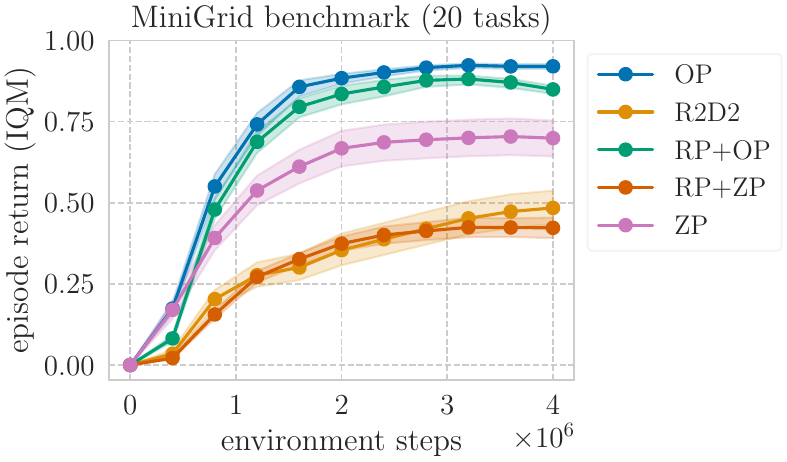}
    \includegraphics[width=0.49\linewidth]{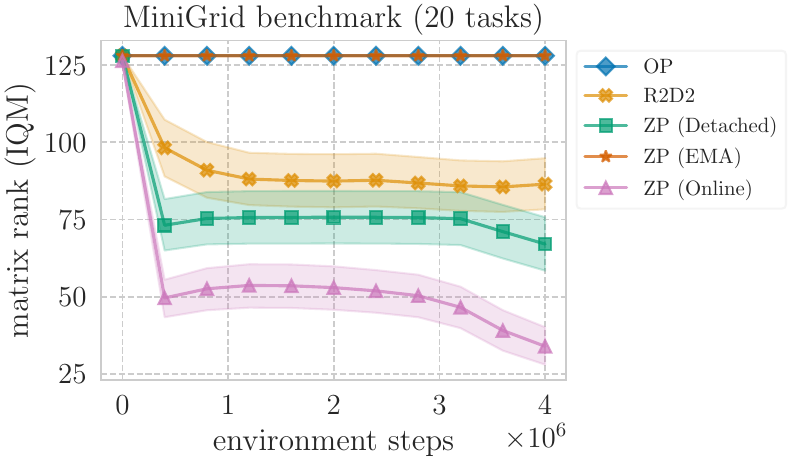}
    \vspace{-1em}
    \caption{\footnotesize\textbf{End-to-end learned self-predictive and observation-predictive representations stand out in sparse-reward tasks.} We show the interquartile mean (IQM)~\citep{agarwal2021deep} across \textbf{20 MiniGrid tasks}, computed over 9 seeds per task with 95\% stratified bootstrap confidence intervals. The individual task plots are shown in Appendix \autoref{fig:minigrid_full} and \autoref{fig:ZP_optim}. \textbf{Left}: comparison of episode returns between $\phi_{Q^*}$ (R2D2), $\phi_L$ (\ZP, \RP + \ZP), $\phi_O$ (\OP, \RP + \OP). \RP + \ZP and \RP + \OP methods are phased, while \ZP and \OP methods are end-to-end. \textbf{Right}: comparison of estimated matrix rank between \ZP targets (online, detached, EMA), R2D2, and \OP. The maximal achievable rank is $128$.  }
    \vspace{-1.5em}
    \label{fig:minigrid}
\end{figure}

Finally, we perform an extensive empirical study on 20 MiniGrid tasks~\citep{gym_minigrid}. These tasks, featuring partial observability and sparse rewards, serve as a rigorous test-bed for \textit{history} representation learning. The rewards are only non-zero upon successful task completion. Episode returns are between 0 and 1, with higher returns indicating faster completion. 
Each task has an observation space of $7\times 7 \times 3 = 147$ dimensions and a discrete action space of $7$ options.  
Our model-free baseline is R2D2~\citep{kapturowski2018recurrent}, composed of double Q-learning~\citep{van2016deep} with LSTMs~\citep{hochreiter1997long} as history encoders. 
We implement it based on recent work~\citep{seyedsalehi2023approximate}. 
R2D2 uses stored LSTM hidden states for initialization, a $50$-step burn-in period, and a $10$-step history rollout during training. This result in a high dimensionality of $(50+10)\times (147+7)=9240$ on histories. 

Based on R2D2, we implement our algorithm by adding an auxiliary task of \ZP (or \OP) under $\ell_2$ objectives, which is sufficient for solving these deterministic tasks (see \autoref{prop:l2}).
To compare the phased training~\citep{seyedsalehi2023approximate}, we further implement methods (\RP + \OP, \RP + \ZP) that also predict rewards and freeze the encoders during Q-learning. Due to the space limit, we show the aggregated plots in \autoref{fig:minigrid} and defer the individual task plots to \autoref{sec:ablation}.

\textbf{Validation of sample efficiency hypothesis.} By examining the aggregated learning curves of end-to-end methods in \autoref{fig:minigrid} left, we find that minimalist $\phi_L$ (\ZP) significantly outperforms $\phi_{Q^*}$ (R2D2) on average. It, however, fall shorts of $\phi_O$ (\OP), which aligns with our expectations given that MiniGrid tasks are deterministic with medium-dimensional clean observations. The enhanced performance of $\phi_L$ and $\phi_O$ over $\phi_{Q^*}$ provides empirical validation of our hypothesis in sparse-reward tasks.\looseness=-1 

\textbf{Validation of end-to-end hypothesis.} By comparing the aggregated learning curves between end-to-end methods and phased methods (\ie, \OP vs \RP + \OP, and \ZP vs \RP + \ZP) in \autoref{fig:minigrid} left, we observe that end-to-end training (\OP or \ZP) yields equal or superior sample efficiency relative to phased training (\RP + \OP or \RP + \ZP) when learning observation-predictive or self-predictive representations. This supports our hypothesis, and is particularly noticeable when end-to-end learning $\phi_L$ (\ZP) markedly excels over its phased learning counterpart (\RP + \ZP).

\textbf{Validation of \ZP stop-gradient hypothesis.} We extend our rank analysis from MuJoCo to MiniGrid using the same estimation metric. As depicted in \autoref{fig:minigrid} right, our finding averaged across the benchmark indicates that EMA \ZP targets are able to preserve their rank, while both detached and online \ZP targets degrade the rank during training. Notably, while our theory does not distinguish detached and EMA targets, the observed lower rank of online targets relative to both detached and EMA targets is in line with our hypothesis. Finally, without any auxiliary task such as \ZP and \OP, R2D2 degrades the rank, which is predictable in the sparse-reward setting~\citep{lyle2021effect}.

\section{Discussion}
\label{sec:recommend}

\textbf{Recommendations.} Based on our theoretical and empirical results, we suggest the following preliminary guidance to RL practitioners:\looseness=-1
\begin{enumerate}[leftmargin=*,itemsep=0pt, topsep=0pt]
\item \textbf{Analyze your task first.} For example, in noisy or distracting tasks, consider using self-predictive representations. In sparse-reward tasks, consider using observation-predictive representations. In deterministic tasks, choose the deterministic $\ell_2$ objectives for representation learning.  
\item \textbf{Use our minimalist algorithm as your baseline.} Our algorithm allows for an independent evaluation of representation learning and policy optimization effects.  Start with end-to-end learning and model-free RL for policy optimization. 
\item \textbf{Implementation tips.} For our minimalist algorithm, we recommend adopting the $\ell_2$ objective with EMA \ZP targets first. When tackling POMDPs, start with recurrent networks as the encoder.
\end{enumerate}

\textbf{Limitations.} The limitations of our work can be divided into theoretical and empirical aspects. 
On the theoretical side, although we show a continuous-time analysis of auxiliary learning dynamics with linear models in \autoref{thm:collapse}, we do not provide a convergence analysis for the joint optimization of RL and auxiliary losses and the results may not hold beyond linear assumption.
On the empirical side, our experiment scope does not cover more complicated domains that require pixel-based observations.  

\textbf{Conclusion.} This work has offered a principled analysis of state and history representation learning in reinforcement learning, bridging the gap between various approaches.
Our unified view and analysis of self-predictive learning also inspire a minimalist RL algorithm for learning self-predictive representations. Extensive empirical studies in benchmarks across standard MDPs, distracting MDPs, and sparse-reward POMDPs, validate most of our hypotheses suggested by our theory.

\section*{Acknowledgements and Disclosure of Funding}

We thank Pierluca D'Oro and Zhixuan Lin for their technical help. We thank Amit Sinha, David Kanaa, David Yu-Tung Hui, Dinghuai Zhang, Doina Precup, Léo Gagnon, Pablo Samuel Castro, Raj Ghugare, Shreyas Chaudhari, Ziyan Luo, and anonymous reviewers for the constructive discussion.
This work was enabled by the computational resources provided by the Calcul Québec (\url{www.calculquebec.ca}) and the Digital Research Alliance of Canada (\url{https://alliancecan.ca/}), with material support from NVIDIA Corporation.
This work was funded by IBM Research and Google DeepMind. 

\newpage
{\small
\bibliography{citation}
\bibliographystyle{iclr2024_conference}
}

\clearpage
\appendix

\part{Appendix}
{
\hypersetup{linkcolor=black}
\parttoc
}

\section{A Unified View on State and History Representations}
\label{sec:unified_view}
\subsection{Notation}
\autoref{tab:notation} shows the glossary used in this paper.

\begin{table}[h]
    \centering
    \caption{\textbf{Glossary of notations} used in this paper.}
    \footnotesize
    \begin{tabular}{ccc}
    \toprule
    \textbf{Notation} & \textbf{Text description} & \textbf{Math description} \\ 
    \midrule
        $\gamma$ & Discount factor & $\gamma \in [0,1]$ \\
        $T$ & Horizon & $T\in \mathbb N \cup \{+\infty\}$ \\ 
        $s_t$ & State at step $t$ & $s\in \mathcal S$ \\
        $o_t$ & Observation at step $t$ & $o\in \mathcal O$ \\ 
        $a_t$ & Action at step $t$ & $a\in \mathcal A$ \\ 
        $r_t$ & Reward at step $t$ & $r\in \R$ \\ 
        $h_t$ & History at step $t$ & $h_t = (h_{t-1},a_{t-1},o_t) \in \mathcal H_t$, $h_1=o_1$ \\
        $P(o_{t+1} \mid h_t,a_t)$ & Environment transition &   \\
        $R(h_t,a_t)$ & Environment reward function & $R: \mathcal H_t \times \mathcal A \to \Delta(\mathbb R)$ \\
        $\pi(a_t \mid h_t)$ & Policy (actor) &  \\ 
        $\pi^*(h_t)$ & Optimal policy (actor) &  \\ 
        $Q^\pi(h_t,a_t)$ & Value (critic) & \\ 
        $Q^*(h_t,a_t)$ & Optimal value (critic) & \\ 
        \midrule
        $\phi$ & Encoder of history & $\phi: \mathcal H_t \to \mathcal Z$ \\
        $z_t$ & Latent state at step $t$ & $z_t = \phi(h_t) \in \mathcal Z$ \\ 
        $P_z(z_{t+1} \mid z_t,a_t)$ & Latent transition & \\
        $R_z(z_t,a_t)$ & Latent reward function & \\ 
        $\pi_z(a_t \mid z_t)$ & Latent policy (actor) &  \\ 
        $\pi_z^*(z_t)$ & Optimal latent policy (actor) &  \\ 
        $Q_z^{\pi_z}(z_t,a_t)$ & Latent value (critic) & \\ 
        $Q_z^*(z_t,a_t)$ & Optimal latent value (critic) & \\ 
        \midrule\midrule
        \RP & Expected \textbf{R}eward \textbf{P}rediction & $\E{}{r_t \mid h_t, a_t} = R_z(\phi(h_t),a_t)$ \\
       \midrule
       \OR & \textbf{O}bservation \textbf{R}econstruction & $o_t = \psi_o(\phi(h_t))$ \\ 
       \OP & Next \textbf{O}bservation \textbf{P}rediction & $P(o_{t+1} \mid h_t,a_t) = P_o(o_{t+1} \mid \phi(h_t),a_t)$ \\
       \midrule
       \ZP & Next Latent State $z$ \textbf{P}rediction & $P(z_{t+1} \mid h_t,a_t) = P_z(z_{t+1} \mid \phi(h_t),a_t)$ \\ 
       \EZP & \textbf{E}xpected Next Latent State $z$ \textbf{P}rediction & $\E{}{z_{t+1} \mid h_t,a_t} = \E{}{z_{t+1} \mid \phi(h_t),a_t}$  \\ 
       \Rec & Recurrent Encoder & $\phi(h_{t+1})= \psi_z(\phi(h_t), a_t,o_{t+1})$ \\
       ZM & Markovian Latent Transition & $z_{t+1} \ind z_{1:t-1},a_{1:t-1} \mid \phi(h_t),a_t$ \\
       \midrule\midrule
       $\phi_{\pi^*}$ & $\pi^*$-irrelevance abstraction & $\phi(h_1) = \phi(h_2) \implies \pi^*(h_1) = \pi^*(h_2)$\\
      $\phi_{Q^*}$ & $Q^*$-irrelevance abstraction & $\phi(h_1) = \phi(h_2) \implies Q^*(h_1,a) = Q^*(h_2,a)$ \\ 
      $\phi_M$ & Markovian abstraction & \RP + ZM  \\ 
       $\phi_L$ & Self-predictive abstraction & \RP + \ZP $\iff$ $\phi_{Q^*}$ + \ZP \\ 
       $\phi_O$ & Observation-predictive abstraction & \RP + \OP + \Rec $\iff$  $\phi_{Q^*}$ + \OP + \Rec \\ 
       \bottomrule
    \end{tabular}
    \label{tab:notation}
\end{table}

\subsection{Additional Background}
\label{sec:more_def}

\paragraph{Remark on the latent state distribution.} In this paper, we assume the latent space $\mathcal Z$ as a pre-specified Banach space, which is a complete normed vector space. We further assume any latent state distribution defined on $\mathcal Z$ has a finite expectation. 
To avoid a measure-theoretic treatment, we assume that Z is discrete-valued in our proof analysis. The proof arguments are easily generalized to the case when $\mathcal Z$ lies in a Banach space using standard arguments.

\paragraph{Remark on the existence of optimal value and policy in POMDPs.} In an MDP, it is well-known that there exists a unique optimal value function following the Bellman equation, which induces an optimal deterministic policy~\citep{puterman2014markov}. In POMDPs, the result is complicated. For a \textit{finite-horizon} POMDP, one can construct a finite-dimensional state space by stacking all previous observations and actions to convert a POMDP into an MDP, thus the MDP result can be directly applied.  For  an \textit{infinite-horizon} POMDP, \citet[Theorem 25]{subramanian2022approximate} shows that the unique optimal value function exists when the POMDP has a time-invariant finite-dimensional information state, which is the case when the unobserved state space is finite.  The POMDP experiments shown in \autoref{sec:pomdps} satisfy this assumption because they have a finite state space. For POMDPs with infinite-dimensional information states, the result remains unclear.

\subsubsection{Additional Abstractions and Formalizing the Relationship}

First, we present two additional abstractions not shown in the main paper, which are also used in prior work.
Then we formalize \autoref{thm:hierarchy_informal} with \autoref{thm:hierarchy} using the concept of granularity in relation.

\textbf{$\pi^*$-irrelevance abstraction.}
An encoder $\phi_{\pi^*}$ yields a $\pi^*$-irrelevance abstraction~\citep{li2006towards} if it contains the necessary information (a ``sufficient statistics'') for selecting return-maximizing actions. 
Formally, if $\phi_{\pi^*}(h_i) = \phi_{\pi^*}(h_j)$ for some $h_i,h_j\in\mathcal H_t$, then
$
\pi^*(h_i) = \pi^*(h_j)
$.
One way of obtaining a $\pi^*$-irrelevance abstraction is to learn an encoder $\phi$ end-to-end with a policy $\pi_z(a \mid \phi(h))$ by model-free RL~\citep{sutton1999policy} such that $\pi_z^*(\phi_{\pi^*}(h)) = \pi^*(h),\forall h$.  

\textbf{Markovian abstraction}. An encoder  $\phi_M$ provides Markovian abstraction if it satisfies the expected reward condition \RP and \textbf{Markovian latent transition (ZM)} condition: for any $z_k = \phi_M(h_k)$, 
\begin{align}
\label{eq:ZM}
P(z_{t+1} \mid z_{1:t},a_{1:t}) = P(z_{t+1} \mid z_t,a_t), \quad \forall z_{1:t+1}, a_{1:t}.  \tag{ZM}
\end{align}
\newcommand{\ZM}{\ref{eq:ZM}\xspace}%

This extends Markovian abstraction~\citep{allen2021learning} in MDPs to POMDPs.

\textbf{Granularity in relation}.
In MDPs, it is well-known that state representations form a hierarchical structure~\citep[Theorem~2]{li2006towards}, but this idea had not been extended to the POMDP case. We do so here by defining an equivalent concept of ``granularity''. We say that an encoder $\phi_A$ is finer than or equal to another encoder $\phi_B$, denoted as $\phi_A \succeq \phi_B$, if and only if for any histories $h_i,h_j\in \mathcal H_t$, 
$\phi_A(h_i) = \phi_A(h_j)$
implies 
$\phi_B(h_i) = \phi_B(h_j)$.  
The relation $\succeq$ is a partial ordering.
Using this notion, we can show \autoref{thm:hierarchy}:

\begin{theorem}[\textbf{Granularity of state and history abstractions (the formal version of \autoref{thm:hierarchy_informal})}]
$\phi_O \succeq \phi_L \succeq \phi_{Q^*} \succeq \phi_{\pi^*}$.
\label{thm:hierarchy}
\end{theorem}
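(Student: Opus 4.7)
The proof reduces, via transitivity of $\succeq$, to three class-inclusion statements suggested by the informal version: (i) every encoder satisfying \RP + \OP + \Rec (a $\phi_O$) also satisfies \RP + \ZP (is a $\phi_L$); (ii) every encoder satisfying \RP + \ZP enjoys $Q^*$-irrelevance; (iii) every $Q^*$-irrelevance encoder is $\pi^*$-irrelevant. Each step only needs that being in the stronger class forces the weaker condition on the same encoder $\phi$.

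For (i), pick $\phi\in\phi_O$. Marginalising over the next observation,
\begin{equation*}
P(z'\mid h,a) \;=\; \sum_{o'} P(z'\mid h,a,o')\,P(o'\mid h,a) \;=\; \sum_{o'} P(z'\mid h')\,P(o'\mid h,a),
\end{equation*}
where the last equality uses that $z'=\phi(h')$ is determined by $h'=(h,a,o')$. Now \Rec replaces the first factor by $P_m(z'\mid\phi(h),a,o')$ and \OP replaces the second by $P_o(o'\mid\phi(h),a)$, so defining $P_z(z'\mid\phi(h),a) \defeq \sum_{o'} P_m(z'\mid\phi(h),a,o')\,P_o(o'\mid\phi(h),a)$ verifies \ZP. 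Since \RP is already assumed, $\phi\in\phi_L$.

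For (ii), let $\phi\in\phi_L$. I would build a latent MDP with reward $R_z$ and transition $P_z$ supplied by \RP and \ZP, let $\mathcal Q^*$ be the fixed point of the latent Bellman operator, and show that the \emph{original} Bellman operator $T$ preserves $\phi$-invariance: for any bounded $Q$ of the form $Q(h,a)=\mathcal Q(\phi(h),a)$,
\begin{equation*}
(TQ)(h,a) \;=\; \mathbb E[R(h,a)\mid h,a] + \gamma\,\mathbb E_{o'\sim P(\cdot\mid h,a)}\!\bigl[\max_{a'}\mathcal Q(\phi(h'),a')\bigr] \;=\; R_z(\phi(h),a) + \gamma\,\mathbb E_{z'\sim P_z(\cdot\mid\phi(h),a)}\!\bigl[\max_{a'}\mathcal Q(z',a')\bigr],
\end{equation*}
where \RP handles the first term and \ZP lets us push the expectation through $z'=\phi(h')$ in the second. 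The set of $\phi$-invariant bounded functions is closed under uniform limits, $T$ is a $\gamma$-contraction on bounded functions, so by Banach fixed point the unique $Q^*$ lives in this invariant subspace; hence $\phi(h_i)=\phi(h_j)\Rightarrow Q^*(h_i,a)=Q^*(h_j,a)$ for all $a$. Step (iii) is immediate: the paper defines $\pi^*(h)=\argmax_a Q^*(h,a)$ as deterministic (under a fixed tie-breaking rule on $\mathcal A$), so equality of $Q^*(\cdot,a)$ for every $a$ gives equality of the argmax.

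The only real obstacle is the contraction argument in (ii), which needs existence and uniqueness of $Q^*$. For the finite-horizon case this follows by backward induction (the same invariance argument propagates from step $T$ backwards); for the infinite-horizon POMDP one should invoke the assumption recalled in the background remark that guarantees a time-invariant finite-dimensional information state, under which $Q^*$ is the unique Banach fixed point. I would also verify once that the invariant subspace is closed under sup-norm convergence, so that the fixed point of $T$ restricted to this subspace coincides with $Q^*$ on the full space. Everything else is direct algebra using the Tower property and the defining identities of \RP, \OP, \Rec, and \ZP.
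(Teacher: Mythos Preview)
Your proposal is correct and follows essentially the same route as the paper: step (i) is exactly the paper's marginalisation argument (its \autoref{prop:op}); step (iii) matches the paper's one-line argmax argument verbatim; and for step (ii) the paper simply defers to \citet[Theorems~5 and 25]{subramanian2022approximate}, whose content is precisely the contraction/backward-induction argument you spell out (that the Bellman optimality operator preserves the closed subspace of $\phi$-invariant bounded functions, so its unique fixed point $Q^*$ lies there). Your only addition is making that cited step self-contained, which is fine.
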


\textbf{Abstract MDP.} Given an encoder $\phi$, we can construct an abstract MDP~\citep{li2006towards} $\mathcal M_\phi = (\mathcal Z, \mathcal A, P_z,R_z,\gamma, T)$ for a POMDP $\mathcal M_O$. The latent reward $R_z$ and latent transition $P_z$ are then given by: 
$
R_z(z,a) = \int P(h \mid z) \E{}{r \mid h,a} dh
$,
$
P_z(z' \mid z,a) =\int  P(h\mid z) P(o' \mid h,a) \delta(z'=\phi(h')) dhdo'
$, where $P(h\mid z) = 0 $ for any $\phi(h) \neq z$ and is normalized to a distribution. 
The optimal latent (Markovian) value function $Q^*_z(z,a)$ statisfies
$
Q^*_z(z,a) = R_z(z,a) + \gamma \E{z'\sim P_z( \mid z,a)}{\max_{a'} Q^*_z(z',a')}
$, and the optimal latent policy $\pi_z^*(z) = \argmax_a Q^*_z(z,a)$.
It is important to note that this definition focuses solely on the process by which the encoder induces a corresponding abstract MDP, without addressing the quality of the encoder itself.

\subsubsection{Alternative Definitions}

In the main paper (\autoref{sec:background}), we present the concepts of self-predictive abstraction $\phi_L$ and observation-predictive abstraction $\phi_O$. 
In most prior works, these concepts were defined in an alternative way -- using a pair of states (histories). In comparison, our definition is based on a pair of a state (history) and a latent state, which we believe is more comprehensible and help derive the auxiliary objectives. 

For completeness, here we restate their definition, extended to POMDPs, and then show the equivalence between their and our definitions.

\textbf{Model-irrelevance abstraction~\citep{li2006towards} (\textbf{bisimulation relation}~\citep{givan2003equivalence}) $\Phi_L$.}
If for any two histories $h_i,h_j\in\mathcal H$ such that $
\Phi_{L}(h_i) = \Phi_{L}(h_j)
$, then 
\begin{align}
\label{eq:RP_bisimulation}
&\E{}{r \mid h_i, a} = \E{}{r \mid h_j, a}, \quad \forall a\in \mathcal A, \\ 
\label{eq:ZP_bisimulation}
&P(z' \mid h_i, a) = P(z' \mid h_j ,a), \quad \forall a\in \mathcal A, z'\in \mathcal Z,
\end{align}
where $P(z' \mid h,a) = \int P(o' \mid h,a) \delta (z' = \Phi_L(h'))do'$. 
Here we extend the concept from MDPs~\citep{li2006towards,givan2003equivalence} into POMDPs. 
It is worth noting that while original concepts assume deterministic rewards or require reward distribution matching for stochastic rewards~\citep{castro2009equivalence} in \autoref{eq:RP_bisimulation}, the requirement can indeed be relaxed. As shown by \citet{subramanian2022approximate}, it is sufficient to ensure expected reward matching to maintain optimal value functions. As such, we adopt this relaxed requirement of expectation matching in our concept. 

\begin{proposition}[\textbf{$\Phi_L$ is equivalent to $\phi_L$}]
\label{prop:info_state_MDPs}
\end{proposition}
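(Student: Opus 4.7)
The plan is to prove the equivalence by showing both directions of containment, treating each definition as a set of conditions an encoder can satisfy and demonstrating that the two sets coincide. The key insight is that the alternative bisimulation-style definition $\Phi_L$ is phrased as ``any two histories mapping to the same latent state must agree on expected reward and next-latent distribution,'' while our definition $\phi_L$ is phrased as ``there exist latent reward and transition functions that make expected reward and next-latent distribution depend only on the latent state.'' These are two ways of saying the same invariance, so the proof reduces to unpacking each into the other.

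For the forward direction ($\phi_L \Rightarrow \Phi_L$), I would assume an encoder $\phi$ satisfies \RP and \ZP, so that the latent functions $R_z$ and $P_z$ exist. Take any two histories $h_i, h_j$ with $\phi(h_i) = \phi(h_j)$. Then by \RP we immediately get $\mathbb{E}[R(h_i,a)\mid h_i,a] = R_z(\phi(h_i),a) = R_z(\phi(h_j),a) = \mathbb{E}[R(h_j,a)\mid h_j,a]$, and analogously $P(z'\mid h_i,a) = P_z(z'\mid \phi(h_i),a) = P_z(z'\mid \phi(h_j),a) = P(z'\mid h_j,a)$. This recovers the bisimulation conditions~\eqref{eq:RP_bisimulation} and~\eqref{eq:ZP_bisimulation}, so $\phi$ qualifies as $\Phi_L$.

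For the reverse direction ($\Phi_L \Rightarrow \phi_L$), I would assume an encoder $\Phi$ satisfies the bisimulation conditions and explicitly construct the required latent reward and transition functions. For each $z$ in the image of $\Phi$, pick any representative $h_z$ with $\Phi(h_z) = z$ (using the axiom of choice in general, or a measurable selector if one wants to be careful in the continuous case) and define $R_z(z,a) \defeq \mathbb{E}[R(h_z,a)\mid h_z,a]$ and $P_z(z'\mid z,a) \defeq P(z'\mid h_z,a)$. The bisimulation assumption guarantees these definitions are independent of the choice of representative: for any other $h$ with $\Phi(h) = z$, the right-hand sides agree by~\eqref{eq:RP_bisimulation} and~\eqref{eq:ZP_bisimulation}. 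By construction, \RP and \ZP hold for $\Phi$ (with the chosen $\phi = \Phi$), establishing that $\Phi$ qualifies as $\phi_L$.

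The only subtle point, and therefore the ``hard part,'' is the well-definedness step in the reverse direction: one must verify that picking an arbitrary representative yields a function on $\mathcal{Z}$ rather than on $\mathcal{H}_t$. This is immediate from the definition of $\Phi_L$, so in truth there is no serious obstacle; the proposition is essentially a reformulation statement, and the proof will be short. I would close by noting that the same double-representation argument gives an analogous equivalence for $\phi_O$ in \autoref{prop:info_state_MDPs}'s sibling statements (if any), to emphasize that this is a general technique for converting between partition-style and function-existence-style abstractions.
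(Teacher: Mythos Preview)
Your proposal is correct and matches the paper's proof in substance. The forward direction is identical; for the reverse direction the paper invokes the elementary fact $P(X\mid Y)=P(X\mid Y,f(Y))$ to argue that the expected reward and next-latent distribution are constant on each fiber $\phi^{-1}(z)$, rather than explicitly selecting a representative as you do, but this is the same well-definedness argument in different dress.
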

\begin{proof}
It is easy to see that $\phi_L$ implies $\Phi_L$. If $\phi_{L}(h_i) = \phi_{L}(h_j)$, then by \RP, 
\begin{align}
\E{}{r \mid h_i,a} = R_z (\phi_L(h_i),a) = R_z (\phi_L(h_j),a) = \E{}{r\mid h_j, a},
\end{align}
and by \ZP, 
\begin{align}
P(z'\mid h_i,a) = P_z(z' \mid \phi_L(h_i), a) = P_z(z' \mid \phi_L(h_j), a) = P(z'\mid h_j,a).
\end{align}
Therefore, $\phi_L$ implies $\Phi_L$. 

Now we want to show $\Phi_L$ implies $\phi_L$. We will use the following fact: for any two random variables $X, Y$ and a function $f$ that maps $Y$ into a random variable $Z$, we have $X \ind f(Y) \mid Y$. This is equivalent to say: 
\begin{align}
\label{eq:add_func_prob}
P(X =x \mid Y =y) = P(X=x \mid Y=y, Z=f(y)), \quad \forall x,y. 
\end{align}
A corollary is on the conditional expectation:
\begin{align}
\label{eq:add_func_exp}
\E{}{X \mid Y =y} = \E{}{X \mid Y=y, Z=f(y)}.
\end{align}

First, to see \RP condition: using the fact (\autoref{eq:add_func_exp}), 
\begin{align}
\E{}{r \mid \mathcal H = h_i, \mathcal A = a} = \E{}{r \mid \mathcal H = h_i, \mathcal A = a, \mathcal Z = \phi(h_i)}. 
\end{align}
By \autoref{eq:RP_bisimulation}, we have for any $h_i, h_j$ such that $\phi(h_i)=\phi(h_j) \defeq z$,
\begin{align}
\E{}{r \mid \mathcal H = h_i, \mathcal A = a, \mathcal Z = z} =  \E{}{r \mid \mathcal H = h_j, \mathcal A = a, \mathcal Z = z}.
\end{align}
This exactly indicates \RP condition: $\E{}{r \mid \mathcal H = h_i, \mathcal A = a}$ is a function of $\phi(h_i), a$.

Similar to the proof of showing \RP, we can show \ZP: using the fact (\autoref{eq:add_func_prob}), 
\begin{align}
P( z' \mid \mathcal H = h_i, \mathcal A = a) = P( z' \mid \mathcal H = h_i, \mathcal A = a, \mathcal Z = \phi(h_i)).
\end{align}
By \autoref{eq:ZP_bisimulation}, we have for any $h_i, h_j$ such that $\phi(h_i)=\phi(h_j) \defeq z$,
\begin{align}
P( z' \mid \mathcal H = h_i, \mathcal A = a, \mathcal Z = z)=  P( z' \mid \mathcal H = h_j, \mathcal A = a, \mathcal Z = z).
\end{align}
This exactly indicates \ZP condition: $P( z' \mid \mathcal H = h_i, \mathcal A = a)$ is a distribution conditioned on $\phi(h_i), a$.

\end{proof}

\textbf{Belief abstraction $\Phi_{O}$ (weak belief bisimulation relation~\citep{castro2009equivalence}).} It satisfies \Rec, and 
if for any two histories $h_i,h_j\in\mathcal H$ such that
$
\Phi_{O}(h_i) = \Phi_{O}(h_j)
$, then 
\begin{align}
&\E{}{r\mid h_i,a} = \E{}{r\mid h_j,a}, \quad \forall a\in \mathcal A, \\
&P(o' \mid h_i, a) = P(o' \mid h_j ,a), \quad \forall a\in \mathcal A, o'\in \mathcal O.
\end{align}
This concept is known as a naive abstraction in MDPs~\citep{jiang2018notes} and weak belief bisimulation relation in POMDPs~\citep{castro2009equivalence}. Similarly, prior concepts assume deterministic reward or distribution matching for stochastic rewards, while we relax it to expected reward matching.

\begin{proposition}[\textbf{$\Phi_O$ is equivalent to $\phi_O$}]
\label{prop:belief_state_MDPs}
\end{proposition}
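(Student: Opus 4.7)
The plan is to mirror the proof of Proposition~\ref{prop:info_state_MDPs} almost verbatim, replacing the next-latent-state prediction \ZP by the next-observation prediction \OP and keeping \RP intact, while noting that the \Rec condition is baked into both definitions so it does not need to be re-established.

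For the easy direction ($\phi_O \Rightarrow \Phi_O$), I would take any pair $h_i,h_j$ with $\phi_O(h_i)=\phi_O(h_j)$ and invoke the functional characterizations provided by \RP and \OP: since $\E{}{R(h,a)\mid h,a}=R_z(\phi_O(h),a)$ depends on $h$ only through $\phi_O(h)$, the expected rewards agree; likewise $P(o'\mid h,a)=P_o(o'\mid \phi_O(h),a)$ depends on $h$ only through $\phi_O(h)$, so the next-observation distributions agree. The \Rec condition carries over trivially because it is part of the definition of both $\phi_O$ and $\Phi_O$.

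For the converse ($\Phi_O \Rightarrow \phi_O$), I would reuse the two ``adding-a-function'' facts from the $\phi_L$ proof:
\begin{align}
P(X=x \mid Y=y) &= P(X=x\mid Y=y, Z=f(y)), \\
\E{}{X\mid Y=y} &= \E{}{X\mid Y=y, Z=f(y)},
\end{align}
applied with $Y=\mathcal H$ and $Z=\phi_O(\mathcal H)$. For \RP, this gives $\E{}{R(h,a)\mid h,a}=\E{}{R(h,a)\mid h,a,\phi_O(h)}$, and the $\Phi_O$ hypothesis says this quantity is constant across all $h$ sharing the same value of $\phi_O(h)$, so it is a function of $(\phi_O(h),a)$, yielding \RP. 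The same argument applied to $P(o'\mid h,a)$ yields \OP. The \Rec condition is inherited directly from the definition of $\Phi_O$.

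I do not expect any real obstacle here; the proof is essentially a transcription of Proposition~\ref{prop:info_state_MDPs} with ``$z'$'' replaced by ``$o'$'' throughout, plus a one-line remark that \Rec transfers trivially because it is a structural property of the encoder and is part of both definitions. The only subtlety worth flagging explicitly is that \OP together with \Rec (both part of $\phi_O$) is what lets the induced next-latent-state distribution factor through $(\phi_O(h),a)$, but since the proposition only demands matching of the expected reward and next-observation distribution, we do not need to derive \ZP here — that implication is already recorded separately in the implication graph.
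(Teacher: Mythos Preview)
Your proposal is correct and follows exactly the approach the paper takes: the paper's own proof is a single sentence stating that it is ``almost the same as the proof of \autoref{prop:info_state_MDPs} by replacing $z'$ with $o'$,'' and your write-up carries out precisely that substitution while correctly noting that \Rec is part of both definitions and therefore transfers trivially.
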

\begin{proof}
The proof is almost the same as the proof of \autoref{prop:info_state_MDPs} by replacing $z'$ with $o'$.
\end{proof}

\subsection{Propositions and Proofs}
\label{sec:proof_relation}

With the additional background in \autoref{sec:more_def}, we show the complete implication graph in \autoref{fig:relation} built on \autoref{fig:relation_main}. 

\begin{figure}[h]
    \centering
    \begin{minipage}[h]{0.35\linewidth}
    \begin{tikzpicture}[>=latex',line join=bevel,very thick,scale=0.5]
\node (ZP) at (164.1bp,126.44bp) [draw,fill=yellow,circle] {\ZP};
  \node (Q) at (255.48bp,73.681bp) [draw,circle] {$\phi_{Q^*}$};
  \node (RP) at (164.1bp,20.924bp) [draw,circle] {\RP};
  \node (ZM) at (237.57bp,238.24bp) [draw,ellipse] {\ZM};
  \node (Rec) at (117.32bp,225.22bp) [draw,circle] {\Rec};
  \node (OP) at (21.412bp,91.928bp) [draw,circle] {\OP};
  \node (Pi) at (356.26bp,73.681bp) [draw,circle] {$\phi_{\pi^*}$};
  \node (OR) at (44.81bp,162.5bp) [draw,circle] {\OR};
  \draw [brown,->] (ZP) ..controls (193.56bp,109.43bp) and (208.66bp,100.71bp)  .. (Q);
  \draw [orange,->] (ZP) ..controls (164.1bp,90.645bp) and (164.1bp,69.69bp)  .. (RP);
  \draw [magenta,->] (ZP) ..controls (187.8bp,162.5bp) and (207.24bp,192.09bp)  .. (ZM);
  \draw [gray,->] (ZP) ..controls (142.64bp,155.79bp) and (132.91bp,175.63bp)  .. (Rec);
  \draw [red,->] (ZP) ..controls (122.3bp,109.04bp) and (81.565bp,98.908bp)  .. (OP);
  \draw [cyan,->] (Q) ..controls (293.72bp,73.681bp) and (305.79bp,73.681bp)  .. (Pi);
  \draw [orange,->] (Q) ..controls (222.81bp,46.823bp) and (207.29bp,37.779bp)  .. (RP);
  \draw [brown,->] (RP) ..controls (190.03bp,43.629bp) and (205.49bp,52.956bp)  .. (Q);
  \draw [blue,->] (Rec) ..controls (140.49bp,192.52bp) and (150.03bp,172.74bp)  .. (ZP);
  \draw [blue,->] (OP) ..controls (64.509bp,109.67bp) and (105.25bp,119.76bp)  .. (ZP);
  \draw [transparent,->] (OR) ..controls (85.453bp,150.21bp) and (113.37bp,141.77bp)  .. (ZP);
  \draw [red,->] (OR) ..controls (35.687bp,134.98bp) and (33.506bp,128.41bp)  .. (OP);

\end{tikzpicture}
    \end{minipage}
    \vspace{-1em}
\caption{\textbf{The complete implication graph} showing the relations between the conditions on history representations. The source nodes of the edges with the same color together imply the target node. In MDPs, \OR implies all the other conditions. As a quick reminder, \RP: expected reward prediction, \OP: next observation prediction, \OR: observation reconstruction, \ZP: next latent state prediction, \Rec: recurrent encoder, \ZM: Markovian latent transition. 
All the connections are discovered in this work, except for (1) \OP + \Rec implying \ZP, (2) \ZP + \RP implying $\phi_{Q^*}$, (3) $\phi_{Q^*}$ implying $\phi_{\pi^*}$.}
\label{fig:relation}
\end{figure}
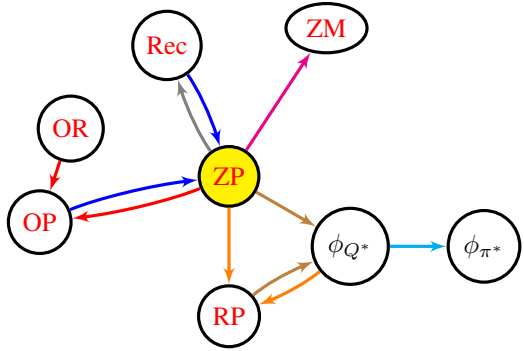

\subsubsection{Results Related to \ZP}

\begin{lemma}[Functions of independent random variables are also independent]
\label{lm:ind_transform}
If $X\ind Y$, then for any functions $f, g$, we have $f(X) \ind g(Y)$. 
\end{lemma}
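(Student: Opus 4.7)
The plan is to reduce the independence of $f(X)$ and $g(Y)$ to the independence of $X$ and $Y$ via preimages. Since the paper notes that it assumes $\mathcal{Z}$ is discrete-valued to avoid measure-theoretic technicalities, I will carry out the argument in the discrete setting, where independence $X \ind Y$ is equivalent to $P(X = x, Y = y) = P(X = x) P(Y = y)$ for all values $x, y$, and its analogue in terms of events $P(X \in A, Y \in B) = P(X \in A) P(Y \in B)$ follows by summation.

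First I would fix arbitrary target values $u$ in the range of $f$ and $v$ in the range of $g$ and rewrite the joint event $\{f(X) = u, g(Y) = v\}$ as the preimage event $\{X \in f^{-1}(u)\} \cap \{Y \in g^{-1}(v)\}$. Next I would apply the independence hypothesis $X \ind Y$ to the rectangle event $f^{-1}(u) \times g^{-1}(v)$ to factor the joint probability as a product of marginals. Finally, I would recognize the two resulting marginals as $P(f(X) = u)$ and $P(g(Y) = v)$, which yields $P(f(X) = u, g(Y) = v) = P(f(X) = u) P(g(Y) = v)$; since $u, v$ were arbitrary, this is exactly $f(X) \ind g(Y)$.

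There is essentially no obstacle in the discrete case: the proof collapses to invoking independence on the preimage rectangle. The only subtlety worth flagging, and which I would mention in a single remark at the end, is that the same argument goes through verbatim in the general (Banach-space-valued) setting provided $f$ and $g$ are measurable, by replacing singletons $\{u\}, \{v\}$ with arbitrary measurable sets $A, B$ in the codomains and using $\{f(X) \in A\} = \{X \in f^{-1}(A)\}$. This matches the paper's standing convention that results are stated in the discrete case but extend to Banach-space-valued latents by standard arguments.
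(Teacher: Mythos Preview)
Your proposal is correct and essentially identical to the paper's proof: both reduce the event $\{f(X)\in A,\, g(Y)\in B\}$ to the preimage rectangle $\{X\in f^{-1}(A)\}\cap\{Y\in g^{-1}(B)\}$ and invoke $X\ind Y$ to factor. The only cosmetic difference is that the paper works directly with arbitrary sets $A,B$, whereas you start from singletons and then remark on the extension; the content is the same.
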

\begin{proof}
This is a well-known result. Here is an elementary proof. Let $A,B$ be any two sets, 
\begin{align}
&P(f(X) \in A, g(Y) \in B) = P(X\in f^{-1}(A), Y \in g^{-1}(B)) \\
&\stackrel{X\ind Y}= P(X\in f^{-1}(A)) P(Y \in g^{-1}(B)) = P(f(X) \in A) P(g(Y) \in B).
\end{align}
\end{proof}

\begin{lemma}
\label{lm:cond_ind}
If $X \ind Y \mid Z$, then for any function $f$, we have $X \ind Y, f(Z) \mid Z$.
\end{lemma}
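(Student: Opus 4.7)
The plan is to unpack the definition of conditional independence and exploit the fact that $f(Z)$ becomes a deterministic quantity once $Z$ is held fixed, so adjoining it to the right side of $\ind$ adds no new randomness.

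First, I would rewrite the desired conclusion $X \ind (Y, f(Z)) \mid Z$ as the factorization
\begin{equation*}
P(X \in A,\, Y \in B,\, f(Z) \in C \mid Z = z) = P(X \in A \mid Z = z)\, P(Y \in B,\, f(Z) \in C \mid Z = z)
\end{equation*}
holding for all measurable $A, B, C$ and for $P_Z$-almost every $z$. Next, I would observe that conditional on $Z = z$, the random variable $f(Z)$ equals the constant $f(z)$ with probability one, hence $P(f(Z) \in C \mid Z = z) = \mathbf{1}[f(z) \in C]$, and the event $\{f(Z) \in C\}$ can be factored out of any conditional probability given $Z = z$.

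Applying this observation to both sides, the identity reduces to $P(X \in A, Y \in B \mid Z = z)\,\mathbf{1}[f(z) \in C] = P(X \in A \mid Z = z)\, P(Y \in B \mid Z = z)\,\mathbf{1}[f(z) \in C]$, which is exactly the hypothesis $X \ind Y \mid Z$ multiplied by the common indicator factor. This closes the argument.

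The step-by-step structure is therefore: (i) reduce to the factorization form, (ii) use determinism of $f(Z)$ given $Z$ to strip off the indicator $\mathbf{1}[f(z) \in C]$ on both sides, (iii) invoke the assumed conditional independence. There is no real obstacle; the only subtlety is handling the $P_Z$-null set on which $f(Z) \neq f(z)$, which is standard and costs nothing here since the paper explicitly restricts to discrete-valued random variables (see the remark on the latent state distribution in \autoref{sec:more_def}), so all conditional probabilities can be manipulated pointwise without measure-theoretic fuss.
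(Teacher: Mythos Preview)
Your proposal is correct and hinges on the same key observation as the paper: conditional on $Z=z$, the quantity $f(Z)$ is deterministic, so it can be factored out or absorbed without affecting the conditional independence. The only cosmetic difference is that the paper writes the argument in the equivalent ``drop-the-conditioning'' form $P(Y,f(Z)\mid X,Z)=P(Y,f(Z)\mid Z)$ via the chain rule, whereas you use the joint factorization form with indicator functions; both are standard characterizations of conditional independence and the substance is identical.
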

\begin{proof}
\begin{align}
&P(Y, f(Z) \mid X,Z) = P(f(Z) \mid X,Z) P(Y \mid X,Z, f(Z)) \\
&= P(f(Z) \mid Z) P(Y \mid Z) =  P(f(Z) \mid Z) P(Y \mid Z,f(Z)) = P(Y,f(Z)\mid Z).
\end{align}
\end{proof}

\begin{proposition}[\textbf{\ZP implies both \ZM and \Rec.}]
\label{prop:zm}
\end{proposition}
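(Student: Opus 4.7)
The proposition asserts two facts from \ZP: the latent process is first-order Markov given the action stream (\ZM), and the encoder updates recurrently (\Rec). I would handle these separately; the \ZM part admits a clean conditional-probability argument, while I expect \Rec to be the genuinely delicate piece.

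For \ZM, the strategy is direct manipulation of conditional distributions anchored on one key observation: for any history $h_t$, the past latent states $z_{1:t-1} = (\phi(h_1), \dots, \phi(h_{t-1}))$ and past actions $a_{1:t-1}$ are deterministic functions of $h_t$, since each $h_i$ is a prefix of $h_t$ and $a_{1:t-1}$ is embedded in $h_t$. So conditioning on $h_t$ absorbs conditioning on $(z_{1:t-1}, a_{1:t-1})$, and marginalizing gives
\begin{equation*}
P(z_{t+1} \mid z_{1:t}, a_{1:t}) = \sum_{h_t} P(h_t \mid z_{1:t}, a_{1:t}) \, P(z_{t+1} \mid h_t, a_t).
\end{equation*}
Applying \ZP inside the sum yields $P(z_{t+1} \mid h_t, a_t) = P_z(z_{t+1} \mid \phi(h_t), a_t) = P_z(z_{t+1} \mid z_t, a_t)$, which is independent of $h_t$ once $z_t$ is fixed by the conditioning event. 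The sum therefore collapses to $P_z(z_{t+1} \mid z_t, a_t)$, which is exactly \ZM.

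For \Rec, the goal is to produce $P_m$ such that $P(z_{t+1} \mid h_{t+1}) = P_m(z_{t+1} \mid \phi(h_t), a_t, o_{t+1})$. My plan is to invoke the paper's standing assumption that $\phi$ is implemented by a feedforward or recurrent network, so that $\phi(h_{t+1}) = \phi(h_t, a_t, o_{t+1})$ is by construction a function of $(\phi(h_t), a_t, o_{t+1})$; then the deterministic law $P(z_{t+1} \mid h_{t+1}) = \mathbf{1}[z_{t+1} = \phi(h_{t+1})]$ is itself already of the required form, and \autoref{lm:cond_ind} applied to the conditional independence $z_{t+1} \ind h_t \mid \phi(h_t), a_t$ supplied by \ZP certifies global consistency of the induced $P_m$ with the joint law of $(z_{t+1}, o_{t+1})$ given $(h_t, a_t)$. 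The main obstacle is precisely that \ZP alone does not force $\phi(h, a, o')$ to depend on $h$ only through $\phi(h)$: one can construct two histories with matching latent state whose per-observation next latent states disagree while the marginal next-latent laws still match, so the cleanest proof must explicitly appeal to the recurrence assumption on $\phi$ when discharging \Rec, and reserve the purely probabilistic argument above for \ZM.
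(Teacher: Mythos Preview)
Your \ZM argument is correct and is the paper's proof unfolded: the paper observes that $(z_{1:t-1},a_{1:t-1})$ is a function of $h_t$ and applies (the conditional form of) \autoref{lm:ind_transform} to pass from $z_{t+1}\ind h_t\mid z_t,a_t$ directly to \ZM; your explicit marginalisation over $h_t$ is the same step written out. For \Rec the two approaches diverge. The paper does \emph{not} invoke any architectural assumption; it argues via d-separation in the graphical model of \autoref{fig:pgm}: since $o_{t+1}$ is not a collider on any $h_t$--$z_{t+1}$ path (there is no edge $z_{t+1}\to o_{t+1}$), adding $o_{t+1}$ to the conditioning set is claimed to preserve $z'\ind h\mid\phi(h),a$, after which \autoref{lm:cond_ind} gives \Rec. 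Your plan instead appeals to ``$\phi$ is implemented by a recurrent network''---but that structural property \emph{is} \Rec, so as a proof of the implication \ZP $\Rightarrow$ \Rec your argument is circular.

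Your closing suspicion is nevertheless well-founded, and the obstruction you sketch is realisable. Take a two-step POMDP with $\mathcal{O}=\{0,1\}$, i.i.d.\ uniform observations, a single action, $\phi(h_1)\equiv z_0$ constant, and $\phi(o_1,a,o_2)=A$ iff $o_1=o_2$ (else $B$): then \ZP holds at $t=1$ (the next-latent law is uniform on $\{A,B\}$ for either $h_1$) yet \Rec fails, since $\phi(h_2)$ still depends on $o_1$ after fixing $(\phi(h_1),a,o_2)$. The paper's d-separation step is loose exactly here: \ZP is an \emph{assumed} conditional independence, not one implied by d-separation in \autoref{fig:pgm} (the direct edge $h_t\to z_{t+1}$ is never blocked by $\{z_t,a_t\}$), so the ``no collider opened'' heuristic does not license adding $o'$ to the conditioning set. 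Without further structure on $\phi$, \ZP alone does not force \Rec.
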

\begin{remark}
These are \textbf{new} results. \ZP implying \ZM means $\phi_L\succeq \phi_M$.
\end{remark}
\begin{proof}[Proof of \textbf{\autoref{prop:zm} (\ZP implies \ZM)}]
Since \ZP that $z_{t+1} \ind h_{t} \mid z_t,a_t$, this implies $
z_{t+1} \ind f(h_{t}) \mid z_t,a_t$ for any transformation $f$ by \autoref{lm:ind_transform}. One special case of $f$ is that $f(h_t) =  (z_{1:t},a_{1:t-1})$ , where $z_k = \phi(h_k)$, which is \ZM. 
\end{proof}

\begin{figure}[h]
    \centering
     \begin{tikzpicture}[>=latex',line join=bevel,very thick,scale=0.7]
\node (z) at (90.0bp,137.0bp) [draw,fill=lightgray,circle] {$z_t$};
  \node (next_z) at (162.0bp,45.0bp) [draw,fill=lightgray,ellipse] {$z_{t+1}$};
  \node (h) at (18.0bp,91.0bp) [draw,circle] {$h_t$};
  \node (next_o) at (90.0bp,45.0bp) [draw,ellipse] {$o_{t+1}$};
  \node (a) at (18.0bp,18.0bp) [draw,circle] {$a_t$};
  \draw [->] (h) ..controls (42.815bp,106.85bp) and (55.311bp,114.84bp)  .. (z);
  \draw [->] (h) ..controls (54.387bp,85.804bp) and (83.726bp,80.474bp)  .. (108.0bp,72.0bp) .. controls (117.99bp,68.513bp) and (128.5bp,63.578bp)  .. (next_z);
  \draw [->] (h) ..controls (42.815bp,75.146bp) and (55.311bp,67.162bp)  .. (next_o);
  \draw [->] (a) ..controls (54.232bp,13.71bp) and (83.517bp,12.22bp)  .. (108.0bp,18.0bp) .. controls (118.3bp,20.431bp) and (128.87bp,25.142bp)  .. (next_z);
  \draw [->] (a) ..controls (43.581bp,27.593bp) and (54.077bp,31.529bp)  .. (next_o);
  \draw [->] (next_o) ..controls (115.87bp,45.0bp) and (125.03bp,45.0bp)  .. (next_z);
\end{tikzpicture}
    \caption{\textbf{The graphical model of the interaction between history encoder and the environment.}}
    \label{fig:pgm}
\end{figure}
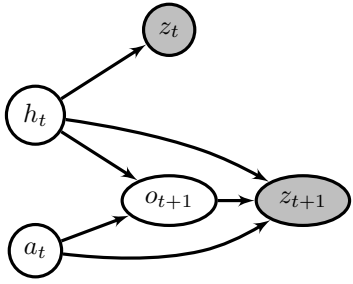

\begin{proof}[Proof of \textbf{\autoref{prop:zm} (\ZP implies \Rec)}]
Let $\phi$ satisfy \ZP, \ie $z' \ind h \mid \phi(h),a$. Then we can show that $z' \ind h \mid \phi(h),a,o'$. This is because the graphical model ($(h,a) \to o'$ and $(h, a,o')\to z'$; see \autoref{fig:pgm}) does not have v-structure such that $(h, z') \to o'$, thus adding the variable $o'$ to conditionals preserves conditional independence, by the principle of d-separation~\citep{pearl1988probabilistic}.

As $(a,o')$ also appears in the condition, we have $z' \ind h' \mid \phi(h),a,o'$ by \autoref{lm:cond_ind}, which is the probabilistic form of \Rec.
\end{proof}

\begin{proposition}[\textbf{\OP and \Rec imply \ZP}~\citep{subramanian2022approximate}.]
\label{prop:op}
\end{proposition}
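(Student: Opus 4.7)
The plan is to obtain \ZP from \OP and \Rec by marginalising the joint distribution of $(z', o')$ given $(h, a)$ over $o'$. Concretely, I would start from the decomposition
\begin{align*}
P(z' \mid h, a) \;=\; \sum_{o'} P(o' \mid h, a)\, P(z' \mid h, a, o'),
\end{align*}
which is just the law of total probability (the argument generalises to the continuous case by replacing the sum with an integral, subject to the standing measure-theoretic convention noted in \autoref{sec:more_def}).

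Next I would rewrite each factor using the two hypotheses. For the first factor, \OP directly gives $P(o' \mid h, a) = P_o(o' \mid \phi(h), a)$. For the second factor, the key observation is that $h' = (h, a, o')$ is a deterministic function of $(h, a, o')$, so conditioning on $(h, a, o')$ is the same as conditioning on $h'$, hence $P(z' \mid h, a, o') = P(z' \mid h')$. Now \Rec supplies $P(z' \mid h') = P_m(z' \mid \phi(h), a, o')$.

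Substituting both rewritings back into the decomposition yields
\begin{align*}
P(z' \mid h, a) \;=\; \sum_{o'} P_o(o' \mid \phi(h), a)\, P_m(z' \mid \phi(h), a, o'),
\end{align*}
whose right-hand side depends on $h$ only through $\phi(h)$. Defining
\begin{align*}
P_z(z' \mid \phi(h), a) \;\defeq\; \sum_{o'} P_o(o' \mid \phi(h), a)\, P_m(z' \mid \phi(h), a, o')
\end{align*}
then gives exactly the \ZP condition. I do not anticipate any real obstacle here: the only subtle point is making explicit that conditioning on $(h, a, o')$ coincides with conditioning on $h'$, so that \Rec can be invoked; everything else is the law of total probability together with direct substitution.
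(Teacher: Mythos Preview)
Your proposal is correct and matches the paper's own proof essentially line for line: both marginalise $P(z'\mid h,a)$ over $o'$ via the law of total probability, replace $P(o'\mid h,a)$ using \OP, replace $P(z'\mid h,a,o')=P(z'\mid h')$ using \Rec, and observe that the result depends on $h$ only through $\phi(h)$. The only cosmetic difference is that the paper writes integrals where you write sums, and you make the identification $P(z'\mid h,a,o')=P(z'\mid h')$ explicit whereas the paper leaves it implicit in the chain-rule factorisation $P(z',o'\mid h,a)=P(z'\mid h')P(o'\mid h,a)$.
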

\begin{proof}[Proof of \textbf{\autoref{prop:op} and \autoref{thm:hierarchy} ($\phi_O \succeq \phi_L$)}]
We directly follow the proof in \citep[Proposition 4]{subramanian2022approximate}. Let $\phi$ satisfy \OP and \Rec, then we will have \ZP: 
\begin{align}
&P(z' \mid h, a) = \int   P(z',o' \mid h,a ) do' = \int   \delta(z' = \phi(h')) P(o' \mid h,a)do' \\
&\stackrel{(\Rec,\OP)}= \int \delta(z' = \psi_z(\phi(h),a,o')) P_o(o' \mid \phi(h),a)do'\\
&= \int P(z',o' \mid \phi(h),a) do' = P_z(z' \mid \phi(h),a).
\end{align}
\end{proof}

\begin{proof}[Proof of \textbf{\autoref{thm:hierarchy} ($\phi_{Q^*}\succeq \phi_{\pi^*}$)}]
If $\phi_{Q^*}(h_i) = \phi_{Q^*}(h_j)$, then
$
Q^*(h_i, a) = Q^*(h_j,a), \forall a
$, and then taking argmax we get the optimal policy,  $\pi^*(h_1) = \argmax_a Q^*(h_1,a) = \argmax_a Q^*(h_2,a) = \pi^*(h_2)$. 
\end{proof}

\begin{proposition}[\textbf{\OR and \ZP imply \OP}]
\label{prop:or_zp}
\end{proposition}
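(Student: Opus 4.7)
The plan is to leverage \OR to express $o'$ as a deterministic function of the next latent state $z'=\phi(h')$, and then to use \ZP to push the dependence on $h$ through $\phi(h)$. Concretely, I will first show that \OR forces the existence of a deterministic reconstruction map $g:\mathcal Z\to\mathcal O$ satisfying $o' = g(\phi(h'))$ for every reachable $h'$, and then split $P(o'\mid h,a)$ over $z'$ to exchange \OR and \ZP.

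The argument for the reconstruction map goes as follows. For any fixed $h,a$ and any $o'$, consider the extended history $h' = (h,a,o')$. The conditional $P(o\mid h')$ is trivially the Dirac mass at the last component of $h'$, namely $o'$. By \OR this equals $P_o(o\mid \phi(h'))$, so $P_o(\cdot\mid z')$ must itself be a Dirac mass for every $z'$ in the image of $\phi$. Writing its atom as $g(z')$ gives the desired function $g$ with $g(\phi(h')) = o'$.

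Next I would decompose
\begin{align*}
P(o'\mid h, a) \;=\; \sum_{z'\in\mathcal Z} P(o' \mid h,a,z')\,P(z'\mid h,a).
\end{align*}
Conditioning on $z' = \phi(h')$ pins $o' = g(z')$ by the previous step, so $P(o'\mid h,a,z')$ collapses to $\mathbbm{1}[o' = g(z')]$. Applying \ZP to the remaining factor then gives
\begin{align*}
P(o' \mid h,a) \;=\; \sum_{z' : g(z') = o'} P_z(z' \mid \phi(h), a),
\end{align*}
which depends on $h$ only through $\phi(h)$. Setting $P_o(o'\mid\phi(h),a)$ equal to the right-hand side witnesses \OP.

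I expect the only subtlety to be making the ``\OR forces a Dirac'' step fully clean: one has to check that $g$ is well-defined on every $z'$ that occurs with positive probability under $P(z'\mid h,a)$, i.e.\ for every $h' = (h,a,o')$ reachable from $(h,a)$. Under the discrete-$\mathcal Z$ convention adopted in \autoref{sec:more_def} this is pure bookkeeping; for continuous $\mathcal Z$ the sum would be replaced by a disintegration integral but the logic is identical. No reward-prediction or recurrence assumption is needed, so the result holds in both MDPs and POMDPs, in agreement with the red edges of \autoref{fig:relation}.
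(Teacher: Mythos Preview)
Your proof is correct and follows essentially the same route as the paper's: both decompose $P(o'\mid h,a)$ through the intermediate variable $z'=\phi(h')$, use \OR applied to $h'$ to make $o'$ depend only on $z'$, and then invoke \ZP to replace $P(z'\mid h,a)$ by $P_z(z'\mid\phi(h),a)$. The only cosmetic difference is that you explicitly construct the deterministic reconstruction map $g$ (making the Dirac structure of \OR concrete), whereas the paper phrases the same step as the conditional independence $o'\ind h'\mid\phi(h')$ and invokes \autoref{lm:ind_transform} to drop $(h,\phi(h),a)$ from the conditioning set.
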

\begin{proof}
Consider given $h,a$, for any $o'$, 
\begin{align}
P(o',\phi(h') \mid h, \phi(h), a) &= P(\phi(h') \mid h,a) P(o'\mid \phi(h'), h, \phi(h), a)  \\ 
&\stackrel{(\ZP, \OR)}= P_z(\phi(h') \mid \phi(h),a) \delta(o'=\psi_o(\phi(h')))  \\
\label{ln:orzp1}
&= P_z(\phi(h') \mid \phi(h),a) P(o'\mid \phi(h'),\phi(h),a)  \\ 
&= P(o',\phi(h') \mid \phi(h),a),
\end{align}
where Ln.~\ref{ln:orzp1} follows that the \OR condition $o' \ind h' \mid \phi(h')$ implies $o' \ind \phi(h), a \mid \phi(h')$ by \autoref{lm:ind_transform}. Therefore, $o', \phi(h') \ind h \mid \phi(h), a$. By \autoref{lm:ind_transform}, we have \OP $o' \ind h \mid \phi(h), a$. 

\end{proof}

\begin{proposition}[\textbf{In MDPs, \OR implies \OP and \RP (thus \ZP)}]
\label{prop:mdp_or}
\end{proposition}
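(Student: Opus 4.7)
The plan is to exploit the fact that in an MDP, \OR collapses to a statement that $\phi$ is essentially injective on the state space, after which both \OP and \ZP follow almost for free.

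First, I would unpack what \OR says in the MDP setting. There, $h$ is replaced by $s$ and $o$ by $s$, so the condition $P(o \mid h) = P_o(o \mid \phi(h))$ becomes $\delta(\cdot = s) = P_o(\cdot \mid \phi(s))$ for every state $s$. Hence for every $s$ the conditional distribution $P_o(\cdot \mid \phi(s))$ is the point mass at $s$, which means there is a deterministic map $g \colon \mathcal Z \to \mathcal S$ with $g(\phi(s)) = s$ for all $s$. In other words, $\phi$ is injective: knowing $\phi(s)$ is informationally equivalent to knowing $s$.

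Next I would derive \OP. For any $s, a, s'$, we have
\begin{align}
P(s' \mid s, a) = P(s' \mid g(\phi(s)), a),
\end{align}
and since the right-hand side depends on $s$ only through $\phi(s)$, we can define $P_o(s' \mid z, a) \defeq P(s' \mid g(z), a)$ and obtain $P(s' \mid s, a) = P_o(s' \mid \phi(s), a)$, which is exactly \OP.

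For \ZP, the cleanest route is to invoke \autoref{prop:op}: in an MDP, the encoder automatically satisfies \Rec (since $z' = \phi(s')$ is a deterministic function of $(\phi(s), a, s')$, or more concretely one can check the independence $z' \ind s \mid \phi(s), a, s'$ trivially because $z'$ is determined by $s'$ alone). Combined with the \OP just proved, \autoref{prop:op} yields \ZP. Alternatively, one can argue directly: $P(z' \mid s, a) = \sum_{s'} P(s' \mid s, a)\, \mathbbm 1[z' = \phi(s')]$ and since $P(s' \mid s, a)$ factors through $\phi(s)$ by the previous step, so does $P(z' \mid s, a)$, giving \ZP.

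I do not anticipate a real obstacle here — the proof is essentially a one-line observation that \OR in an MDP forces $\phi$ to be injective, after which \OP and \ZP reduce to a change of conditioning variable. The only mild care required is distinguishing the MDP reading of \OR ($P(s \mid s) = P(s \mid \phi(s))$, a point-mass condition) from the POMDP reading; this is why the corresponding edge in \autoref{fig:relation} is dashed.
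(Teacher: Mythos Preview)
Your proof is correct and rests on the same core observation as the paper: in an MDP, \OR forces $P(s\mid\phi(s))$ to be the point mass at $s$, so $\phi$ is effectively injective and conditioning on $\phi(s)$ is as good as conditioning on $s$. The paper's execution differs only cosmetically: it establishes \ZP first by directly verifying the conditional independence $(z',s)\ind s\mid\phi(s),a$ via a probability computation (using $P(s\mid\phi(s),a)=\delta(s=s)$), then remarks that \OP follows by the same argument with $s'$ in place of $z'$. You instead extract the left inverse $g$ explicitly, prove \OP first, and derive \ZP either via \autoref{prop:op} or by the direct marginalization you sketch; your direct route is exactly the paper's computation in different notation.
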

\begin{proof}
Assuming $\phi$ satisfies \OR in MDPs, \ie there exists $\psi_s: \mathcal Z \to \mathcal S$ such that $\psi_s(\phi(s)) = s$, thus preserving the state in the representation. By construction, there exists a distribution $P_s: \mathcal Z \times \mathcal A \to \Delta(\mathcal S)$ such that $P_s(s' \mid \phi(s), a) \defeq P(s' \mid \psi_s(\phi(s)), a) = P(s'\mid s,a)$ satisfying the \OP condition. Similarly, through construction, we can show \OR can imply the other conditions in MDPs.
\end{proof}

\subsubsection{Results Related to Multi-Step Conditions}
Below are results on multi-step \RP, \ZP, and \OP, and due to space limit, we do not show these connections in \autoref{fig:relation}. 

\begin{proposition}[\textbf{\ZP is equivalent to multi-step \ZP}]
\label{prop:multi-zp}
For $k\in \mathbb N^+$, define $k$-step \ZP as 
\begin{align}
P(z_{t+k} \mid h_t,a_{t:t+k-1}) = P(z_{t+k} \mid \phi(h_t),a_{t:t+k-1}), \quad \forall h,a,z.
\end{align}
\end{proposition}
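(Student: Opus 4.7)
The plan is to prove both directions separately, with one being essentially vacuous and the other following from a short induction on $k$.

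The ($\Leftarrow$) direction is immediate: instantiating the multi-step condition at $k=1$ recovers \ZP verbatim. So the content lies in the ($\Rightarrow$) direction, which I would prove by induction on $k \geq 1$. The base case $k=1$ is \ZP. For the inductive step, assuming $k$-step \ZP holds, I would marginalize over the intermediate latent state $z_{t+k}$ to write
\begin{align*}
P(z_{t+k+1} \mid h_t, a_{t:t+k}) = \sum_{z_{t+k}} P(z_{t+k+1} \mid z_{t+k}, h_t, a_{t:t+k}) \, P(z_{t+k} \mid h_t, a_{t:t+k}),
\end{align*}
and then simplify each of the two factors.

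For the second factor, since actions are exogenous inputs to the dynamics (\autoref{fig:pgm}), the future action $a_{t+k}$ can be dropped from the conditioning to give $P(z_{t+k} \mid h_t, a_{t:t+k-1})$, and the inductive hypothesis ($k$-step \ZP) then replaces $h_t$ by $\phi(h_t)$. For the first factor, I would apply \ZP at time $t+k$ — which the quantifier ``$\forall h, a$'' in the \ZP definition permits — to get $z_{t+k+1} \ind h_{t+k} \mid z_{t+k}, a_{t+k}$, and then use that $(h_t, a_{t:t+k-1})$ is a deterministic function of $h_{t+k}$, together with the standard fact that conditional independence is preserved under measurable functions of the conditioned variable, to conclude $P(z_{t+k+1} \mid z_{t+k}, h_t, a_{t:t+k}) = P(z_{t+k+1} \mid z_{t+k}, a_{t+k})$. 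Substituting both simplifications back, the resulting sum depends on $h_t$ only through $\phi(h_t)$, which is $(k+1)$-step \ZP.

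The main obstacle is being careful about the conditional independence bookkeeping rather than the marginalization itself: specifically, justifying that the one-step \ZP statement at time $t+k$ can be ``pulled back'' along the projection $h_{t+k} \mapsto (h_t, a_{t:t+k-1})$ and that conditioning on the exogenous future action $a_{t+k}$ does not alter $P(z_{t+k} \mid h_t, a_{t:t+k-1})$. Both are standard consequences of the graphical model in \autoref{fig:pgm} and the transformation property used in \autoref{lm:ind_transform}, so once these are recorded carefully, the induction closes without further calculation.
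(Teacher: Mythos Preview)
Your proof is correct. Both your argument and the paper's establish the nontrivial direction by repeatedly invoking \ZP to replace history-conditioning by latent-state conditioning, but they organize the calculation differently. The paper marginalizes explicitly over \emph{all} intermediate observations $o_{t+1:t+k}$ and latent states $z_{t+1:t+k-1}$, uses the delta $\delta(z_{t+i}=\phi(h_{t+i}))$ to pass between them, applies \ZP step by step from the last transition backward to obtain the product $\prod_i P(z_{t+i}\mid z_{t+i-1},a_{t+i-1})$, and then invokes \ZM (\autoref{prop:zm}) to collapse that product. Your induction instead marginalizes over a single intermediate latent $z_{t+k}$ and lets the inductive hypothesis absorb all earlier steps, working purely at the latent level without ever introducing the intermediate observations or appealing to \ZM separately. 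Your route is more compact and isolates exactly what is needed: the closure of the conditional-independence statement under deterministic functions (the conditional analogue of \autoref{lm:ind_transform}) plus exogeneity of the future action. The paper's unrolling is more computational but has the side benefit of displaying the full factorization of the latent chain.
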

\begin{proof}
As \ZP is $1$-step \ZP, thus multi-step \ZP implies \ZP. 
Now we show that \ZP implies multi-step \ZP. 
{
\small
\begin{align}
&P(z_{t+k} \mid h_t,a_{t:t+k-1}) = \int P(z_{t+1:t+k},o_{t+1:t+k}\mid h_t,a_{t:t+k-1})do_{t+1:t+k}dz_{t+1:t+k-1}\\ 
&=\int \prod_{i=1}^k \delta(z_{t+i}=\phi(h_{t+i}))P(o_{t+i}\mid h_{t+i-1},a_{t+i-1})do_{t+1:t+k}dz_{t+1:t+k-1}\\
&=\int \left(\int \delta(z_{t+k}=\phi(h_{t+k})) P(o_{t+k}\mid h_{t+k-1},a_{t+k-1}) do_{t+k}\right) \\
&\quad \prod_{i=1}^{k-1} \delta(z_{t+i}=\phi(h_{t+i}))P(o_{t+i}\mid h_{t+i-1},a_{t+i-1})do_{t+1:t+k-1}dz_{t+1:t+k-1}\\ 
&=\int P(z_{t+k} \mid h_{t+k-1},a_{t+k-1})\prod_{i=1}^{k-1} \delta(z_{t+i}=\phi(h_{t+i}))P(o_{t+i}\mid h_{t+i-1},a_{t+i-1})do_{t+1:t+k-1}dz_{t+1:t+k-1}\\ 
&\stackrel\ZP= \int P(z_{t+k} \mid \phi(h_{t+k-1}),a_{t+k-1})\prod_{i=1}^{k-1} \delta(z_{t+i}=\phi(h_{t+i}))P(o_{t+i}\mid h_{t+i-1},a_{t+i-1})do_{t+1:t+k-1}dz_{t+1:t+k-1}\\ 
&= \int P(z_{t+k} \mid z_{t+k-1},a_{t+k-1})\prod_{i=1}^{k-1} \delta(z_{t+i}=\phi(h_{t+i}))P(o_{t+i}\mid h_{t+i-1},a_{t+i-1})do_{t+1:t+k-1}dz_{t+1:t+k-1}\\ 
&=\dots \\
&= \int \prod_{i=2}^{k} P(z_{t+i} \mid z_{t+i-1},a_{t+i-1}) P(z_{t+1}\mid h_t,a_t) dz_{t+1:t+k-1}\\
&\stackrel\ZP= \int \prod_{i=2}^{k} P(z_{t+i} \mid z_{t+i-1},a_{t+i-1}) P(z_{t+1}\mid \phi(h_t),a_t) dz_{t+1:t+k-1}\\
&\stackrel\ZM = \int P(z_{t+1:t+k} \mid \phi(h_t),a_{t:t+i-1}) dz_{t+1:t+k-1}\\ 
&= P(z_{t+k} \mid \phi(h_t),a_{t:t+i-1}) .
\end{align}
}

\end{proof}

\begin{proposition}[\textbf{\ZP and \RP imply multi-step \RP}]
\label{prop:multi-rp}
For $k\in \mathbb N^+$, define $k$-step \RP as 
\begin{align}
\E{}{r_{t+k} \mid h_t,a_{t:t+k}} = \E{}{r_{t+k} \mid \phi(h_t),a_{t:t+k}}, \quad \forall h,a
\end{align}
\end{proposition}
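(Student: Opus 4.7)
The plan is to condition on the intermediate history $h_{t+k}$, then peel off one application of \RP together with the multi-step version of \ZP that was already established in \autoref{prop:multi-zp}. Intuitively, the single-step \RP only tells us about the reward one step after the current history, so to reach $r_{t+k}$ we first propagate information forward through the latent dynamics (multi-step \ZP) and then invoke \RP at the final step.

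Concretely, I would proceed as follows. First, using the tower property and the fact that $r_{t+k}$ depends on the trajectory only through $(h_{t+k}, a_{t+k})$, write
\begin{align}
\E{}{r_{t+k}\mid h_t, a_{t:t+k}}
= \int P(h_{t+k}\mid h_t, a_{t:t+k-1})\,\E{}{r_{t+k}\mid h_{t+k}, a_{t+k}}\,dh_{t+k}.
\end{align}
Second, apply \RP at step $t+k$ to replace the inner conditional expectation by $R_z(\phi(h_{t+k}), a_{t+k})$, so the integrand depends on $h_{t+k}$ only through $z_{t+k}=\phi(h_{t+k})$. Third, marginalize $h_{t+k}$ down to $z_{t+k}$ to obtain
\begin{align}
\E{}{r_{t+k}\mid h_t, a_{t:t+k}}
= \int P(z_{t+k}\mid h_t, a_{t:t+k-1})\,R_z(z_{t+k}, a_{t+k})\,dz_{t+k}.
\end{align}
Fourth, apply multi-step \ZP from \autoref{prop:multi-zp} to replace $P(z_{t+k}\mid h_t, a_{t:t+k-1})$ by $P(z_{t+k}\mid \phi(h_t), a_{t:t+k-1})$. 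The resulting expression is a function only of $\phi(h_t)$ and $a_{t:t+k}$, which is exactly multi-step \RP.

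The main obstacle I expect is keeping the conditioning tidy in step three, in particular justifying that one may drop the action $a_{t+k}$ when computing $P(h_{t+k}\mid h_t, a_{t:t+k-1})$ (since $a_{t+k}$ is chosen after $h_{t+k}$ is realized and does not influence the distribution of $h_{t+k}$ given $h_t, a_{t:t+k-1}$), and symmetrically that $z_{t+k}$ is conditionally independent of $a_{t+k}$ given $(h_t, a_{t:t+k-1})$. Once this is laid out carefully, the remainder is just an application of \autoref{prop:multi-zp} and \RP at the terminal step; no new machinery is needed beyond what is already in the paper.
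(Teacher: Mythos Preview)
Your proposal is correct and follows essentially the same route as the paper: condition on the intermediate history (the paper integrates over $o_{t+1:t+k}$, which is equivalent to your integration over $h_{t+k}$ since $h_t$ and $a_{t:t+k-1}$ are fixed), apply \RP at step $t+k$ to reduce the integrand to a function of $z_{t+k}$, marginalize to $P(z_{t+k}\mid h_t,a_{t:t+k-1})$, and then invoke the multi-step \ZP of \autoref{prop:multi-zp}. The only cosmetic difference is that the paper makes the marginalization step explicit via a Dirac delta $\delta(z_{t+k}=\phi(h_{t+k}))$, but the logic is identical.
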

\begin{proof}
\begin{align}
&\E{}{r_{t+k} \mid h_t,a_{t:t+k}} 
= \int P(o_{t+1:t+k} \mid h_t,a_{t:t+k-1}) \E{}{r_{t+k} \mid h_{t+k},a_{t+k}}do_{t+1:t+k}\\
&\stackrel\RP=  \int P(o_{t+1:t+k} \mid h_t,a_{t:t+k-1}) R_z(\phi(h_{t+k}),a_{t+k})do_{t+1:t+k}\\
&=  \int P(o_{t+1:t+k} \mid h_t,a_{t:t+k-1}) \delta(z_{t+k}= \phi(h_{t+k}))R_z(z_{t+k},a_{t+k})do_{t+1:t+k}dz_{t+k}\\
&=  \int\left(\int P(o_{t+1:t+k} \mid h_t,a_{t:t+k-1}) \delta(z_{t+k}= \phi(h_{t+k}))do_{t+1:t+k}\right) R_z(z_{t+k},a_{t+k})dz_{t+k}\\
&= \int P(z_{t+k} \mid h_t,a_{t:t+k-1})R_z(z_{t+k},a_{t+k})dz_{t+k}\\
&\stackrel{k\text{-step }\ZP}= \int P(z_{t+k} \mid \phi(h_t),a_{t:t+k-1})R_z(z_{t+k},a_{t+k})dz_{t+k}\\
&=\E{}{r_{t+k} \mid \phi(h_t),a_{t:t+k}},
\end{align}
where $k$-step \ZP is implied by \ZP by \autoref{prop:multi-zp}.
\end{proof}

\begin{proposition}[\textbf{\OP implies multi-step \OP in MDPs, but not POMDPs}]
\label{prop:multi-op}
For $k\in \mathbb N^+$, define $k$-step \OP as 
\begin{align}
P(o_{t+k} \mid h_t,a_{t:t+k-1}) = P(o_{t+k} \mid \phi(h_t),a_{t:t+k-1}), \quad \forall h,a,o.
\end{align}
\end{proposition}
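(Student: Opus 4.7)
The plan is to prove the two parts of the proposition separately, since they have very different character.

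For the MDP direction (\OP $\Rightarrow$ multi-step \OP), I would proceed by induction on $k$. The base case $k=1$ is exactly the \OP condition. For the inductive step, I would marginalize over $s_{t+k}$ and invoke the Markov property of the MDP, which collapses the dependence of $s_{t+k+1}$ on everything except $s_{t+k}, a_{t+k}$ and lets $s_{t+k}$ be decoupled from $a_{t+k}$:
\begin{align*}
P(s_{t+k+1} \mid s_t, a_{t:t+k}) = \sum_{s_{t+k}} P(s_{t+k+1} \mid s_{t+k}, a_{t+k})\, P(s_{t+k} \mid s_t, a_{t:t+k-1}).
\end{align*}
I would then substitute \OP into the one-step factor and the inductive hypothesis into the $k$-step factor, so that both factors, and hence the sum, depend on $s_t$ only through $\phi(s_t)$. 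Running the identical marginalization starting from $P(s_{t+k+1} \mid \phi(s_t), a_{t:t+k})$ and again applying the Markov property together with \OP produces the same expression term by term, which completes the induction.

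For the POMDP direction (\OP $\not\Rightarrow$ multi-step \OP), I would exhibit an explicit counterexample. Consider a POMDP with four hidden states $\{A_1, A_2, B_1, B_2\}$, a single (irrelevant) action, the deterministic cycle $A_1 \to A_2 \to B_1 \to B_2 \to A_1$, uniform initial distribution, and aliased observations $o(A_1)=o(A_2)=\alpha$, $o(B_1)=o(B_2)=\beta$. A short belief computation gives $P(o_2 \mid o_1 = \alpha) = P(o_2 \mid o_1 = \beta) = (1/2, 1/2)$ on $(\alpha, \beta)$, so the constant encoder $\phi \equiv c$ trivially satisfies \OP with $P_o(o_2 \mid c) = (1/2, 1/2)$. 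However, the next belief updates push the mass onto a single pair of hidden states: $P(o_3 \mid o_1 = \alpha)$ concentrates on $\beta$ while $P(o_3 \mid o_1 = \beta)$ concentrates on $\alpha$. No function of the constant $\phi(h_1)$ can reproduce both of these opposite distributions, so $2$-step \OP fails.

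The main obstacle is constructing the POMDP counterexample: the inductive MDP argument is routine once one recognises that the Markov property lets \OP propagate multi-step. The subtle point on the POMDP side is that multi-step prediction can depend on information about the hidden state that is present in the full history $h_t$ but is erased by an encoder $\phi$ which nevertheless still matches the one-step observation distribution; the cyclic-with-aliasing construction is a clean way to realise this, since it uniformises the one-step marginal but keeps the two-step marginal deterministic and history-dependent.
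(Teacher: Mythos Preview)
Your MDP argument is correct and close to the paper's: the paper uses a single chain-rule factorisation
\[
P(s_{t+k}\mid s_t, a_{t:t+k-1}) = \int P(s_{t+1}\mid s_t,a_t)\prod_{i=2}^{k} P(s_{t+i}\mid s_{t+i-1},a_{t+i-1})\, ds_{t+1:t+k-1}
\]
and applies \OP once to the first factor, whereas you induct on $k$. Both hinge on the Markov property in the same way. One small slip: in your inductive step the one-step factor $P(s_{t+k+1}\mid s_{t+k}, a_{t+k})$ does not involve $s_t$ at all, so there is nothing to ``substitute \OP into'' --- only the inductive hypothesis on the $k$-step factor is doing work.

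Your POMDP counterexample, however, is flawed. The \OP condition must hold for \emph{all} histories $h$, not just those of length~$1$, and your constant encoder fails \OP already at $t=2$: with $h_2 = (\alpha,\cdot,\alpha)$ the hidden state is pinned to $A_2$ and $P(o_3\mid h_2)=\delta_\beta$, whereas with $h_2 = (\beta,\cdot,\alpha)$ the hidden state is $A_1$ and $P(o_3\mid h_2)=\delta_\alpha$; no single $P_o(\cdot\mid c)$ matches both. There is also a structural obstruction: the constant encoder trivially satisfies \Rec, and \OP $+$ \Rec actually \emph{does} imply multi-step \OP in POMDPs (the same chain-rule/induction argument goes through once \Rec lets you write $\phi(h_{t+1})$ as a function of $\phi(h_t),a_t,o_{t+1}$). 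So any genuine counterexample must use an encoder that violates \Rec. The paper handles this direction by invoking the counterexample of \citet[Theorem~4.10]{castro2009equivalence}: the encoder induced by the weak-belief-bisimulation equivalence relation satisfies \OP by construction, but two histories with the same one-step prediction can have extensions $(h,a,o')$ with different one-step predictions --- i.e., \Rec fails --- which then produces a failure of $2$-step \OP.
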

\begin{proof}
We first show the result in MDPs. Assume a state encoder $\phi$ satisfies \OP, 
\begin{align}
P(s_{t+k} \mid s_t,a_{t:t+k-1}) 
&= \int P(s_{t+1:t+k} \mid s_t,a_{t:t+k-1}) ds_{t+1:t+k-1} \\
&\stackrel{\text{MDPs}}= \int P(s_{t+1}\mid s_t,a_t) \prod_{i=2}^k P(s_{t+i}\mid s_{t+i-1},a_{t+i-1}) ds_{t+1:t+k-1} \\
&\stackrel\OP= \int P(s_{t+1}\mid \phi(s_t),a_t) \prod_{i=2}^k P(s_{t+i}\mid s_{t+i-1},a_{t+i-1}) ds_{t+1:t+k-1} \\
&\stackrel{\text{MDPs}}=\int P(s_{t+1:t+k} \mid \phi(s_t),a_{t:t+k-1}) ds_{t+1:t+k-1} \\
&= P(s_{t+k} \mid \phi(s_t),a_{t:t+k-1}) .
\end{align}

However, in POMDPs, \OP does not imply multi-step \OP. This can be shown by a counterexample in \citet[Theorem~4.10]{castro2009equivalence}, where the weak belief bisimulation relation corresponds to single-step \OP and \RP, while trajectory equivalence corresponds to multi-step \OP and \RP. The idea is to show that for two histories $h_t^1$ and $h_t^2$, if $P(o_{t+1}\mid h_t^1, a_t) = P(o_{t+1}\mid h_t^2, a_t), \forall o_{t+1}, a_t$, it does not imply that $P(o_{t+2}\mid h_t^1, a_t, o_{t+1}, a_{t+1}) = P(o_{t+2}\mid h_t^2, a_t, o_{t+1}, a_{t+1}), \forall o_{t+1:t+2}, a_{t:t+1}$. 
\end{proof}

\subsubsection{Results Related to $\phi_{Q^*}$}

\begin{proof}[Proof sketch of \textbf{\ZP + \RP ($\phi_L$) imply $\phi_{Q^*}$}] 
To show $Q^*(h,a) = Q^*_z(\phi_L(h),a),\forall h,a$, please see~\citep[Theorem~5 and Theorem~25]{subramanian2022approximate} for finite-horizon and infinite-horizon POMDPs, respectively. For the approximate version, please see~\citep[Theorem~9 and Theorem~27]{subramanian2022approximate}. 
By definition, $Q^*(h,a) = Q^*_z(\phi_L(h),a),\forall h,a$ implies that $\phi_L$ is a kind of $\phi_{Q^*}$. 
\end{proof}

\begin{proof}[Proof of \textbf{\autoref{thm:EG_ZP_implies_Er} (\ZP + $\phi_{Q^*}$ imply \RP)}] 

Suppose $\phi$ satisfies \ZP and we train model-free RL with value parameterized by $\mathcal Q(\phi(h),a)$ to satisfy the Bellman optimality equation:
\begin{align}
\mathcal Q(\phi(h_t),a_t) = 
\begin{cases}
\E{}{r_t\mid h_t,a_t} &  t=T,  \\
\E{}{r_t\mid h_t, a_t} + \gamma \E{o_{t+1}\sim P(\mid h_t,a_t)}{\max\limits_{a_{t+1}} \mathcal Q(\phi(h_{t+1}),a_{t+1})} & \mathrm{else},
\end{cases}
\end{align}
where the case $t=T$ only applies to finite-horizon problems (the same below). 
This is equivalent to say that $\mathcal Q(\phi(h_t),a_t) = Q^*(h_t,a_t), \forall h_t,a_t$, where $Q^*$ satisfies the Bellman optimality equation, too:
\begin{align}
Q^*(h_t,a_t) = 
\begin{cases}
\E{}{r_t\mid h_t,a_t} &  t=T,   \\
\E{}{r_t\mid h_t,a_t} + \gamma \E{o_{t+1}\sim P(\mid h_t,a_t)}{\max\limits_{a_{t+1}} Q^*(h_{t+1},a_{t+1})} & \mathrm{else}.
\end{cases}
\end{align}

Now we can construct an abstract MDP with $\phi$. The latent transition matches due to \ZP. The latent reward function is purely defined by latent value and latent transition\footnote{In the main paper, we omit the finite-horizon case due to space limit.}:
\begin{align}
\mathcal R_z(z_t,a_t) &\defeq \begin{cases}
\mathcal Q(z_t,a_t) & t=T,  \\
\mathcal Q(z_t,a_t) - \gamma 
\E{z_{t+1} \sim P(\mid z_t,a_t)}{\max\limits_{a_{t+1}} \mathcal Q(z_{t+1}, a_{t+1})} & \mathrm{else}.
\end{cases}
\end{align}

We want to show \RP condition: 
$
\mathcal R_z(\phi(h_t),a_t) = \E{}{r_t \mid h_t,a_t}, \forall h_t,a_t
$.

Here is our proof. 
Recall that the grounded reward function can also be derived reversely by $Q^*$:
\begin{align}
\E{}{r_t\mid h_t,a_t} &\defeq \begin{cases}
Q^*(h_t,a_t) & t=T,  \\
Q^*(h_t,a_t) - \gamma 
\E{o_{t+1} \sim P(\mid h_t,a_t)}{\max\limits_{a_{t+1}} Q^*(h_{t+1}, a_{t+1})} & \mathrm{else}.
\end{cases}
\end{align}
If the problem is finite-horizon with horizon $T$ and when $t=T$, \RP holds due to $\mathcal Q(\phi(h_T),a_T) = Q^*(h_T,a_T)$. 

Now consider general case when $t < T$ in finite-horizon ($\gamma=1$) and any $t$ in infinite-horizon ($\gamma < 1$). Due to $Q$-value match ($\mathcal Q(\phi(h_t),a_t) = Q^*(h_t,a_t)$), it is equivalent to show that 
\begin{align}
\E{o_{t+1}\sim P(\mid h_t,a_t)}{\max_{a_{t+1}}Q^*(h_{t+1},a_{t+1})} = \E{z_{t+1} \sim P(\mid \phi(h_t),a_t)}{\max_{a_{t+1}} \mathcal Q(z_{t+1}, a_{t+1})}, \quad \forall h_{t}, a_{t}, t.
\end{align}

Proof for this:
\begin{align}
\LHS &\stackrel{\phi_{Q^*}}= \E{o_{t+1}\sim P(\mid h_t,a_t)}{\max_{a_{t+1}}\mathcal Q(\phi(h_{t+1}),a_{t+1})} \\
&= \int \left(\int P(o_{t+1} \mid h_{t},a_{t}) \delta (z_{t+1} = \phi(h_{t+1})) do_{t+1}\right)\max_{a_{t+1}} \mathcal Q(z_{t+1},a_{t+1}) dz_{t+1} \\
&= \int P(z_{t+1} \mid h_{t},a_{t}) \max_{a_{t+1}} \mathcal Q(z_{t+1},a_{t+1}) dz_{t+1}\\
&\stackrel{\ZP}= \int P(z_{t+1} \mid \phi(h_{t}),a_{t})  \max_{a_{t+1}} \mathcal Q(z_{t+1},a_{t+1}) dz_{t+1} = \RHS.
\end{align}

\end{proof}

\begin{lemma}[\textbf{Integral probability metric}~\citep{subramanian2022approximate}]
\label{lm:ipm}
Given by a function class $\mathcal F$, integral probability metric (IPM) between two distributions $\mathbb P, \mathbb Q \in \Delta(\mathcal Z)$ is 
\begin{align}
\mathcal D_{\mathcal F} (\mathbb P,\mathbb Q) = \sup_{f\in \mathcal F} |\E{x\sim \mathbb P}{f(x)} - \E{y\sim \mathbb Q}{f(y)} |.
\end{align}
For any real-valued function $g$, the following inequality is derived by definition:
\begin{align}
\label{eq:minkowski}
|\E{x\sim \mathbb P}{g(x)} - \E{y\sim \mathbb Q}{g(y)} | \le \rho_{\mathcal F}(g) \mathcal D_{\mathcal F} (\mathbb P,\mathbb Q),
\end{align}
where $\rho_{\mathcal F}(g) \defeq \inf \{ \rho\in \R_+ \mid \rho^{-1} g \in \mathcal F\}$ is a Minkowski functional.
\end{lemma}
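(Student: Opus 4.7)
The plan is to derive the claimed inequality directly from the definition of the IPM by exploiting its positive homogeneity in the test function. The key observation is that if $\rho^{-1} g \in \mathcal F$ for some $\rho > 0$, then substituting $f = \rho^{-1} g$ into the supremum defining $\mathcal D_{\mathcal F}$ yields an immediate bound, and linearity of expectation lets us pull the factor $\rho^{-1}$ outside the absolute value.

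More precisely, I would proceed as follows. Fix any $\rho \in \mathbb{R}_+$ such that $\rho^{-1} g \in \mathcal F$; call this set of admissible $\rho$ the set $A(g)$. By the defining supremum in the lemma's display, we have
\begin{align*}
\bigl|\E{x \sim \mathbb P}{\rho^{-1} g(x)} - \E{y \sim \mathbb Q}{\rho^{-1} g(y)}\bigr| \le \mathcal D_{\mathcal F}(\mathbb P, \mathbb Q).
\end{align*}
Pulling $\rho^{-1}$ out of the expectations and the absolute value by linearity and positivity gives
\begin{align*}
\bigl|\E{x \sim \mathbb P}{g(x)} - \E{y \sim \mathbb Q}{g(y)}\bigr| \le \rho\, \mathcal D_{\mathcal F}(\mathbb P, \mathbb Q).
\end{align*}
Since this holds for every $\rho \in A(g)$, taking the infimum over $\rho \in A(g)$ on the right-hand side yields exactly the asserted bound with the Minkowski functional $\rho_{\mathcal F}(g)$.

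The only subtlety, which is more bookkeeping than a real obstacle, is the degenerate case $A(g) = \emptyset$: if no positive scaling puts $g$ inside $\mathcal F$, then by the standard convention $\inf \emptyset = +\infty$ in $\mathbb R_+ \cup \{+\infty\}$, so $\rho_{\mathcal F}(g) = +\infty$ and the right-hand side is vacuously $+\infty$, making the inequality trivially true. This needs a single line to dispatch. One should also note implicitly that both expectations are assumed finite for $g$ (otherwise the left-hand side is undefined), and that the argument does not require $\mathcal F$ to be a cone or to contain the zero function; it merely uses that $\rho^{-1} g$ lies in $\mathcal F$ for each admissible $\rho$. Thus the lemma follows as an essentially one-line consequence of the definition, and the main work lies in recognizing the homogeneous rescaling trick.
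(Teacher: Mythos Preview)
Your proposal is correct and matches the paper's treatment: the paper does not give a separate proof but states the inequality as ``derived by definition,'' and your rescaling argument is precisely the one-line expansion of that phrase. Your handling of the degenerate case $A(g) = \emptyset$ is a nice addition that the paper omits.
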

\begin{remark} Some examples include:  
\begin{itemize}
    \item Total Variance (TV) distance is an IPM defined by $\mathcal F_{\text{TV}} = \{f:\|f\|_\infty \le 1\}$.
    \item Wasserstein (W) distance is an IPM defined by $\mathcal F_{\text{W}} = \{f:\|f\|_L \le 1\}$. 
    \item KL divergence is not an IPM, but is an upper bound of TV distance by Pinsker's inequality:
    \begin{align}
        \mathcal D_{\mathcal F_{\text{TV}}} (\mathbb P,\mathbb Q) \le \sqrt{2\kl(\mathbb P \mid\mid \mathbb Q)}.
    \end{align}
\end{itemize}
\end{remark}

\begin{theorem}[\textbf{Approximate version of \autoref{thm:EG_ZP_implies_Er} (approximate \ZP and approximate $\phi_{Q^*}$ imply approximate $\phi_L$)}]
\label{thm:approx}
Suppose the encoder $\phi$ satisfies \textbf{approximate} \ZP (AZP) and we train model-free RL with value parameterized by $\mathcal Q(\phi(h),a)$ to \textbf{approximate} $Q^*(h,a)$, namely: $\forall t,h_t,a_t$, 
\begin{align}
&\exists P_{z}: \mathcal Z \times \mathcal A \to \Delta(\mathcal Z), \quad \st\quad \mathcal D_{\mathcal F} (P(z_{t+1} \mid h_t,a_t), P_z(z_{t+1}\mid \phi(h_t),a_t)) \le \delta_t ,\tag{AZP} \\
&| Q^*(h_t,a_t) - \mathcal Q(\phi(h_t),a_t) | \le \alpha_t. \tag{Approx. $\phi_{Q^*}$}
\end{align}
where $\mathcal D_{\mathcal F}$ is an IPM. Under these conditions, we can construct a latent reward function:
\begin{align}
\mathcal R_z(z_t,a_t) \defeq \begin{cases}
\mathcal Q(z_t,a_t) & t=T , \\
\mathcal Q(z_t,a_t) - \gamma 
\E{z_{t+1} \sim P_z(\mid z_t,a_t)}{\max\limits_{a_{t+1}} \mathcal Q(z_{t+1}, a_{t+1})}  & \mathrm{else} ,
\end{cases}
\end{align}
such that 
\begin{align}
\label{eq:rew_bound}
&|\E{}{r_t \mid h_t,a_t} -   \mathcal R_z(\phi(h_t),a_t) | \le \epsilon_t, \quad \forall t, h_t, a_t, \tag{ARP}\\
&\text{where}\quad \epsilon_t = \begin{cases}
\alpha_T & t = T, \\ 
\alpha_t + \gamma (\alpha_{t+1} +  \rho_{\mathcal F}(\mathcal V_{t+1}) \delta_t) & \mathrm{else},
\end{cases}\\
&\quad\quad \quad \mathcal V(z_t) = \max_{a_{t}} \mathcal Q(z_{t}, a_{t}),
\end{align}
where $\mathcal V_{t+1}$ is the latent state-value function $\mathcal V$ at step $t+1$. 
\end{theorem}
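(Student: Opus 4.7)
The plan is to mirror the exact-case argument in \autoref{thm:EG_ZP_implies_Er} but carefully propagate the three sources of error -- the value error at step $t$, the value error at step $t+1$ (which enters through the bootstrap), and the latent transition error $\delta_t$ coming from AZP -- via the triangle inequality and the IPM inequality (\autoref{lm:ipm}). The base case $t=T$ is immediate: by definition $\mathcal R_z(\phi(h_T),a_T)=\mathcal Q(\phi(h_T),a_T)$ and $\mathbb E[R(h_T,a_T)\mid h_T,a_T]=Q^*(h_T,a_T)$, so the bound is just the approximate $\phi_{Q^*}$ assumption, giving $\epsilon_T=\alpha_T$.

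For $t<T$ (and for all $t$ in the infinite-horizon case), I would subtract the defining equation of $\mathcal R_z(\phi(h_t),a_t)$ from the Bellman identity for $\mathbb E[R(h_t,a_t)\mid h_t,a_t]=Q^*(h_t,a_t)-\gamma\,\mathbb E_{o_{t+1}}[\max_{a_{t+1}} Q^*(h_{t+1},a_{t+1})]$, so that
\begin{align*}
&\bigl|\mathbb E[R(h_t,a_t)\mid h_t,a_t]-\mathcal R_z(\phi(h_t),a_t)\bigr| \\
&\quad\le \bigl|Q^*(h_t,a_t)-\mathcal Q(\phi(h_t),a_t)\bigr| + \gamma\bigl|A_t - B_t\bigr|,
\end{align*}
where $A_t=\mathbb E_{o_{t+1}\sim P(\cdot\mid h_t,a_t)}[\max_{a_{t+1}} Q^*(h_{t+1},a_{t+1})]$ and $B_t=\mathbb E_{z_{t+1}\sim P_z(\cdot\mid \phi(h_t),a_t)}[\mathcal V(z_{t+1})]$. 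The first term is bounded by $\alpha_t$ directly. For the second, insert the intermediate quantity $C_t=\mathbb E_{o_{t+1}\sim P(\cdot\mid h_t,a_t)}[\mathcal V(\phi(h_{t+1}))]$ and apply the triangle inequality to get $|A_t-B_t|\le|A_t-C_t|+|C_t-B_t|$.

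The term $|A_t-C_t|$ is handled by pulling the absolute value inside the expectation and using the elementary inequality $|\max_a f(a)-\max_a g(a)|\le \max_a|f(a)-g(a)|$ together with the approximate $\phi_{Q^*}$ hypothesis at step $t{+}1$, yielding $|A_t-C_t|\le \alpha_{t+1}$. The main obstacle is the term $|C_t-B_t|$: here I would first observe that $\mathcal V(\phi(h_{t+1}))$ depends on $h_{t+1}$ only through $\phi(h_{t+1})$, so $C_t$ rewrites as $\mathbb E_{z_{t+1}\sim P(\cdot\mid h_t,a_t)}[\mathcal V(z_{t+1})]$ (the pushforward law of $\phi(h_{t+1})$ under $o_{t+1}\sim P(\cdot\mid h_t,a_t)$). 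Now $C_t-B_t$ is exactly the difference of expectations of the same function $\mathcal V$ under two different distributions on $\mathcal Z$, so I can invoke \autoref{lm:ipm} (specifically \autoref{eq:minkowski}) with $g=\mathcal V_{t+1}$ to conclude $|C_t-B_t|\le \rho_{\mathcal F}(\mathcal V_{t+1})\,\mathcal D_{\mathcal F}(P(\cdot\mid h_t,a_t),P_z(\cdot\mid \phi(h_t),a_t))\le \rho_{\mathcal F}(\mathcal V_{t+1})\,\delta_t$ by AZP. Summing the three pieces gives $\epsilon_t=\alpha_t+\gamma(\alpha_{t+1}+\rho_{\mathcal F}(\mathcal V_{t+1})\delta_t)$, matching the claimed bound. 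The only subtlety worth double-checking is that the Minkowski functional $\rho_{\mathcal F}(\mathcal V_{t+1})$ is finite, which follows from $\mathcal V_{t+1}$ being bounded (in finite-horizon) or bounded by $R_{\max}/(1-\gamma)$ (in infinite-horizon) whenever $\mathcal F$ contains all bounded measurable functions up to rescaling.
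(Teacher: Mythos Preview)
Your proposal is correct and follows essentially the same route as the paper's proof: the same triangle-inequality decomposition into the $\alpha_t$ term and the bootstrap term, the same intermediate quantity $C_t=\mathbb E_{o_{t+1}}[\mathcal V(\phi(h_{t+1}))]$, the max-difference inequality to extract $\alpha_{t+1}$, the pushforward rewrite of $C_t$ as an expectation over $P(z_{t+1}\mid h_t,a_t)$, and the IPM bound (\autoref{eq:minkowski}) to extract $\rho_{\mathcal F}(\mathcal V_{t+1})\delta_t$. Your closing remark on finiteness of $\rho_{\mathcal F}(\mathcal V_{t+1})$ is an extra sanity check the paper defers to its post-theorem remark.
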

\begin{proof}
For the case of $t=T$ in finite-horizon, ARP holds by the assumption of approx. $\phi_{Q^*}$. Now we discuss generic case of $t$. 
Recall the reward and latent reward can be rewritten as: 
\begin{align}
\E{}{r_t\mid h_t,a_t} &= Q^*(h_t,a_t) - \gamma 
\E{o_{t+1} \sim P(\mid h_t,a_t)}{\max_{a_{t+1}} Q^*(h_{t+1}, a_{t+1})},  \\
\mathcal R_z(z_t,a_t) &= \mathcal Q(z_t,a_t) - \gamma 
\E{z_{t+1} \sim P(\mid z_t,a_t)}{\max_{a_{t+1}} \mathcal Q(z_{t+1}, a_{t+1})} .
\end{align}
Therefore, the reward gap is upper bounded: 
\begin{align}
&|\E{}{r_t\mid h_t,a_t} -   \mathcal R_z(\phi(h_t),a_t) | \\ 
\label{ln:triangle}
&\le \left| Q^*(h_t,a_t) - \mathcal Q(\phi(h_t),a_t)\right| \\ 
& + \gamma \left|\E{o_{t+1} \sim P(\mid h_t,a_t)}{\max_{a_{t+1}} Q^*(h_{t+1}, a_{t+1})} -  \E{z_{t+1} \sim P(\mid \phi(h_t),a_t)}{\max_{a_{t+1}} \mathcal Q(z_{t+1}, a_{t+1})} \right| \\ 
\label{ln:triangle2}
&\le \alpha_t + \gamma \left|\E{o_{t+1} \sim P(\mid h_t,a_t)}{\max_{a_{t+1}} Q^*(h_{t+1}, a_{t+1}) - \max_{a_{t+1}} \mathcal Q(\phi(h_{t+1}), a_{t+1})} \right| \\ 
&+ \gamma \left|\E{o_{t+1} \sim P(\mid h_t,a_t)}{\max_{a_{t+1}} \mathcal Q(\phi(h_{t+1}), a_{t+1})} -  \E{z_{t+1} \sim P(\mid \phi(h_t),a_t)}{\max_{a_{t+1}} \mathcal Q(z_{t+1}, a_{t+1})} \right| \\
\label{ln:max}
&\le \alpha_t + \gamma \E{o_{t+1} \sim P(\mid h_t,a_t)}{\max_{a_{t+1}} \left|Q^*(h_{t+1}, a_{t+1}) - \mathcal Q(\phi(h_{t+1}), a_{t+1})\right|} \\ 
&+ \gamma \left|\E{z_{t+1} \sim P(\mid h_t,a_t)}{\max_{a_{t+1}} \mathcal Q(z_{t+1}, a_{t+1})} -  \E{z_{t+1} \sim P(\mid \phi(h_t),a_t)}{\max_{a_{t+1}} \mathcal Q(z_{t+1}, a_{t+1})} \right| \\ 
\label{ln:induction}
&\le \alpha_t + \gamma \alpha_{t+1} + \gamma \left|\E{z_{t+1} \sim P(\mid h_t,a_t)}{\mathcal V(z_{t+1})} -  \E{z_{t+1} \sim P(\mid \phi(h_t),a_t)}{\mathcal V(z_{t+1})} \right| \\ 
\label{ln:ipm}
&\le \alpha_t + \gamma (\alpha_{t+1} +  \rho_{\mathcal F}(\mathcal V_{t+1}) \delta_t),
\end{align}
where \autoref{ln:triangle} is by triangle inequality, \autoref{ln:triangle2} is by triangle inequality and approx. $\phi_{Q^*}$, \autoref{ln:max} is by the maximum-absolute-difference inequality $|\max f(x) - \max g(x) | \le \max |f(x) - g(x)|$, \autoref{ln:induction} is by approx. $\phi_{Q^*}$, and Ln.~\ref{ln:ipm} is by the property of IPM (\autoref{eq:minkowski} in \autoref{lm:ipm}) and AZP. 
\end{proof}
\begin{remark}
In the infinite-horizon problem, assume $\delta_t = \delta$ and $\alpha_t=\alpha$ for any $t$, and $\mathcal D_{\mathcal F}$ is Wasserstein distance. Furthermore, assume the latent reward $\mathcal R_z(z,a)$ is $L_r$-Lipschitz and the latent transition $P_z(z'\mid z,a)$ is $L_p$-Lipschitz, then by~\citep[Lemma 44]{subramanian2022approximate}, if $\gamma L_p < 1$, 
\begin{align}
\rho_{\mathcal F}(\mathcal V_{t+1}) = \|\mathcal V\|_L \le \frac{L_r}{1-\gamma L_p}, \quad \forall t.
\end{align}
Thus, the reward difference bound can be rewritten as
\begin{align}
\epsilon \le (1 + \gamma)\alpha +  \frac{\gamma L_r \delta}{1-\gamma L_p} .
\end{align}
\end{remark}

\section{Objectives and Optimization in Self-Predictive RL}
\label{app:optim}

\begin{proof}[Proof of \textbf{\autoref{prop:l2}}]
    First, we show $\mathcal L_{\ZP,\ell}(\phi,\theta; h,a)  \le J_{\ell}(\phi,\theta,\phi;h,a), \forall h,a$. 
    
    \begin{align}
    &J_{\ell}(\phi,\theta,\phi;h,a) - \mathcal L_{\ZP,\ell}(\phi,\theta; h,a)  \\
    &= \E{o'\sim P(\mid h,a)}{\|g_\theta(f_\phi(h),a)-f_{\phi}(h')\|_2^2} - \| g_\theta(f_\phi(h),a) - \E{o'\sim P(\mid h,a)}{f_{\phi}(h')}\|_2^2 \\ 
    &= \cancel{\|g_\theta(f_\phi(h),a)\|_2^2} - \cancel{2\E{o'}{\langle g_\theta(f_\phi(h),a), f_{\phi}(h')\rangle}} + \E{o'}{\|f_{\phi}(h') \|_2^2} \\ 
    &\quad -  \cancel{\|g_\theta(f_\phi(h),a)\|_2^2} + \cancel{2 \langle g_\theta(f_\phi(h),a), \E{o'}{f_{\phi}(h')}\rangle} - \|\E{o'}{f_{\phi}(h')} \|_2^2\\ 
    &= \E{o'}{\|f_{\phi}(h') - \E{o'}{f_{\phi}(h')}\|_2^2} \ge 0. 
    \end{align}
    The inner product terms are cancelled due to the fact that inner product is bilinear. The equality holds when $P(o'\mid h,a)$ is deterministic. 
    
    Second, the ideal objective using $\ell_2$ distance $\mathcal L_{\ZP,\ell}(\phi,\theta; h,a)=\| g_\theta(f_\phi(h),a) - \E{o'\sim P(\mid h,a)}{f_{\phi}(h')}\|_2^2$ can only lead to \EZP condition when reaching optimum of zero.  This is because when
    $g_\theta(f_\phi(h),a) = \E{o'\sim P(\mid h,a)}{f_\phi(h')},\forall h,a$, it precisely satisfies the \EZP condition. 
    \end{proof}

    \begin{proof}[Proof of \textbf{\autoref{prop:fdiv}}]
    The goal is to show $\mathcal L_{\ZP,D_\mathtt f}(\phi,\theta; h,a)  \le J_{D_\mathtt f}(\phi,\theta,\phi;h,a), \forall h,a$. 
    
    Recall the definition of $\mathtt f$-divergence that subsumes forward and reverse KL divergences:
    $
    D_\mathtt f (Q\mid\mid P) = \int P(x) f\left(\frac{Q(x)}{P(x)}\right) dx
    $, where $f:[0,\infty)\to \R$ is a convex function.
    \begin{align}
    \mathcal L_{\ZP,D_\mathtt f}(\phi,\theta; h,a) 
    &= \int \mathbb P_\phi(z\mid h)D_\mathtt f( \mathbb P_\phi(z'\mid h,a)\mid\mid \mathbb P_{\theta}(z' \mid z,a))dz\\
    &= \iint \mathbb P_\phi(z\mid h) \mathbb P_{\theta}(z'\mid z,a) f\left(\frac{\E{o'}{\mathbb P_\phi(z'\mid h')}}{\mathbb P_{\theta}(z'\mid z,a)}\right) dzdz' \\
    &= \iint \mathbb P_\phi(z\mid h) \mathbb P_{\theta}(z'\mid z,a) f\left(\E{o'}{\frac{\mathbb P_\phi(z'\mid h')}{\mathbb P_{\theta}(z'\mid z,a)}}\right) dzdz' \\
    &\le  \iint \mathbb P_\phi(z\mid h) \mathbb P_{\theta}(z'\mid z,a)  \E{o'}{f\left(\frac{\mathbb P_\phi(z'\mid h')}{\mathbb P_{\theta}(z'\mid z,a)}\right)} dzdz' \\
    &= \E{z\sim P_\phi(\mid h),o'\sim P(\mid h,a)}{ \int \mathbb P_{\theta}(z'\mid z,a) f\left(\frac{\mathbb P_\phi(z'\mid h')}{\mathbb P_{\theta}(z'\mid z,a)}\right)dz'}\\
    &= J_{D_\mathtt f}(\phi,\theta,\phi;h,a),
    \end{align}
    where we use Jensen's inequality by the convexity of $f$. The equality holds when $P(o'\mid h,a)$ is deterministic according to Jensen's inequality. 
    \end{proof}

    \textbf{Discussion on the double sampling issue.} The ideal objective \autoref{eq:zp_loss} is hard to have an \textit{unbiased} estimate in stochastic environments. This is due to double sampling issue~\citep{baird1995residual} that we do not allow agent to i.i.d. sample twice from transition, \ie $o'_1,o'_2 \sim P(o'\mid h,a)$. To see it more clearly, for example, when $\mathbb D$ is forward KL divergence, The ideal objective becomes: 
    \begin{align}
    \mathcal L_{\ZP,\text{FKL}}(\phi,\theta;h,a) &= \E{z\sim \mathbb P_\phi(\mid h)}{\kl(\mathbb P_\phi(z'\mid h,a) \mid \mid \mathbb P_{\theta}(z'\mid z,a))} \\
    &= \E{z\sim \mathbb P_\phi(\mid h)}{\kl(\E{o'}{\mathbb P_\phi(z'\mid h')} \mid \mid \mathbb P_{\theta}(z'\mid z,a))} \\
    &= \E{z\sim \mathbb P_\phi(\mid h)}{\int \E{o'}{\mathbb P_\phi(z'\mid h')} \log \frac{\E{o'}{\mathbb P_\phi(z'\mid h')}}{\mathbb P_{\theta}(z'\mid z,a)} dz'} \\
    &= \E{z\sim \mathbb P_\phi(\mid h),o'\sim P(\mid h,a), z'\sim \mathbb P_\phi(\mid h')}{\log \frac{\E{o'_+}{\mathbb P_\phi(z'\mid h'_+)}}{\mathbb P_{\theta}(z'\mid z,a)}},
    \end{align}
    where $o',o'_+ \sim P(\mid h,a)$ are two i.i.d. samples, and $h'=(h,a,o'), h'_+ = (h,a,o'_+)$.

\begin{proof}[Proof of \textbf{\autoref{prop:stationary_point}}]
    Recall the stop-gradient objective \autoref{eq:l2}:
    \begin{align}
    J \defeq J_\ell(\phi,\theta,\overline\phi; h,a) = \E{o'\sim P(\mid h,a)}{\|g_\theta(f_\phi(h),a)-f_{\overline\phi} (h')\|_2^2}.
    \end{align}
    The gradients are: 
    \begin{align}
    \nabla_{\phi,\theta} \E{h,a}{J} 
    &= \E{h,a}{(g_\theta(f_\phi(h),a)-\E{o'}{f_{\overline\phi}(h')})^\top \nabla_{\phi,\theta} g_\theta(f_\phi(h),a)}.
    \end{align}
    When $\theta,\phi$ reaches a stationary point, we have $\nabla_{\phi,\theta} \E{h,a}{J}= 0$ and thus $\overline \phi = \phi$.
    Therefore, we have \textit{a} stationary point $(\theta^*,\phi^*)$ such that:
    $g_\theta(f_\phi(h),a) = \E{o'\sim P(\mid h,a)}{f_{\phi}(h')}$, for any $h,a$, which is the \textbf{expected} \ZP (\EZP) condition.
    In a deterministic environment, \EZP is equivalent to \ZP. In a stochastic environment, \EZP is to match the expectation  (instead of distribution).
    
    However, in online objective, the gradient \wrt $\phi$ contains an extra term:
    \begin{align}
    \E{h,a,o'}{(g_\theta(f_\phi(h),a)-f_{\phi} (h'))^\top\nabla_\phi f_\phi(h')}.
    \end{align}
    Thus, when \EZP holds, the gradient is zero in deterministic environments, but can be non-zero in stochastic environments.

    \end{proof}

    \begin{proof}[Proof of \textbf{\autoref{thm:collapse}}]
    \textbf{The setup.} Let $h_{t:-k}$ a vectorization of the recent truncation of history $h_t$ with window size of $k\in \mathbb N$, \ie $h_{t:-k} = \Vec(a_{t-k},o_{t-k+1}, \dots, a_{t-1}, o_{t})\in \R^{x}$,\footnote{We zero pad $a_i$ and $o_i$ if $i \le 0$.} where $x = k(|\mathcal O| + |\mathcal A|)$. 
    We assume a linear encoder that maps history $h_{t} \in \mathcal H_t$ into $z_t$: 
    \begin{align}
    z_t = f_\phi(h_t) \defeq \phi^\top h_{t:-k} \in \R^d,
    \end{align}
    where $k\in\mathbb N$ is a constant, and the parameters $\phi \in \R^{x\times d}$. In other words, the linear encoder only operates on recent histories of a fixed window size. 
    We assume a  linear deterministic latent transition 
    \begin{align}
    z_{t+1} = g_\theta(z_t,a_t) \defeq \theta_z^\top z_t + \theta_a^\top a_t \in \R^d,
    \end{align}
    where the parameters $\theta_z \in \R^{d\times d}$ and $\theta_a \in \R^{a\times d}$. In fact, the result can be generalized to a non-linear dependence of actions. 
    
    \textbf{The proof.} The continuous-time training dynamics of $\phi$:
    \begin{align}
    \dot \phi &= -\E{h_t,a_t}{\nabla_\phi J_\ell(\phi,\theta,\overline \phi;h_t,a_t)} \\
    &= -\E{h_t,a_t,o_{t+1}}{\nabla_\phi \|\theta_z^\top \phi^\top h_{t:-k} + \theta_a^\top a_t - \overline \phi^\top h_{t+1:-k}  \|_2^2} \\
    &= -\E{h_t,a_t}{(\theta_z^\top \phi^\top h_{t:-k} + \theta_a^\top a_t -\E{o_{t+1}}{\overline\phi^\top h_{t+1:-k}})^\top\nabla_\phi \theta_z^\top \phi^\top h_{t:-k}}\\ 
    &= -\E{h_t,a_t}{h_{t:-k} (\theta_z^\top \phi^\top h_{t:-k} + \theta_a^\top a_t -\E{o_{t+1}}{\overline\phi^\top h_{t+1:-k}})^\top} \theta_z^\top.
    \end{align}
    
    The gradient of the loss \wrt $\theta_z$:
    \begin{align}
    &\nabla_{\theta_z} \E{h_t,a_t}{J_\ell(\phi,\theta,\overline \phi;h_t,a_t)} \\ 
    &=\E{h_t,a_t}{(\theta_z^\top \phi^\top h_{t:-k} + \theta_a^\top a_t -\E{o_{t+1}}{\overline\phi^\top h_{t+1:-k}})^\top \nabla_{\theta_z} (\theta_z^\top \phi^\top h_{t:-k} + \theta_a^\top a_t)} \\
    &=\phi^\top \E{h_t,a_t}{h_{t:-k} (\theta_z^\top \phi^\top h_{t:-k} + \theta_a^\top a_t -\E{o_{t+1}}{\overline\phi^\top h_{t+1:-k}})^\top}   \in \R^{d\times d}.
    \end{align}
    
    Therefore, we have
    \begin{align}
    \phi^\top \dot \phi &= -\nabla_{\theta_z} \E{h_t,a_t}{J_\ell(\phi,\theta,\overline \phi;h_t,a_t)}\theta_z^\top.
    \end{align}
    Following the practice in~\citep{tang2022understanding}, we assume $\nabla_{\theta_z} \E{h_t,a_t}{J_\ell(\phi,\theta,\overline \phi;h_t,a_t)} = 0$, \ie  $\theta_z$ reaches the stationary point of the inner optimization that depends on $\phi$, then $\phi^\top \dot \phi = 0$. Thus, the training dynamics of $\phi^\top \phi$ is
    \begin{align}
    \frac{d (\phi^\top \phi)}{dt} &= \dot \phi^\top \phi + \phi^\top \dot \phi =  \dot \phi^\top \phi + (\dot \phi^\top \phi)^\top  =  0.
    \end{align}
    This means that $\phi^\top \phi$ keeps same value during training. 
    \end{proof}

\section{Analyzing Prior Works on State and History Representation Learning}
\label{app:prior_works}

In this section, we provide a concise but analytical overview of previous works that learn or approximate self-predictive or observation-predictive representations on states or histories.
Please see \citet{lesort2018state} for an early and detailed survey on \textit{state} representation learning. 

We focus on the objectives of state or history encoders in their value functions. For each work discussed, we present a summary of the conditions that their encoders aim to satisfy or approximate at the beginning of each paragraph.  In cases where multiple encoder objectives are proposed, we select the one employed in their primary experiments for our discussion. In particular, we list the exact objectives they aim to optimize, which might be redundant for \textit{exact} conditions. For example, multi-step \RP can be implied by \RP + \ZP by \autoref{prop:multi-rp} (or $\phi_{Q^*}$ + \ZP by \autoref{thm:EG_ZP_implies_Er}), and multi-step \ZP can be implied by \ZP by \autoref{prop:multi-zp}.

\subsection{Self-Predictive Representations}

\paragraph{CRAR~\citep{franccois2019combined}: $\phi_{Q^*}$ + \RP + \ZP with online $\ell_2$ + regularization.}  Combined reinforcement via abstract representations (CRAR) is designed to learn self-predictive representations in MDPs. It incorporates \RP and \ZP auxiliary losses into the end-to-end RL objective. They assume the deterministic case (for the encoder, transition and latent transition), thus using $\ell_2$ objective is sufficient (\autoref{prop:l2}). They use online \ZP target and observe the representation collapse when the reward signals are scarce. To prevent this issue,  they introduce regularization terms into the encoder objective. These terms minimize $\E{s_1,s_2}{\exp(-\|\phi(s_1) - \phi(s_2)\|_2)} + \E{s}{\max(\|\phi(s)\|_\infty^2 - 1, 0)}$, where $s_1,s_2,s$ are samples from the state space. These terms, similar to entropy maximization,  encourage diversity within the latent space.

\paragraph{DeepMDP~\citep{gelada2019deepmdp}: $\phi_{Q^*}$ + \RP + \ZP with online $\ell_2$.} DeepMDP aims to learn state representations that match \RP and \ZP. In their experiments, they assume deterministic case, resulting in dirac distributions $\mathbb P_\phi(z'\mid s')$ and $\mathbb P_{\theta}(z'\mid z,a)$. 
Although they use the Wasserstein distance, it reduces to $\ell_2$ distance for two dirac distributions. They use an online target in \ZP loss.  
In their toy DonutWorld task,  they try phased training with \RP + \ZP, but the agent tends to be trapped in a local minimum of zero \ZP. Then they try $\phi_{Q^*}$ + \RP + \ZP in Atari by training \RP + \ZP as an auxiliary task of a distributional RL baseline, outperforming the baseline in their main result.  
They also find that $\phi_{Q^*}$ + \RP + \ZP is comparable to $\phi_{Q^*}$ + \ZP, aligned with our theoretical prediction based on \autoref{thm:EG_ZP_implies_Er}. They also try phased training in Atari and find that  \RP + \ZP performs poorly, while \RP + \ZP + \OR yields good results. 

\paragraph{SPR~\citep{schwarzer2020data}: $\phi_{Q^*}$ +  multi-step \ZP with EMA $\cos$.} 
Self-Predictive Representations (SPR) improves the \ZP objective in DeepMDP. They use a special kind of $\ell_2$ loss (\ie $\cos$ distance) to bound the loss scale, and use an EMA target. They use multi-step prediction loss to learn the condition:
\begin{align}
P(z_{t+1:t+k} \mid s_t,a_{t:t+k-1}) = P(z_{t+1:t+k} \mid \phi(s_t),a_{t:t+k-1}),
\end{align}
where $k=5$ in their experiments. In addition, to reduce the large latent space generated by CNNs, they use a linear projection of the latent states to satisfy \ZP. 

\paragraph{DBC~\citep{zhang2020learning}: $\phi_{Q^*}$ + \RP + stronger \ZP with detached FKL.} Deep Bisimulation for Control (DBC) trains the state encoder $\phi$ with several auxiliary losses, including \RP and \ZP. The \ZP loss uses a forward KL objective with a detached target. Their main contribution is the introducation of the bisimulation metric~\citep{ferns2004metrics} into state representation learning: for any $s_i,s_j\in \mathcal S$ and $a_i,a_j\in\mathcal A$, 
\begin{align}
\|\phi(s_i) - \phi(s_j)\|_1 = |R(s_i,a_i) - R(s_j,a_j)| + \gamma W (\mathbb P_\theta( z'\mid  \phi(s_i),a_i), \mathbb P_\theta (z' \mid \phi(s_j),a_j)), \tag{metric}
\end{align}
where $W$ is Wasserstein distance and $\mathbb P_\theta$ is modeled as a Gaussian.
The metric condition enforces the latent space to be structured with a $\ell_1$ metric. They train $\phi$ satisfying the metric condition by minimizing the mean square error on it as another auxiliary loss. This leads to a stronger \ZP condition.

\paragraph{PBL~\citep{guo2020bootstrap}: $\phi_{Q^*}$ + indirect multi-step \ZP.} Predictions of Bootstrapped Latents (PBL) designs two auxiliary losses, reverse prediction and forward prediction, for their history encoder $\phi$, transition model $\theta$, observation encoder $f$, and projector $g$: 
\begin{align}
\min_{f,g} \E{h}{\|g(f(o)) - \phi(h)\|_2^2}, \tag{Reverse}\\
\min_{\phi,\theta} \E{h,a,o'}{\|\theta(\phi(h),a) - f(o')\|_2^2}. \tag{Forward}
\end{align}
To understand their connection with \ZP, assume the two losses reach zero with $ \phi(h) = g(f(o))$ and $\theta(\phi(h),a) = \E{o'\sim P(\mid h,a)}{f(o')}$ for any $h,a$, although in theory this may be unrealizable. Furthermore, assume deterministic transition, then 
\begin{align}
g(\theta(\phi(h),a)) = g(f(o')) = \phi(h').
\end{align}
Therefore, in deterministic environments, reverse and forward prediction together is equivalent to \ZP if they reach the optimum. They also adopt multi-step version of their loss with a horizon of $20$. 
While forward and reverse prediction both appear critical in this work, the follow-up work BYOL-explore~\citep{guo2022byol} removes reverse prediction.

\paragraph{Successor Representations and Features~\citep{barreto2017successor,lehnert2020successor}: $\phi_{Q^*}$ + \RP + weak \ZP.} 
Here, we introduce successor features (SF) with our notation. Suppose the expected reward function can be computed as  
\begin{align}
\label{eq:ER_SR}
\E{}{r \mid s,a} = g(\phi(s),a)^\top w, \quad \forall s,a,
\end{align}
where  $\phi:\mathcal S \to \mathcal Z$ is a state encoder  and  $g: \mathcal Z \times \mathcal A \to \R^d$ is called state-action feature extractor, and $w\in \R^d$ are weights\footnote{Although it is linear \wrt $w$, it can recover any reward function, \eg when $\phi(s) = s$ and $g(s,a)_i = \E{}{r \mid s,a}$ for some $i$.}. In our notation, \autoref{eq:ER_SR} is \RP condition for $\phi$.  

As a special case, in tabular MDPs with finite state and action spaces with state-dependent reward $R(s)$, let $\phi(s) \in \{0,1\}^{|\mathcal S|}$ be one-hot state representation, and let $g(\phi(s),a)=\phi(s)$ and weight $w_{s} = \E{}{r \mid s}$, this satisfies \autoref{eq:ER_SR}. This special case is known as \textbf{successor representation} (SR) setting~\citep{dayan1993improving}. 
In deep SR~\citep{kulkarni2016deep,lehnert2020successor}, they allow learning $\phi$ with assuming $g(\phi(s),a)=\phi(s)$.

The $Q$-value function of a policy $\pi$ can be rewritten as
\begin{align}
Q^\pi(s,a) &= \E{\pi}{\sum_{t=0}^\infty \gamma^t r_t \mid S_0 =s, A_0=a}\\ 
&= \E{\pi}{\sum_{t=0}^\infty \gamma^t g(\phi(s_t),a_t)^\top w  \mid S_0 =s, A_0=a}\\ 
&= \E{\pi}{\sum_{t=0}^\infty \gamma^t g(\phi(s_t),a_t)  \mid S_0 =s, A_0=a}^\top w \\ 
\label{eq:linear_q}
&\defeq \psi^\pi(s,a)^\top w,
\end{align}
where $\psi^\pi(s,a)$ is called successor features~\citep{barreto2017successor}, a geometric sum of future $g(\phi(s),a)$. 
Although $\psi^\pi$ can belong to any function class, following deep SR~\citep{kulkarni2016deep,lehnert2020successor}, we assume it is parametrized by the state encoder as $\psi^\pi(s,a) = f^\pi(\phi(s),a)$ where $f^\pi:\mathcal Z \times \mathcal A \to \R^d$. 
Then, by plugging \autoref{eq:linear_q} in Bellman equation $Q^\pi(s,a) = \E{s',a'\sim \pi}{R(s,a) + \gamma  Q^\pi(s',a')}$, we have 
\begin{align}
\label{eq:SF}
&f^\pi(\phi(s),a) = g(\phi(s),a) + \gamma \E{s',a'\sim \pi}{f^\pi(\phi(s'),a') }.
\end{align}
Therefore, \autoref{eq:SF} can be viewed as a \textbf{weak} version of \ZP, because given any current latent state and action pair $(\phi(s),a)$, \autoref{eq:SF} can predict the expectation of some function of next latent state $\phi(s')$. \ZP can imply \autoref{eq:SF} because it can predict exactly the distribution of next latent state. 

With a combination of \RP (\autoref{eq:ER_SR}), $\phi_{Q^*}$ (implied by \autoref{eq:SF} when $\pi$ is optimal), and a weak version of \ZP, we show that the state encoder that successor features learn, belongs to a weak version of $\phi_L$.

As a special case, in Linear Successor Feature Model (LSFM)~\citep[Theorem~2]{lehnert2020successor}, they show that SF is \textbf{exactly} the bisimulation ($\phi_L$) under several assumptions: finite action and latent space, the successor features $f^\pi(z,a) = F_a z$ is a linear function, and the policy $\pi:\mathcal Z \to \Delta(\mathcal A)$ conditions on latent space. 
However, here we point it out that with the assumptions above implies \EZP (not necessarily \ZP), thus, still a \textbf{weak} version of bisimulation.

Following~\citet{lehnert2020successor}, assume the finite latent space is composed of one-hot vectors: $\mathcal Z = \{e_1, e_2,\dots,e_n\}$, we can construct a matrix $F^\pi\in \R^{d\times n}$ with each column $F^\pi(i) = \E{a\sim \pi(\mid e_i)}{F_{a}e_i}$. 
\begin{align}
&\frac{1}{\gamma}(f^\pi(\phi(s),a) - g(\phi(s),a)) =\E{s',a'\sim \pi}{f^\pi(\phi(s'),a') } \\
&\quad= \E{s'\sim P(\mid s,a),a'\sim \pi(\mid \phi(s'))}{F_{a'}\phi(s')} \\ 
&\quad=\E{s'\sim P(\mid s,a)}{F^\pi \phi(s')} = F^\pi\E{s'\sim P(\mid s,a)}{\phi(s')}  .
\end{align}
By~\citep[Lemma 4]{lehnert2020successor}, $F^{\pi}$ is invertible, thus there exists a function $J:\mathcal Z \times \mathcal A \to \mathcal Z$ such that $J(\phi(s),a) = \E{s'\sim P(\mid s,a)}{\phi(s')}$, \ie, \EZP holds. 

\paragraph{EfficientZero~\citep{ye2021mastering}: $\phi_{Q^*}$ + \RP + multi-step \ZP with detached $\cos$.} EfficientZero improves MuZero~\citep{schrittwieser2020mastering} by introducing \ZP loss as one of their main contributions. We consider it especially crucial to planning algorithms because \ZP enforces the latent model to be accurate. Similar to SPR~\citep{schwarzer2020data}, they use $5$-step $\cos$ objective with a projection on latent states, and add image data augmentation for visual RL tasks. 

\paragraph{RPC~\citep{eysenbach2021robust}: $\phi_{\pi^*}$ + \ZP with online forward KL.} From the perspective of information compression, robust predictive control (RPC) aims to jointly learn the encoder of policy $\mathbb P_\phi(z\mid s)$ and the latent policy $\pi_z(a\mid z)$ in MDPs.  The policy $\pi(a\mid s)$ is not only maximizing return, but also imposed a constraint on $\E{\pi}{I(s_{1:\infty};z_{1:\infty})} \le C$ where $C > 0$ is a predefined constant. 
By applying variational information bottleneck,
this constraint induces the algorithm RPC to maximize the following objective \wrt $\phi$ and $\theta$ (see their Eq. 6):  
\begin{align}
\mathcal L(\phi, \theta; s_t,a_t) &= \E{z_{t}\sim \mathbb P_\phi(\mid s_{t}),s_{t+1}\sim P(\mid s_t,a_t), z_{t+1}\sim \mathbb P_\phi(\mid s_{t+1})}{\log \frac{\mathbb P_\theta(z_{t+1}\mid z_t,a_t)}{\mathbb P_\phi(z_{t+1}\mid s_{t+1})}} \\ 
&= - \E{z_{t}\sim \mathbb P_\phi(\mid s_{t}),s_{t+1}\sim P(\mid s_t,a_t)}{\kl(\mathbb P_\phi(z_{t+1}\mid s_{t+1}) \mid\mid\mathbb P_\theta(z_{t+1}\mid z_t,a_t))}
\end{align}
which is exactly the practical forward KL objective~\autoref{eq:kl}. In practice, the authors formulate it as constrained optimization and use gradient descent-ascent to update the encoder and Lagrange multiplier. In addition, they also use this objective as an intrinsic reward to regularize the latent policy's reward-maximizing objective. 
It is worth noting that while RPC aims to learn the \ZP condition along with reward maximization, it does not explicitly learn representations to fulfill the \RP or $\phi_{Q^*}$ conditions. As a result, we can consider it as an approach that approximates self-predictive representations.

\paragraph{ALM~\citep{ghugare2022simplifying}: $\phi_{Q^*}$ + multi-step \ZP with EMA reverse KL.} Aligned Latent Models (ALM) is based on variational inference, and aims to learn the latent model $\mathbb P_\theta(z' \mid z,a)$, the state encoder $\mathbb P_\phi(z\mid s)$ and the latent policy $\pi_z(z)$ to jointly maximize the lower bound of the expected return. The objective of their encoder includes maximizing the return and \ZP loss, instantiated as $3$-step reverse KL with an EMA target. Specifically, given a tuple of $(s,a,s')$, the $1$-step objective for their encoder is computed as 
\begin{align}
\min_\phi \E{z \sim \mathbb P_\phi(\mid s)}{-R_z(z,a) + \kl(\mathbb P_\theta(z'\mid z,a) \mid \mid \mathbb P_\phi(z' \mid s')) - \E{z'\sim \mathbb P_\theta(\mid z,a)}{ Q^\pi(z',\pi_z(\overline{z'}))}},
\end{align}
where $R_z(z,a)$ is the latent reward, learned by the \RP condition (with $\phi$ detached), and $\overline{z'}$ indicates stop-gradient.  With the latent reward and also their intrinsic rewards, they perform SVG algorithm~\citep{heess2015learning} for policy optimization with a planning horizon of $3$ steps. We provide a detailed description of ALM and its variants in \autoref{sec:mdp_details}. 

\paragraph{AIS~\citep{subramanian2022approximate}: \RP + \ZP with detached $\ell_2$ or forward KL in their approach, while \RP + \OP with detached $\ell_2$ in their experiments.} Approximate Information States (AIS) adopts a phased training framework where the history encoder $\phi$ learns from \RP instead of maximizing returns. In their approach section~\citep[Sec.~6.1.2]{subramanian2022approximate}, they propose using MMD with $\ell_2$ distance-based kernel $k_d$ to learn \ZP, and detach the target. The distance-based kernel~\citep{sejdinovic2013equivalence} takes a pair of latent states $z_1,z_2 \in \mathcal Z$ as inputs, and is defined as $k_d(z_1,z_2) = \frac{1}{2}(d(z_0,z_1) + d(z_0,z_2) - d(z_1,z_2))$ where $z_0\in \mathcal Z$ is arbitrary. In this case, $d(z_1,z_2) = \|z_1-z_2\|_2^2$ is $\ell_2$ distance.

Let $f_\phi(h)$ be the deterministic encoder, $\mathbb P_{\theta}(z'\mid f_\phi(h),a) \defeq \mathbb P_{\phi,\theta}$ be the predicted next latent distribution, and $\mathbb Q_{\phi}(z'\mid h,a)$ be real next latent distribution. The MMD with $k_d$ can be reduced to $\ell_2$ distance between the expectations of two distributions:
\begin{align}
&\text{MMD}^2_{k_d}(\mathbb P_{\phi,\theta}, \mathbb Q_{\phi}; h,a) \\
&= -\E{z'_1, z'_2\sim \mathbb P_{\phi,\theta}}{d(z'_1,z'_2)} + 2\E{z'_1\sim \mathbb P_{\phi,\theta}, z'_2\sim \mathbb Q_{\phi}}{d(z'_1,z'_2)} - \E{z'_1, z'_2\sim \mathbb Q_{\phi}}{d(z'_1,z'_2)} \\
&= -\E{z'_1, z'_2\sim \mathbb P_{\phi,\theta}}{\|z'_1-z'_2\|_2^2} + 2\E{z'_1\sim \mathbb P_{\phi,\theta}, z'_2\sim \mathbb Q_{\phi}}{\|z'_1-z'_2\|_2^2} - \E{z'_1, z'_2\sim \mathbb Q_{\phi}}{\|z'_1-z'_2\|_2^2}  \\
&= 2\|\E{z'\sim \mathbb P_{\phi,\theta}}{z' \mid h,a} - \E{z'\sim \mathbb Q_{\phi}}{z' \mid h,a}\|_2^2.
\end{align}
Therefore, the MMD objective can be viewed as \ZP with $\ell_2$ distance (\ie, \EZP). They also propose forward KL to instantiate \ZP loss. 
Nevertheless, AIS~\citep{subramanian2022approximate} do not show experiment results on learning \ZP. Instead, they and the follow-up works~\citep{patil2022learning,seyedsalehi2023approximate} implement AIS by learning \OP loss with MMD objectives, resulting in learning \textit{observation-predictive} representations.
Another follow-up work, Discrete AIS~\citep{yang2022discrete}, learns \ZP loss with $\ell_2$ objective in a discrete latent space, so that they can apply value iteration.

\paragraph{TD-MPC~\citep{hansen2022temporal}: $\phi_{Q^*}$ + \RP + multi-step \ZP with EMA $\ell_2$.} Temporal Difference learning for Model Predictive Control (TD-MPC) uses a planning horizon of $5$ for the encoder objective and the latent value objective with TD learning. TD-MPC also uses MPC for action selection during inference. They find that learning \ZP works better than learning \OR or not learning \ZP in the DM Control suite.

\paragraph{TCRL~\citep{zhao2023simplified}: RP + multi-step ZP with EMA $\cos$.} Temporal consistency reinforcement learning (TCRL) simplifies TD-MPC~\citep{hansen2022temporal} by removing the planning component, replacing $\ell_2$ loss with $\cos$ loss, and detaching the encoder parameters during value function learning. They validate their approach on the state-based DM Control suite.
Although the paper refers to TCRL as \textit{minimalist} for learning representations, it is worth noting that TCRL is more complicated than our approach, as it still requires reward prediction and multi-step prediction.

\subsection{Observation-Predictive Representations}

\paragraph{PSR~\citep{littman2001predictive} and related work: \Rec + multi-step \OP and \RP.} The original predictive state representation (PSR)~\citep{littman2001predictive} aims to learn a history encoder $\phi$ and a linear transition model $P_O$ such that
\begin{align}
P(o_{t+1:t+k} \mid h_t, a_{t:t+k-1}) = P_O(o_{t+1:t+k} \mid \phi(h_t), a_{t:t+k-1}), \quad \forall h, a, o,
\end{align}
which implies multi-step \OP (defined in \autoref{prop:multi-op}) in POMDPs.
However, the original PSR struggles to model the future rewards, formally shown by~\citet[Proposition 1]{baisero2021reconciling}. Therefore, 
PSR-IP algorithm~\citep{james2004planning} predicts future rewards by incorporating rewards as additional observations, assuming that rewards are accessible during policy inference. R-PSR~\citep{baisero2021reconciling} generalizes the PSR framework to have the condition of multi-step \RP without adding rewards into a history. Thus, R-PSR bridges the gap between PSR and belief-based approaches.
Belief trajectory equivalence~\citep{castro2009equivalence} also introduces multi-step \RP to PSR and shows that single-step \OP and \RP do not necessarily imply multi-step \OP and \RP in POMDPs, summarized in \autoref{prop:multi-op}. In this sense, PSR is a stronger notion of belief abstraction. 

\paragraph{Causal state representations~\citep{zhang2019learning}: \Rec + \OP + \RP.}  This work connects observation-predictive representations in POMDPs with causal state models in computational mechanics~\citep{shalizi2001computational}. Specifically, they show that belief trajectory equivalence (\Rec + multi-step \OP and \RP)~\citep{castro2009equivalence} implies a causal state of a stochastic process, where \RP means reward \textit{distribution} prediction. The resulting abstract MDP is a causal state model or an $\epsilon$-machine, generating minimal sufficient representations for predicting future observations.  
In the implementation, they train a deterministic RNN encoder and a deterministic transition model to satisfy \OP and \RP conditions, and also train a latent Q-value function using Q-learning by freezing encoder parameters.
Optionally, they also train a discretizer on the latent space in finite POMDPs. 

\paragraph{Belief-based methods~\citep{Hafner2019LearningLD,Hafner2020DreamTC,han2019variational,lee2019stochastic}: \RP + \OR + \ZP with online forward KL.} As a major approach to solving POMDPs, belief-based methods extends belief MDPs~\citep{kaelbling1998planning} to deep RL through variational inference, deriving the encoder objective as ELBO. 
Let the latent variables are $z_{1:T}$, the world model $p(o_{1:T},r_{1:T} \mid a_{1:T})$, and the posterior are $q(z_{1:T} \mid o_{1:T},a_{1:T})$ with the factorization: 
\begin{align}
&p(z_{1:T+1},o_{1:T+1},r_{1:T} \mid a_{1:T}) 
= p(z_1) p(o_1 \mid z_1) \prod_{t=1}^{T}   p(r_{t} \mid z_t,a_t) p(z_{t+1} \mid z_{t}, a_{t}) p(o_{t+1} \mid z_{t+1}),\\
&q(z_{1:T+1} \mid h_{T+1}) = \prod_{t=0}^T q(z_{t+1} \mid h_{t+1}) = \prod_{t=0}^T q(z_{t+1} \mid z_{t}, a_t,o_{t+1}),
\end{align}
where $h_{t+1} = (h_{t},a_t,o_{t+1})$ in our notation. The log-likelihood has a lower bound: 
\begin{align}
&\E{h_{T+1},r_{1:T}}{\log p_\theta (o_{1:T+1},r_{1:T} \mid a_{1:T})} \\
&= \E{h_{T+1},r_{1:T}}{\log  \E{q(z_{1:T+1} \mid h_{T+1})}{ \frac{p(z_{1:T+1},o_{1:T+1},r_{1:T} \mid a_{1:T})}{q(z_{1:T+1} \mid h_{T+1})}}}\\
&\ge \E{h_{T+1},r_{1:T},z_{1:T}\sim q( \cdot \mid h_{T+1})}{ \log \frac{p(z_{1:T},o_{1:T+1},r_{1:T} \mid a_{1:T})}{q(z_{1:T+1} \mid h_{T+1})}}\\
&=  \E{h_{T+1},r_{1:T},z_{1:T+1}\sim q(h_{T+1})}{\sum_{t=0}^T \underbrace{\log p(o_{t+1} \mid z_{t+1})}_{(1)} + \underbrace{\log p (r_t \mid z_t,a_t)}_{(2)}- \underbrace{\log \frac{q(z_{t+1} \mid h_{t+1})}{p (z_{t+1} \mid z_{t}, a_{t})}}_{(3)}}.
\end{align}
When $p,q$ are trained to optimal, the first term becomes \OR condition and the second term becomes reward distribution matching that implies \RP. The third term with expectation can be written as $\E{z_t,h_{t+1}}{\kl(q(z_{t+1} \mid h_{t+1}) \mid \mid p (z_{t+1} \mid z_{t}, a_{t}))}$, which is exactly our practical forward KL objective \autoref{eq:kl} to learn \ZP. From our relation graph (\autoref{fig:relation_main}; \autoref{prop:or_zp}), \ZP + \OR imply \OP, thus belief-based methods aim to approximate observation-predictive representation (\RP + \OP). Normally, they use an online target in forward KL, because they have \OR signals that can help prevent representational collapse. They also train encoders without maximizing returns. Finally, \citet{baisero2020learning} replaces \OR with \OP in their objectives to learn the same representations.

We can also build the connections between \OR and \RP objectives and maximizing mutual information. Let $P(o,z)$ be the marginal joint distribution of observation and latent state at the same time-step, where $P(o',z') = \int P(o',z',h,a) dhda = \int P(h,a)P(o'\mid h,a) P(z'\mid h')dhda$. Consider,
\begin{align}
\label{eq:MI_OR}
\mathbb I(o'; z') &= \E{o',z'\sim P(o',z')}{\log \frac{P(o',z')}{P(o') P(z')}}\\
&=\E{o',z'\sim P(o',z')}{\log \frac{P(o'\mid z')}{P(o')}}\\
&= \E{o',z'\sim P(o',z')}{\log P(o'\mid z')} + \mathbb H(P(o')) \\
&= \E{h,a\sim P(h,a),o'\sim P(\mid h,a),z'\sim P(\mid h')}{\log P(o'\mid z')} + \mathbb H(P(o')).
\end{align}
Since the entropy term is independent of latent states, the \OR objective in belief-based methods is \textbf{exactly} maximizing the $\mathbb I(o; z)$. Similarly, the \RP objective in belief-based methods is exactly maximizing $\mathbb I(r; z)$.

\paragraph{OFENet~\citep{ota2020can}: $\phi_{Q^*}$ + \OP.} Online Feature Extractor
Network (OFENet) trains the state encoder using an auxiliary task of \OP loss with $\ell_2$ distance. This is perhaps the most related algorithm to our Algo.~\ref{pseudo-code} for learning $\Phi_O$. They show strong performance of their approach over model-free baseline in standard MuJoCo benchmark. 
Follow-up work~\citep{lange2023comparing} empirically find that $\phi_{Q^*}$ + \RP slightly improves up model-free RL, but much worse than $\phi_{Q^*}$ + \OP in MuJoCo benchmark. 

\paragraph{SAC-AE~\citep{yarats2021improving}: $\phi_{Q^*}$ + \OR.} Soft Actor-Critic with AutoEncoder (SAC-AE) trains the state encoder with an auxliary task of \OR loss with forward KL and also $\ell_2$-regularization. They detach the state encoder in policy objective. As in MDPs, \OR implies \OP (\autoref{prop:mdp_or}), SAC-AE also approximates observation-predictive representation.

\subsection{Other Related Representations}

\paragraph{UNREAL~\citep{jaderberg2016reinforcement}, Loss is its own Reward~\citep{shelhamer2016loss}.} These works make early attempts at auxiliary task design for RL. 
UNREAL trains recurrent A3C agent with several auxiliary tasks, including reward prediction (\RP), pixel control and value function replay.
Loss is its own Reward trains A3C agent with several auxiliary tasks, including reward prediction (\RP), observation reconstruction (\OR),  inverse dynamics, and a proxy of forward dynamics (\OP) that finds the corrupted observation from a time series. 
Among them, inverse dynamics condition in MDPs is that  
\begin{align}
\exists P_{\text{inv}}: \mathcal Z\times \mathcal Z \to \Delta(\mathcal A), \quad  \st \quad P_{\text{inv}}(a\mid \phi(s), \phi(s')) =  P(a \mid s, s'), \quad \forall s,a,s',
\end{align}
but this condition does not direct relation with forward dynamics (\OP). 

\paragraph{VPN~\citep{oh2017value}, MuZero~\citep{schrittwieser2020mastering}: $\phi_{Q^*}$ + \RP.} From \autoref{thm:EG_ZP_implies_Er}, we know that $\phi_{Q^*}$ + \RP is implied by $\phi_{Q^*}$ + \ZP, thus this representation lies between $\phi_{Q^*}$ and $\phi_L$. 
Both VPN and MuZero learn the shared state encoder and latent model from maximizing the return and predicting rewards. MuZero also predicts actions. They follow the value-equivalence principle~\citep{grimm2020value} to learn value-equivalent models. Their policies are learned by the MCTS algorithm.  

\paragraph{E2C~\citep{watter2015embed} and World Model~\citep{ha2018recurrent}: \ZP + \OR.} They are similar to belief-based methods, but remove the reward prediction loss from the encoder objective. Instead, reward signals are only accessible to latent policies or values. 

\paragraph{Contrastive representation learning in RL (CURL~\citep{laskin2020curl}, DRIML~\citep{mazoure2020deep}, ContraBAR~\citep{choshen2023contrabar}): $\phi_{Q^*}$ (\RP) + weak \OP (\OR).}
CURL ($\phi_{Q^*}$ + weak \OR) introduces contrastive learning using the infoNCE objective~\citep{oord2018representation} as an auxiliary task in MDPs. InfoNCE between positive and negative examples is shown to be a lower bound of mutual information between input and latent state variables~\citep{poole2019variational}. In MDPs, it is a lower bound of $\mathbb I(s;z)$, which corresponds to \OR objectives \autoref{eq:MI_OR}. Therefore, CURL can be interpreted as maximizing a lower bound of \OR. 

DRIML ($\phi_{Q^*}$ + weak \OP) proposes an auxiliary task named InfoMax in MDPs. In its single-step prediction variant, InfoMax maximizes the lower bound of $\mathbb I(z';z,a)$ via the infoNCE objective. Similar to the analysis~\citep{rakelly2021mutual}, by data processing inequality:
\begin{align}
&\mathbb I(z';z,a) \le \mathbb I(z';s,a) \le  \mathbb I (s'; s,a), \\ 
&\mathbb I(z';z,a) \le \mathbb I(s';z,a) \le  \mathbb I (s'; s,a).
\end{align}
When all equalities hold (\eg $\phi$ satisfies \OR), these imply $z' \ind s,a \mid z,a$ (\ZP) and $s'\ind s,a \mid z,a$ (\OP).  

ContraBAR (weak \RP and weak \OP) introduces infoNCE objectives to meta-RL, which requires incorporating reward signals into observations when viewed as POMDPs~\citep{ni2021recurrent}. Similar to DRIML, in its single-step prediction variant, the objective is to maximize the lower bound of mutual information of $\mathbb I(z';z,a)$ where $z$ is a joint representation of state $s$ and reward $r$. As shown in the ContraBAR paper~\citep[Theorem~4.3]{choshen2023contrabar}, under certain optimality condition, the objective can lead to learning \RP and \OP conditions. 

\paragraph{Learning Markov State Abstraction~\citep{allen2021learning}: $\phi_{Q^*}$ + \ZM.}  From \autoref{prop:zm}, we know that \ZM is implied by \ZP, thus representation lies between $\phi_{Q^*}$ and $\phi_L$. They show that \ZM can be implied by inverse dynamics and density ratio matching in MDPs. Thus, they train on these two objectives as auxiliary losses. 

\paragraph{MICo~\citep{castro2021mico}: $\phi_{Q^*}$ + metric.} With a state encoder $\phi$, matching under Independent Coupling (MICo) defines a distance metric $U_\phi$ in the state space. For any pair of states $x,y \in \mathcal S$, 
\begin{align}
U_\phi(x,y) = |r_x^\pi -r_y^\pi| + \gamma \E{x'\sim P_x^\pi, y'\sim P_y^\pi}{U_\phi(x',y')},\tag{metric}
\end{align}
where $r_x^\pi = \E{a\sim \pi(\mid x)}{R(x,a)}$ and $P_x^\pi(x'\mid x) = \E{a\sim \pi(\mid x)}{P(x'\mid x,a)}$. The metric $U_\phi$ is parameterized with
\begin{align}
U_\phi(x,y) = \frac{1}{2}(\|\phi(x)\|_2^2 +\|\phi(y)\|_2^2) + \beta \arctan(\sqrt{1-\cos(\phi(x),\phi(y))^2}, \cos(\phi(x),\phi(y))).
\end{align}
They learn the MICo metric by an auxiliary loss using mean squared error. 

\paragraph{Denoised MDPs~\citep{wang2022denoised}: \OR + \RP +  \ZP in a factorized latent space (implying $\phi_{\pi^*}$, not $\phi_{Q^*}$).} This work aims to learn a state abstraction that ignores components that are either reward-irrelevant or uncontrollable. Such an abstraction can retain the optimal policy while not necessarily preserving optimal value functions. Consequently, denoised MDPs can be conceptualized as approximating $\phi_{\pi^*}$.
Technically, the authors postulate that the latent state of an MDP is composed of elements $(x,y,z)$ where the transition in $y$ is independent of actions. Additionally, the reward function $r(s)$, independent of $z$, is decomposed into $r_x(x)$ and $r_y(y)$. Thus, the optimal policy (though not its value) can only depend on the latent state component $x$.  

In practice, they introduce variational objectives to learn the encoder $p(x,y,z \mid s)$ with observation reconstruction (\OR), reward prediction (\RP), and next latent state prediction (\ZP) using online forward KL divergences. The structure of the latent state space helps the partial encoder $p(x \mid s)$ to gravitate towards $\phi_{\pi^*}$ abstraction, despite the absence of a theoretical guarantee. Finally, they use model-free RL to optimize a policy on the latent $x$  space. 

\paragraph{TD7~\citep{fujimoto2023sale}: \ZP with detached $\ell_2$.} TD7 algorithm is introduced for addressing MDPs and evaluated on the MuJoCo benchmark. TD7 learns a state encoder using \ZP loss with detaching the next latent states, which performs better than EMA version.  
They use $\ell_2$ loss and normalize latent states by average $\ell_1$ norm, which performs better than  $\cos$ loss and other normalization methods.
They find that training with \ZP loss only is slightly better than training with \ZP + \RP, and much better than end-to-end training (\ZP + $\phi_{Q}^*$).  
Lastly, it is noteworthy that in TD7, the critic not only takes a state $s$ and an action $a$ as inputs, but also the latent state $f_\phi(s)$ and the predicted next latent state $g_\theta(f_\phi(s),a)$, which is named as \textit{state-action embedding} in the paper.

\section{Motivating Our Hypotheses}
\label{sec:motivation}
Here we provide our motivation for our hypotheses shown in \autoref{sec:experiments}. 

\begin{itemize}[leftmargin=*,itemsep=0pt, topsep=0pt]
    \item \textbf{Motivating the sample efficiency hypothesis.} 
    The performance of deep RL algorithms is notably influenced by task structure, and no single algorithm consistently outperforms others across all tasks~\citep{wang2019benchmarking,li2022does,ni2021recurrent}. Common wisdom suggests that certain algorithms excel in specific types of tasks~\citep{mohan2023structure}.  For instance, self-predictive representations are often effective in distracting tasks~\citep{zhang2020learning,zhao2023simplified}, while observation-predictive representations typically perform well in sparse-reward scenarios~\citep{zintgraf2020exploration,zhang2021metacure}. However, these methods often incorporate additional complexities like intrinsic rewards and metric learning or are primarily evaluated in pixel-based tasks. 
    
    Given these considerations, we propose the use of our minimalist algorithm as a tool to focus solely on the impact of representation learning in vector-based tasks. This approach aims to provide a clearer understanding of how different representation learning strategies affect sample efficiency in various task structures (including popular standard benchmarks), without the confounding factors present in more complex algorithms or environments.
    
    \item \textbf{Motivating the distraction hypothesis.}  The belief that algorithms predicting observations tend to underperform in distracting tasks is supported by several studies~\citep{zhang2020learning,okada2021dreaming,fu2021learning,deng2022dreamerpro}. The challenge arises from the need for these models to predict every detail of observations, including irrelevant features, which can be extremely difficult due to randomness and high dimensionality. Similar to the motivation in our sample efficiency hypothesis, prior works primarily focus on complex algorithms evaluated in pixel-based tasks, often with real-world video backgrounds as distractors. 

    Considering these, we propose a shift towards studying the impact of distractions using a minimalist algorithm in simpler, configurable environments. This aligns with the setting by \citet{lambrechts2022recurrent}  demonstrating that, in the toy tasks with temporally-correlated Gaussian noises, recurrent model-free RL learns decision-relevant representations compared to a belief-based approach. This experiment design aims to isolate and understand the specific effects of distracting elements in tasks, providing a more straightforward and controlled setting for analysis.

    \item \textbf{Motivating the end-to-end hypothesis.} According to our \autoref{thm:EG_ZP_implies_Er}, learning an encoder end-to-end with the auxiliary task of \ZP can implicitly learn the reward prediction conditioned on its optimality, potentially making it comparable to the phased learning. However, this is a theoretical prediction and may not necessarily translate to practical scenarios, particularly considering that RL agents rarely achieve global optima. On the other hand, prior works~\citep{schwarzer2020data,ghugare2022simplifying} have shown the success of the end-to-end learning, but these algorithms incorporate other moving components (multi-step prediction, intrinsic rewards) and are not directly applicable to POMDPs. 
    
    These limitations underscore the importance of empirically testing whether the benefits of the end-to-end learning extend to POMDPs when employing a minimalist approach in representation learning.

    \item \textbf{Motivating the \ZP objective hypothesis.} Our \autoref{prop:l2} and \autoref{prop:fdiv} suggest that it suffices to use $\ell_2$ objective in deterministic tasks while KL divergences might be more effective in stochastic ones. However, these are theoretical assumptions that do not fully account for the complexities of the learning process. Additionally, most existing research tends to focus on a single objective type in deterministic settings (as summarized in \autoref{tab:main_prior_work}), leaving the performance of alternative objectives, particularly in stochastic tasks, largely unexplored. A notable exception is AIS~\citep{subramanian2022approximate} which discusses various \ZP objectives but lacks practical evaluation on them. 
    
    These gaps in the literature motivate us to undertake a thorough comparison of these \ZP objectives in practical settings.

    \item \textbf{Motivating the \ZP stop-gradient hypothesis.} Our \autoref{thm:collapse} suggests that applying stop-gradient to \ZP targets could help mitigate representational collapse. However, this prediction is based on linear models without incorporating RL loss, which is a significant departure from deep RL scenarios. While most prior studies focus on one type of \ZP target without delving into collapse issues (as summarized in \autoref{tab:main_prior_work}), SPR~\citep{schwarzer2020data} is an exception, comparing online and EMA encoders in Atari tasks. Nonetheless, SPR's analysis focuses on return performance and lacks direct evidence of representational collapse. 
    
    Addressing these research gaps, we aim to conduct an extensive comparison of \ZP targets in both MDPs and POMDPs. Our analysis includes providing direct evidence through the estimation of representational rank.
    
\end{itemize}

\section{Experimental Details}
\label{sec:experiment_details}

\subsection{Small Scale Experiments to Illustrate \autoref{thm:collapse}} 

In this section, we discuss the details of the experiments used to explore the empirical effects of using stop-gradient to detach the \ZP target in the self-predictive loss. First, we discuss the details shared between both domains and then discuss domain-specific details.

We learn on data obtained by rolling out 10 trajectories under a fixed, near-optimal policy starting from a random state. Trajectories are followed until termination or until 200 transition have been observed, whichever happens first. The encoder, $\phi \in \mathbb{R}^{k\times 2}$ where $k$ is the number of observed features, is updated using full gradient descent with a small learning rate, $\alpha = 0.01$, for 500 steps. At every 10 steps, the absolute cosine similarity between the 2 columns of $\phi$ is computed, i.e., $f(x, y) = |x^\top y |/ (||x||_2 ||y||_2)$ and the results are plotted in \autoref{fig:innerproduct}. The optimal transition model $\theta^* = \begin{bmatrix} {\theta_z^*}^\top & {\theta_a^*}^\top \end{bmatrix}^\top$ is solved using singular value decomposition and the Moore-Penrose inverse to minimize the linear least-squares objective: 
\begin{align}
    \left\lVert \begin{bmatrix}
    \phi^\top S & A
    \end{bmatrix} \begin{bmatrix}
            \theta_z \\
            \theta_a
        \end{bmatrix} - \widetilde{\phi}^\top S' \right\rVert_2,
\end{align}
where $S$ and $S'$ are matrices with each row corresponding to the sampled states (histories) and next states (histories), respectively, and, similarly, $A$ is a row-wise matrix of the sampled actions. The $\widetilde{\phi}$ is set as $\phi$ in online target, or $\bar \phi$ in detached target and EMA target where the Polyak step size $\tau=0.005$. 
To avoid numerical issues, singular values close to zero are discarded according to the default behavior of JAX's~\citep{jax2018github} \texttt{jax.numpy.linalg.lstsq} method when using \texttt{float32} encoding.

\paragraph{Mountain car~\citep{moore1990efficient}.}
We follow the dynamics and parameters used in~\citep[Example 10.1]{sutton2018reinforcement}. We encode states using a $10\times 10$ uniform grid of radial basis function (RBF), e.g., $f_i(s) = \exp(-(s - c_i)^\top \Sigma^{-1} (s - c_i))$ for an RBF centered on $c_i$, and with a width corresponding to $0.15$ of the span of the state space. Specifically, $\Sigma$ is diagonal and normalizes each dimension such that the width of the RBF covers $0.15$ in each dimension. As a result, the total number of features $k=100$. Actions are encoded using one-hot encoding and $|\mathcal{A}| = 3$. The policy used to generate data is an energy pumping policy which always picks actions that apply a force in the direction of the velocity and applies a negative force when the speed is zero.

\paragraph{Load-unload~\citep{meuleau2013solving}.}
Load-unload is a POMDP with 7 states arranged in a chain. There are 2 actions which allow the agent to deterministically move left or right along the chain, while attempting to move past the left-most or right-most state results in no movement. There are three possible observations which deterministically correspond to being in the left-most state, the right-most state or in any one of the 5 intermediate states. Observations and actions are encoded using one-hot encodings. The agent's state correspond to the history of observation and actions over a fixed window of size 20 with zero padding for a total of $k=98$ features ($k = 20 \times 3 + 19 \times 2$). Finally, the policy used to generate trajectories is a stateful policy that repeats the last action with probability $0.8$ and always starting with the \texttt{move-left} action.

\subsection{MDP Experiments in \autoref{sec:standard_mdps} and \autoref{sec:distracted_mdps}}
\label{sec:mdp_details}

\paragraph{Standard MuJoCo in \autoref{sec:standard_mdps}.} This is a popular continuous control benchmark from OpenAI Gym~\citep{brockman2016openai}. We evaluate on Hopper-v2 ($11$-dim), Walker2d-v2 ($17$-dim), HalfCheetah-v2 ($17$-dim), Ant-v2 ($111$-dim), and Humanoid-v2 ($376$-dim), where the numbers in the brackets are observation dimensions.

\paragraph{Distracting MuJoCo in \autoref{sec:distracted_mdps}.} We follow \citet{nikishin2022control} to augment the state space with a distracting dimension in Hopper-v2, Walker2d-v2, HalfCheetah-v2, and Ant-v2. The number of distractors varies from $2^4=16$ to $2^8=256$. Therefore, the largest observation dimension is $256+111=367$ in distracting Ant-v2. The distractors follow i.i.d. standard Gaussian $\mathcal N(0,I)$. 

Our algorithm setup in \autoref{sec:standard_mdps} and \autoref{sec:distracted_mdps} largely follows the code of ALM(3)~\citep{ghugare2022simplifying}\footnote{\url{https://github.com/RajGhugare19/alm}}. 
The original ALM paper also introduces an ablation of the method, ``ALM-no-model'',  which uses model-free RL (rather than SVG) to update the actor parameters. This ablation is structurally similar to our method, which similarly avoids using a model. 
However, ALM-no-model still employs a reward model and a latent model for learning representations, although not for updating policy. 

Below, we compare ALM and our minimalist $\phi_L$ implementation. We show ablation results comparing our method and ALM variants in \autoref{sec:ablation}.   

\paragraph{Differences between our minimalist $\phi_L$ (with reverse KL and EMA targets) and ALM.}
\begin{itemize}[leftmargin=*]
    \item \textbf{Reward model}: we remove reward models from both ALM(3) and ALM-no-model. It should be noted that although ALMs learn reward models, they do not update their encoders through reward prediction loss. 

    \item \textbf{Encoder objective}: our state encoder ($\phi$) is updated by \autoref{eq:update}. Given a probabilistic encoder $\mathbb P_\phi(z\mid s)$ and a probabilistic latent model $\mathbb P_\theta(z' \mid z,a)$ for MDPs, and the latent state $z_\phi \sim \mathbb P_\phi(z\mid s)$, we formulate our encoder objective for a data tuple $(s,a,r,s')$ as follows: 
    \begin{equation}
    \begin{split}
    \label{eq:our_rkl}
    \min_\phi\quad  &(Q_{\omega}(z_{\phi},a) - Q^{\text{tar}}(s,a,s',r))^2 -Q_{\omega}(\textcolor{red}{z_{\overline\phi}},\pi_{\nu} (\textcolor{blue}{z_{\phi}})) + \kl(\mathbb P_\theta(z' \mid z_\phi,a)\mid\mid \mathbb P_{\overline \phi}(z'\mid s') ).
    \end{split}
    \end{equation}
    In contrast, \textbf{ALM-no-model} employs a more complicated objective to train the state encoder $\phi$.  It performs a $1$-step rollout with the reward model $R_{\mu}(z,a)$ without a discount factor, and modify the stop-gradients on latent states within the $Q$-value. Given a data tuple $(s,a,s')$, the objective is
    \begin{equation}
    \begin{split}
    \label{eq:alm-no-svg}
    \min_\phi\quad &-R_\mu(z_\phi,a)  - \E{z'_{\phi,\theta} \sim \mathbb P_\theta(\mid z_\phi,a)}{Q_{\omega}(\textcolor{blue}{z'_{\phi,\theta}},\pi_{\nu} (\textcolor{red}{z'_{\overline \phi,\theta}}))}  + \kl(\mathbb P_\theta(z' \mid z_\phi,a)\mid\mid \mathbb P_{\overline \phi}(z'\mid s') ),
    \end{split}
    \end{equation}
    where they eliminate the mean-squared TD loss and maximize the $Q$-value through its latent states rather than actions, as done in our objective. \textbf{ALM(3)} extends  ALM-no-model by implementing a $3$-step rollout in the encoder objective. 

    To isolate the design of stop-gradients in $Q$-value and mean-squared TD error, we introduce \textbf{ALM(0)} that lies between ALM-no-model and ours with the $0$-step objective:
    \begin{align}
    \label{eq:alm-no-reward}
    \min_\phi\quad  -Q_{\omega}(\textcolor{blue}{z_{\phi}},\pi_{\nu} (\textcolor{red}{z_{\overline \phi}})) + \kl(\mathbb P_\theta(z' \mid z_\phi,a)\mid\mid \mathbb P_{\overline \phi}(z'\mid s') ).
    \end{align}

    \item \textbf{Actor objective}: our algorithm share the same actor objective with ALM-no-model and ALM(0), compared to ALM(3) which uses SVG with a $3$-step rollout~\citep{heess2015learning} and additional intrinsic rewards (\ie the negative reverse KL divergence term; see Eq.~8 in ALM paper for details). 
\end{itemize}

\paragraph{Implementation details for our minimalist algorithm learning $\phi_L$, and learning $\phi_O$ and $\phi_{Q^*}$.} 
We follow the exact implementation of the network architectures in ALM(3). The encoder, actor, and critic are parameterized as 2-layer neural networks with 512 hidden units.
The latent transition model (only used in learning $\phi_L$) and observation predictor (only used in learning $\phi_O$) are parameterized as 2-layer networks with 1024 hidden units. 
The probabilistic encoder, latent model and decoder output a Gaussian distribution with a diagonal covariance matrix. We apply layer normalization~\citep{ba2016layer} after the first layer of the critic network. 
We use ELU activation~\citep{clevert2015fast} and Adam optimizers~\citep{kingma2014adam} for all networks.

We enumerate the values of our hyperparameters in \autoref{tab:hparams_mdps}. If a hyperparameter is shared with ALM(3), we maintain the same value as that used in ALM(3)~\citep[Table~3]{ghugare2022simplifying}.

\begin{table}[t]
    \centering
    \caption{\textbf{Hyperparameters used in Markovian agents in standard and distracting MuJoCo.}}
    \vspace{0.5em}
    \begin{tabular}{cc}
    \toprule
       Hyperparameter  & Value  \\
       \midrule
        Discount factor ($\gamma$) & $0.99$ \\ 
        Warmup steps & $5000$ \\ 
        Target network update rate ($\tau$) & $0.005$ \\ 
        Replay buffer size & $10^6$ for Humanoid-v2 and $10^5$ otherwise \\ 
        Batch size & $512$ \\ 
        Learning rate & $0.0001$ \\ 
        Max gradient norm & $100$ \\ 
        Latent state dimension & $50$ \\ 
        Exploration stddev. clip & 0.3 \\ 
        Exploration stddev. schedule & linear$(1.0,0.1,100000)$\\
        Auxiliary loss coefficient ($\lambda$) & $1.0$ for \ZP-FKL, \ZP-RKL and \OP, and $10.0$ for \ZP-$\ell_2$ \\
        \bottomrule
    \end{tabular}
    \label{tab:hparams_mdps}
\end{table}

\subsection{POMDP Experiments in \autoref{sec:pomdps}}
\label{sec:pomdp_details}
\paragraph{MiniGrid in \autoref{sec:pomdps}.} This is a widely-used discrete gridworld benchmark from Farama foundation~\citep{gym_minigrid,MinigridMiniworld23}. In this benchmark, an agent has a first-person view to navigate a 2D gridworld with obstacles (\eg, walls and lava). Some tasks require the agent to pick up keys and open doors to navigate to the goal location. The agent's observations are symbolic (not pixel-based) with a size of $7\times 7 \times 3$ where $7\times 7$ is the spatial field of view, and the $3$ channels encode different semantics. The action space is discrete with $7$ options: turn left, turn right, move forward, pick up, drop, toggle, and done. 
Tasks are goal-oriented; the episode terminates immediately when the agent reaches the goal, or times out after a maximum of $T$ steps. 
Rewards are designed to encourage fast task completion. A successful episode yields a reward of $1-0.9 * H / T \in [0.1,1.0]$ at the terminal step, where $H$ denotes the total steps. Failed episodes result in a reward of $0.0$.   

We select 20 tasks in MiniGrid, following the recent work RQL-AIS~\citep{seyedsalehi2023approximate}. All of these tasks require memory in an agent. The tasks are grouped as follows:
\begin{itemize}[leftmargin=*]
    \item \textbf{SimpleCrossing} (4 tasks): SimpleCrossingS9N1, SimpleCrossingS9N2, SimpleCrossingS9N3, SimpleCrossing11N5 
    \item \textbf{LavaCrossing} (4 tasks): LavaCrossingS9N1, LavaCrossingS9N2, LavaCrossingS9N3, LavaCrossing11N5 
    \item \textbf{Unlock} (2 tasks): Unlock, UnlockPickup
    \item \textbf{DoorKey} (3 tasks): DoorKey-5x5, DoorKey-6x6, DoorKey-8x8
    \item \textbf{KeyCorridor} (3 tasks): KeyCorridorS3R1, KeyCorridorS3R2, KeyCorridorS3R3
    \item \textbf{ObstructedMaze} (2 tasks): ObstructedMaze-1Dl, ObstructedMaze-1Dlh
    \item \textbf{MultiRoom} (2 tasks): MultiRoom-N2-S4, MultiRoom-N4-S5
\end{itemize}
In each group, we arrange the tasks by increasing level of difficulty. Please refer to the MiniGrid website\footnote{\url{https://minigrid.farama.org/environments/minigrid/}} for detailed descriptions. 
Note that while RQL-AIS also evaluates the RedBlueDoors tasks, we have omitted them from our selection as they are MDPs. 

\paragraph{Implementation details on algorithms.} We adopt the RQL-AIS codebase\footnote{\url{https://github.com/esalehi1996/POMDP_RL}} for our implementation of a non-distributed version of R2D2~\citep{kapturowski2018recurrent}. We retain their exact implementation and hyperparameters for R2D2, which includes a recurrent replay buffer with uniform sampling, a $50$-step burn-in period, a $10$-step rollout, and a stepsize of $5$ for multi-step double Q-learning. The only difference is that we replace the periodic hard update of target networks with a soft update, to align with our EMA setting. 

We implement our end-to-end approaches based on R2D2. Minimalist $\phi_L$ introduces a single auxiliary task of \ZP; while $\phi_O$ adds a single auxiliary task of \OP. Both use deterministic $\ell_2$ loss. We normalize the loss coefficient $\lambda$ \autoref{eq:update} by the output dimension (\ie, $128$ for \ZP loss and $147$ for \OP loss) to balance with Q-learning scalar loss. We tune the normalized $\lambda$ between $(0.01, 0.03, 0.3, 1.0, 3.0, 10.0, 100.0)$ in SimpleCrossing and LavaCrossing tasks. We find that $0.01$ works best for \OP and $1.0$ best for \ZP. 

Furthermore, we also implement the phased approaches based on R2D2. Both $\phi_L$ (\RP + \ZP) and $\phi_O$ (\RP + \OP) freeze the encoder parameters during Q-learning. We introduce the coefficient $\alpha$ multiplied to \ZP or \OP loss to integrate with \RP loss. All three losses use deterministic $\ell_2$ objectives. We normalize $\alpha$ to balance reward scalar loss and tune it between $(0.01, 0.1, 0.3, 1.0, 3.0, 10.0)$ in SimpleCrossing and LavaCrossing tasks. We find $1.0$ works best for both \RP + \ZP  and \RP + \OP. 

We enumerate the values of our hyperparameters in \autoref{tab:hparams_pomdps}. If a hyperparameter is shared with R2D2 implemented by RQL-AIS, we maintain the same value as that used in RQL-AIS paper~\citep[Table~3]{seyedsalehi2023approximate}. Our network architecture exactly follows RQL-AIS (see their Appendix F).

Lastly, it is important to highlight the distinction between our implementation of \RP + \OP and the original RQL-AIS approach~\citep{seyedsalehi2023approximate}. Both approaches aim to learn $\phi_O$ in a phased manner with the same architecture. The main differences are:
\begin{itemize}[leftmargin=*]
    \item RQL-AIS employs a pre-trained autoencoder to compress the $147$-dimensional observations into $64$-dimensional latent representations. Then RQL-AIS trains their agent using latent representations while keeping the autoencoder parameters frozen. In contrast, our \RP + \OP implementation removes the autoencoder and instead directly predicts raw observations. 
    \item RQL-AIS uses MMD loss for observation prediction, which we show is equivalent to learning \EZP condition (see our discussion in \autoref{app:prior_works} on AIS). Thus, in \RP + \OP implementation, we replace the MMD loss with an $\ell_2$ loss.
    \item The loss coefficient $\alpha$ is set to $0.5$ in RQL-AIS while $0.1/147$ in our implementation.
    \item We use soft update on target Q network to align with our other implementations.
\end{itemize}
Despite these implementation differences, we find that our \RP + \OP implementation performs \textit{similarly} to RQL-AIS across the 20 tasks.

\begin{table}[t]
    \centering
    \caption{\textbf{Hyperparameters used in recurrent agents in MiniGrid.}}
    \vspace{0.5em}
    \begin{tabular}{cc}
    \toprule
       Hyperparameter  & Value  \\
       \midrule
        Discount factor ($\gamma$) & $0.99$ \\ 
        Number of environment steps & $4*10^6$ \\ 
        Target network update rate ($\tau$) & $0.005$ \\ 
        Replay buffer size & full \\ 
        Batch size & $256$ \\ 
        Learning rate & $0.001$ \\ 
        Latent state dimension & $128$ \\ 
        Epsilon greedy schedule & exponential$(1.0,0.05,400000)$\\
        R2D2 sequence length & $10$ \\ 
        R2D2 burn-in sequence length & $50$ \\
        $n$-step TD & $5$ \\
        Training frequency & every $10$ environment steps \\
        Auxiliary loss coefficient ($\lambda$) & $1.0/128$ for \ZP and $0.01/147$ for \OP \\
        Loss coefficient for phased training ($\alpha$) & $1.0/128$ for \RP + \ZP and $1.0/147$ for \RP + \OP \\ 
        \bottomrule
    \end{tabular}
    \label{tab:hparams_pomdps}
\end{table}

\subsection{Evaluation Metrics}
We evaluate the \textbf{episode return} by executing the deterministic version of the actor to compute the undiscounted sum of rewards. 

We estimate the \textbf{rank} of a batch of latent states by calling \texttt{torch.linalg.matrix\_rank(atol, rtol)} function in PyTorch~\citep{paszke2019pytorch}. This function calculates the number of singular values that are greater than $\max(\mathtt{atol}, \sigma_1 * \mathtt{rtol})$ where $\sigma_1$ is the largest singular value. In MDP experiments, the batch has a size of $(512, 50)$ with \texttt{atol=1e-2, rtol=1e-2}. In POMDP experiments, the batch has a size of $(256 * 10, 128)$ with \texttt{atol=1e-3, rtol=1e-3}, where we reshape the 3D tensor of $(256, 10, 128)$ size into 2D matrix. 

Each algorithm variant of the experiments is conducted across 12 individual runs in MDPs and 9 individual runs in POMDPs. 

We employ the Rliable library~\citep{agarwal2021deep} to compute the IQM and its CI for the aggregated curves (\autoref{fig:minigrid}). Essentially, IQM is the $25\%$ trimmed mean over the data on 20 tasks with 9 seeds, \ie, 180 runs.

\subsection{Computational Resources}

It requires around 1.5 days for us to train our algorithm in a (distracting) MuJoCo task for 1.5M environment steps with 3 runs executed in parallel. The 3 runs share a single A100 GPU and utilize 3 CPU cores. 

On the same machine, training cost (in secs) per update for Ant-v2 is as follows: model-free agents take around 0.032s, self-predictive and observation-predictive agents with $\ell_2$ objective take around 0.036s (13\% more), self-predictive and observation-predictive agents with KL objective take around 0.038s (19\% more), ALM(3) agent takes around 0.058s (81\% more). The brackets show the percentage increase compared to model-free agents.

It requires around 0.5 days for us to train our algorithm in a MiniGrid task for 4M environment steps with 3 runs executed in parallel. The 3 runs share a single V100 GPU and utilize 3 CPU cores. 

\section{Architecture and Code}
\label{sec:code}

\begin{figure}[h]
    \centering

\begin{minipage}[t]{0.7\linewidth}
\begin{tikzpicture}[>=latex',line join=bevel,very thick,scale=0.7]
\node (h) at (18.0bp,72.0bp) [draw,circle] {$h$};
  \node (phi) at (99.0bp,45.0bp) [draw,fill=yellow,trapezium,shape border rotate=270] {$f_\phi$};
  \node (z) at (180.0bp,95.0bp) [draw,circle] {$z$};
  \node (Q) at (261.0bp,95.0bp) [draw,fill=lightgray,rectangle] {$Q_\omega,\pi_\nu$};
  \node (P) at (261.0bp,41.0bp) [draw,fill=lightgray,rectangle] {$g_\theta$};
  \node (a) at (180.0bp,41.0bp) [draw,circle] {$a$};
  \node (r) at (342.0bp,95.0bp) [draw,circle] {$r$};
  \node (next_h) at (18.0bp,18.0bp) [draw,circle] {$h'$};
  \node (next_z) at (342.0bp,18.0bp) [draw,circle] {$z'$};
  \draw [->] (h) ..controls (43.105bp,63.632bp) and (52.841bp,60.386bp)  .. (phi);
  \draw [->] (z) ..controls (205.9bp,95.0bp) and (214.88bp,95.0bp)  .. (Q);
  \draw [->] (z) ..controls (203.77bp,79.154bp) and (214.8bp,71.803bp)  .. (P);
  \draw [->] (a) ..controls (203.77bp,56.846bp) and (214.8bp,64.197bp)  .. (Q);
  \draw [->] (a) ..controls (205.9bp,41.0bp) and (214.88bp,41.0bp)  .. (P);
  \draw [->] (next_h) ..controls (43.105bp,26.368bp) and (52.841bp,29.614bp)  .. (phi);
  \draw [->] (phi) ..controls (135.92bp,67.792bp) and (146.5bp,74.321bp)  .. (z);
  \draw [->,dashed] (phi) ..controls (136.94bp,22.528bp) and (149.65bp,16.873bp)  .. (162.0bp,14.0bp) .. controls (214.56bp,1.769bp) and (278.09bp,7.8538bp)  .. (next_z);
  \draw [] (Q) ..controls (299.99bp,95.0bp) and (313.15bp,95.0bp)  .. (r);
  \draw [] (P) ..controls (300.36bp,29.825bp) and (313.99bp,25.955bp)  .. (next_z);
\end{tikzpicture}
\end{minipage}
    \caption{\textbf{Architecture of our minimalist $\phi_L$ algorithm.} The dashed edge indicates the stop-gradient operator; the undirected edges indicate learning from grounded signals of rewards or next latent states. }
    \label{fig:arch_end_to_end}
\end{figure}
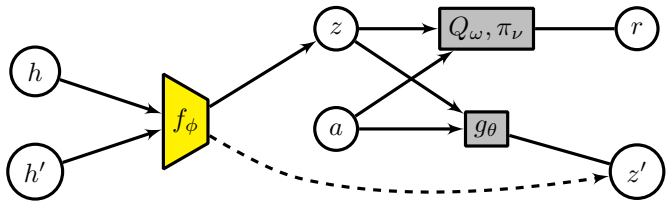

Besides Algo.~\ref{pseudo-code}, we provide a pseudocode written in PyTorch syntax~\citep{paszke2019pytorch} in Algo.~\ref{pytorch-code}. \autoref{fig:arch_end_to_end} shows our architecture.  We also have open-sourced our code at \url{https://github.com/twni2016/self-predictive-rl}.

\begin{algorithm}[H]
    \caption{Our loss function for learning self-predictive representation in PyTorch syntax}
\label{pytorch-code}
\footnotesize
\begin{lstlisting}
def loss(hist, act, next_obs, rew):
    # hist:(B,T,O+A), act:(B,A), next_obs:(B,O), rew:(B,1)
    from torch import cat; from copy import deepcopy

    # Encode histories into latent states
    h_enc = Encoder(hist) #(B,Z)
    next_hist = cat([hist, cat([act, next_obs], dim=-1)], dim=1) #(B,T+1,O+A)
    next_h_enc_tar = Encoder_Target(next_hist) #(B,Z)

    # Compute RL loss
    td_tar = rew + gamma * Critic_Target(next_h_enc_tar, Actor(next_h_enc_tar))
    critic_loss = ((Critic(h_enc, act)- td_tar.detach())**2).mean()
    actor_loss = -deepcopy(Critic)(h_enc.detach(), Actor(h_enc)).mean()

    # Compute ZP loss
    zp_loss = ((Latent_Model(h_enc, act) - next_h_enc_tar)**2).sum(-1).mean()
    
    return critic_loss + actor_loss + zp_coef * zp_loss
\end{lstlisting}
\end{algorithm}

\section{Additional Empirical Results}
\label{sec:ablation}

\paragraph{Ablation studies comparing our minimalist $\phi_L$ to ALM variants.}  \autoref{fig:ablation_ALM} shows the comparison between our method and several ALM variants (introduced in \autoref{sec:mdp_details}). Both ALM(3) and ALM-no-model have similar encoder objectives ($1$-step versus $3$-step) but differ in actor objective (model-free TD3 versus model-based SVG); their performance is similar except for the Humanoid-v2 tasks, suggesting that the model-based actor update is most useful on higher-dimensional tasks. 
Comparing ALM-no-model and ALM(0), which only differ in encoder objective ($1$-step versus $0$-step), we see that ALM(0) performs notably worse. 
This suggests that the use of stop-gradients in $Q$-value and the omission of mean-squared TD-error in \autoref{eq:alm-no-reward}~and~\autoref{eq:alm-no-svg} might be problematic, 
although this issue can be considerably mitigated by the $1$-step variant. 
Finally, our minimalist $\phi_L$ \autoref{eq:our_rkl} performs comparably to ALM-no-model on most tasks and significantly outperforms ALM(0). 
These results suggest that our method can achieve the benefits of a $1$-step rollout without having to unroll a model; however, a method that uses a $3$-step rollout can sometimes achieve better results.

\paragraph{Additional results on \ZP targets in standard MuJoCo.}
\autoref{fig:additional_kl_rank} shows the performance and the estimated representational rank for the \ZP KL divergences (FKL and RKL). Similar to findings in $\ell_2$ objective (\autoref{fig:mujoco_optim}), we notice significantly lower returns when removing the stop-gradient version (Detached and EMA). 
Surprisingly, this decreased performance does not seem to be caused by dimensional collapse; on most tasks, the online version of the KL objective does not suffer from dimensional collapse observed for the $\ell_2$ objective. These findings suggest that our estimated representational rank may not be correlated with expected returns.

\paragraph{Ablation studies on \ZP loss in standard MuJoCo.} \autoref{fig:ablation_zp_loss} shows the \ZP losses ($\ell_2$ loss, FKL loss, RKL loss) for each \ZP objective within our minimalist $\phi_L$ algorithm. We include results for the online, detached, and EMA targets. 
As expected, online \ZP targets directly minimize \ZP losses, thus reaching much lower \ZP loss values. However, a lower \ZP loss value does not imply higher returns, since the agent needs to balance the RL loss and \ZP loss.  In future work, we aim to explore strategies to effectively decrease \ZP loss without compromising the performance of the stop-gradient variant.

\paragraph{Failed experiments in standard MuJoCo.}
We did not explore the architecture design and did little hyperparameter tuning on our algorithm. 
Nevertheless, we observed two failure cases. 
To match the assumption of \autoref{thm:collapse} that the gradient \wrt the latent transition parameters $\theta$ reaches zero, we experimented with higher learning rates ($0.1$, $0.01$, $0.001$) for updating the parameters $\theta$ in MuJoCo tasks. Yet, we did not observe any performance increase compared to the default learning rate ($0.0001$). 
Secondly, inspired by the findings in \autoref{fig:ablation_zp_loss}, we tried a constrained optimization on auxiliary task to adaptively update the loss coefficient using gradient descent-ascent for the stop-gradient version. However, this resulted in a significant performance decline without an explicit decrease in \ZP loss values. 

\paragraph{Full per-task curves in distracting MuJoCo.} \autoref{fig:full_curves_distraction} shows all learning curves in distracting MuJoCo. 

\paragraph{Full per-task curves in MiniGrid.} 
\autoref{fig:minigrid_full} shows all learning curves in MiniGrid tasks. 
Minimalist $\phi_L$ (\ZP) is better than model-free RL (R2D2) in 8 of 20 tasks, and similar in the others except for a single task. On the other hand, $\phi_O$ (\OP) is better than model-free RL (R2D2) in 10 tasks, with the other tasks being identical. Since MiniGrid tasks are deterministic without distraction and the observation is not high-dimensional, $\phi_O$ (\OP) outperforming $\phi_L$ (\ZP) in 7 tasks is expected.
The end-to-end $\phi_L$ (\ZP) surpasses its phased counterpart (\RP + \ZP) in 14 tasks, with the rest tasks being the same. The end-to-end $\phi_O$ (\OP) is better than its phased counterpart (\RP + \OP) in 7 tasks, but falls short in 4 tasks. These findings underline the efficacy of the end-to-end approach to learning $\phi_L$ over the phased approach.

\autoref{fig:ZP_optim} shows all matrix rank curves in MiniGrid tasks. 
Across all 20 tasks, online \ZP targets consistently have the \textit{lowest} matrix rank, aligned with our prediction from \autoref{thm:collapse}. However, while \autoref{thm:collapse} shows that both detached and EMA targets avoid collapse in \textit{linear} setting, we observe that detached targets severely collapse in 3 tasks, a phenomenon absent with EMA targets. This prompts further theoretical investigation in a \textit{nonlinear} context. As expected, \OP consistently achieves the highest rank compared to \ZP and R2D2, since \OP learns the finest abstraction.

\begin{figure}[h]
    \centering
    \includegraphics[width=0.32\linewidth]{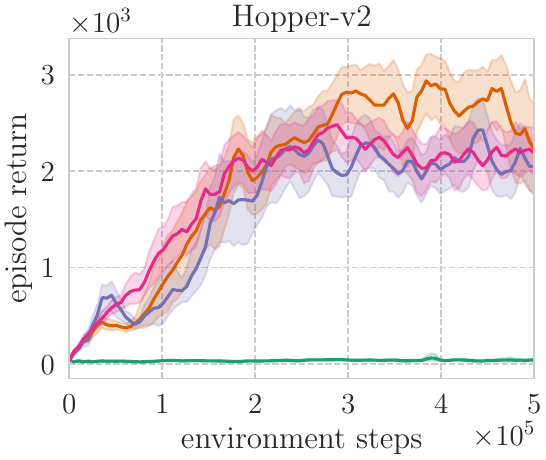}
    \includegraphics[width=0.32\linewidth]{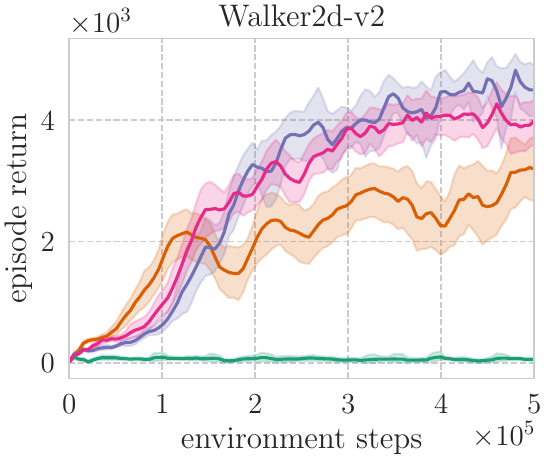}
    \includegraphics[width=0.33\linewidth]{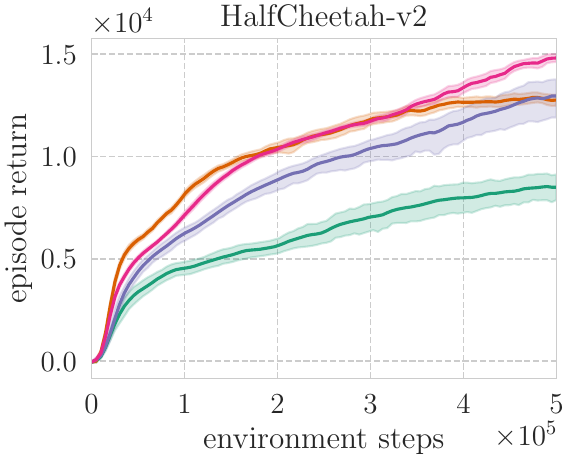}
    \includegraphics[width=0.32\linewidth]{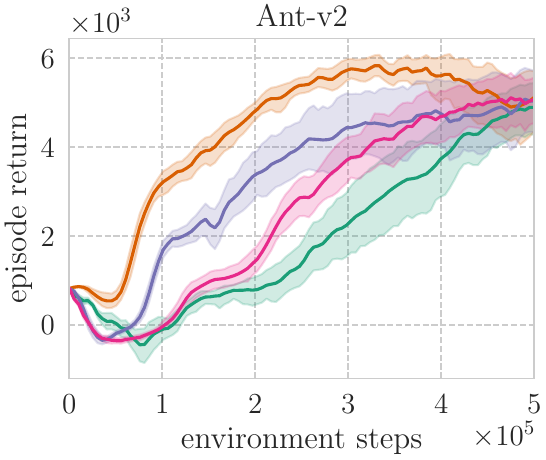}
    \includegraphics[width=0.32\linewidth]{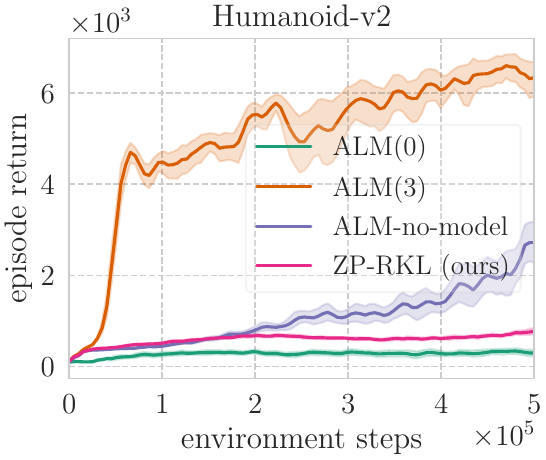}
    \caption{Ablation studies on ALM variants (ALM(3), ALM-no-model, ALM(0)) and our minimalist $\phi_L$ (\ZP-RKL with EMA targets).}
    \label{fig:ablation_ALM}
\end{figure}

\begin{figure}[h]
    \centering
    \textsc{Forward KL} \\
    \includegraphics[width=0.24\linewidth]{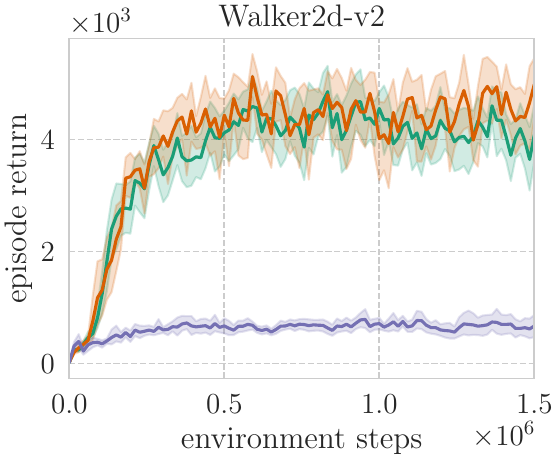}
    \includegraphics[width=0.24\linewidth]{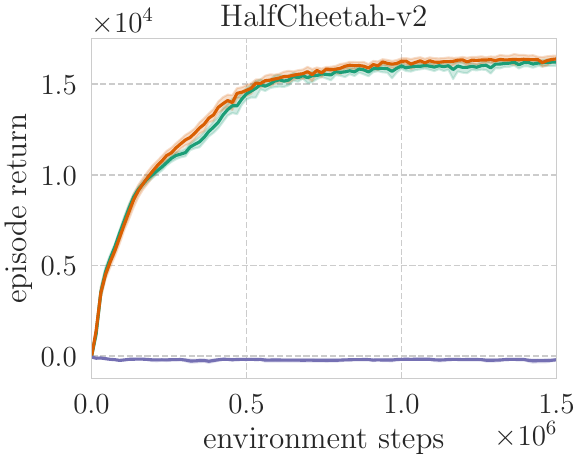}
    \includegraphics[width=0.24\linewidth]{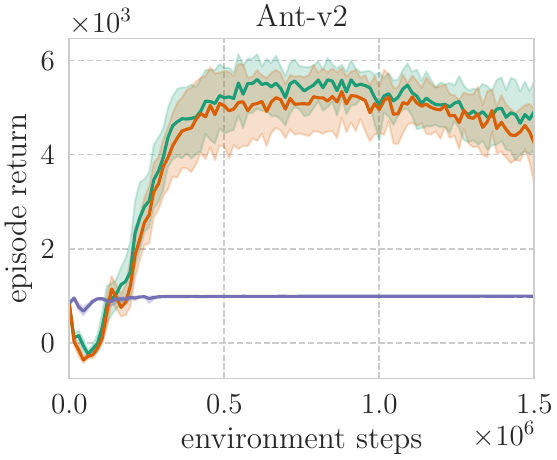}
    \includegraphics[width=0.24\linewidth]{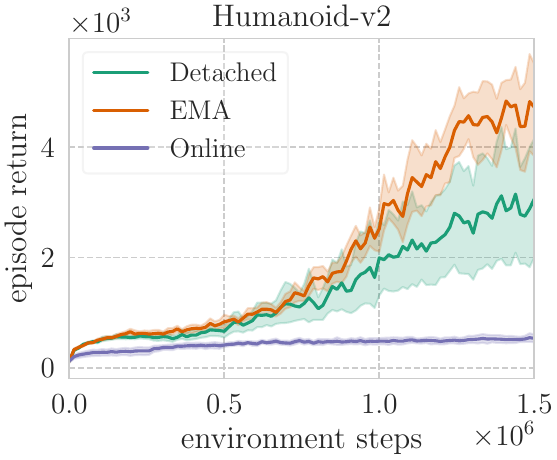}
    
\includegraphics[width=0.24\linewidth]{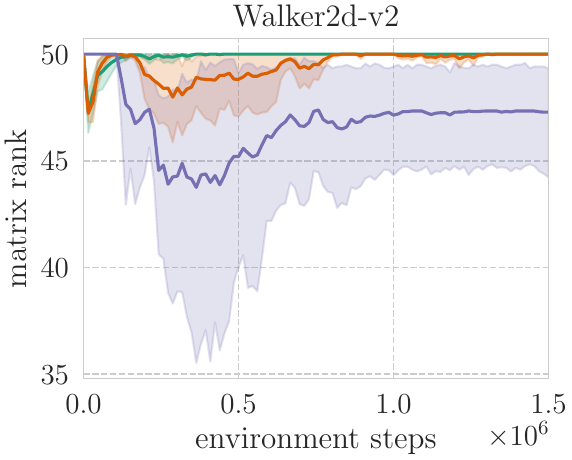}
    \includegraphics[width=0.24\linewidth]{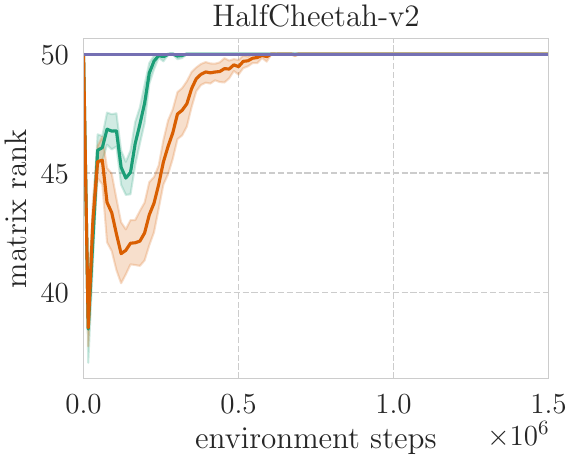}
    \includegraphics[width=0.24\linewidth]{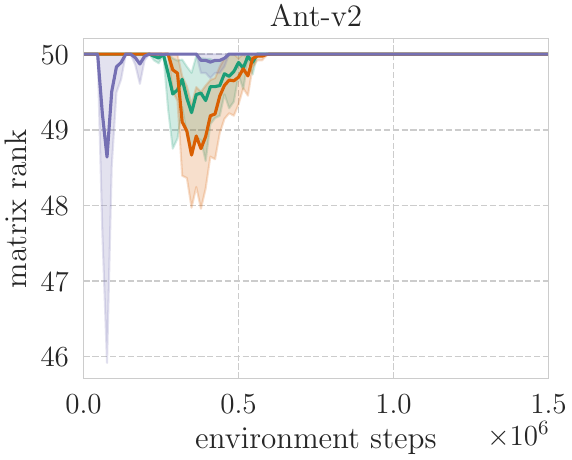}
    \includegraphics[width=0.24\linewidth]{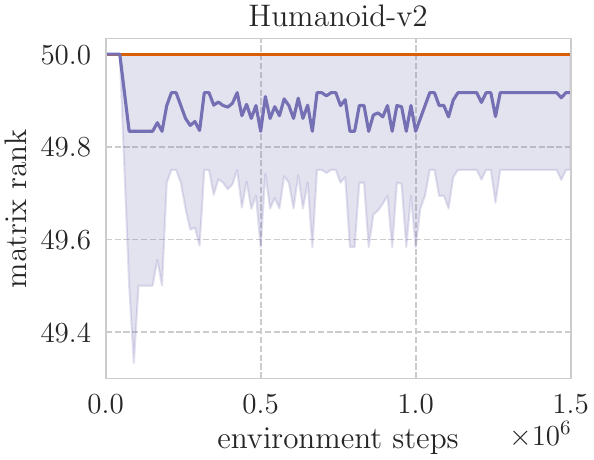}
    
    \textsc{Reverse KL} \\
    \includegraphics[width=0.24\linewidth]{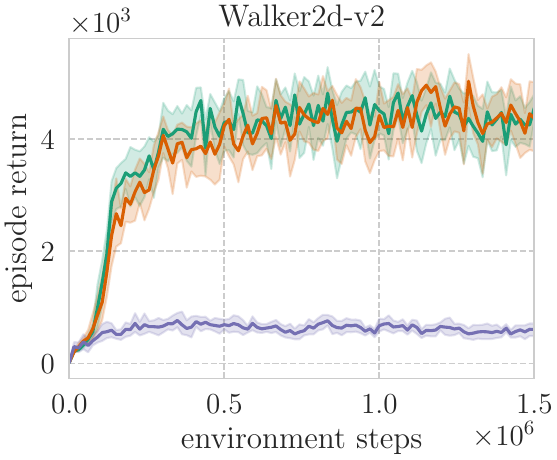}
    \includegraphics[width=0.24\linewidth]{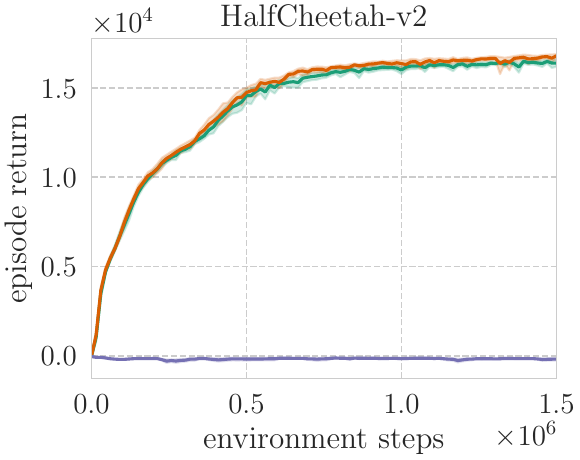}
    \includegraphics[width=0.24\linewidth]{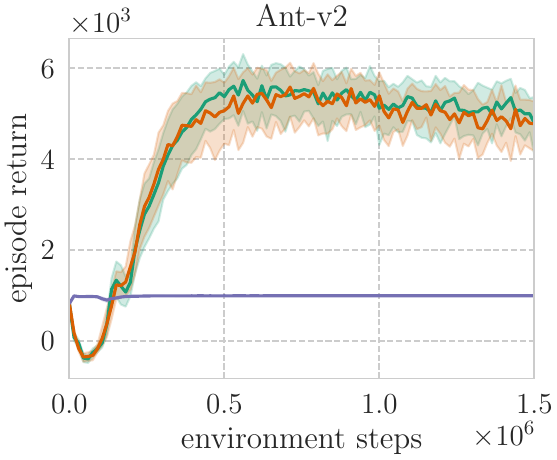}
    \includegraphics[width=0.24\linewidth]{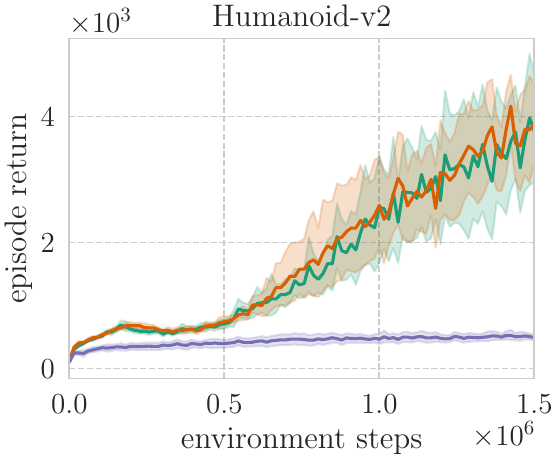}
    
\includegraphics[width=0.24\linewidth]{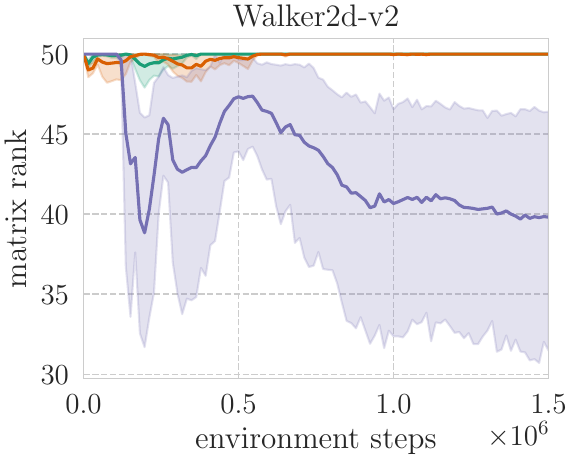}
    \includegraphics[width=0.24\linewidth]{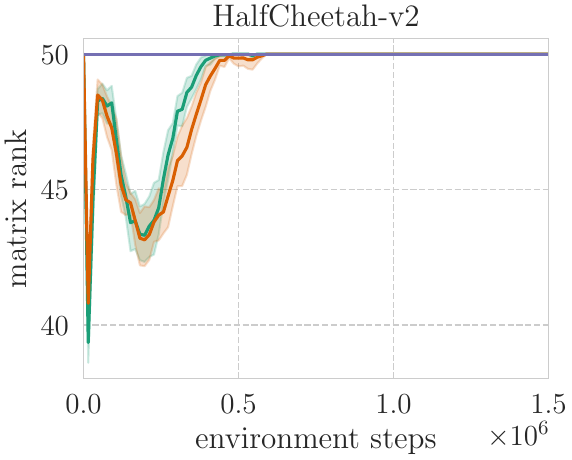}
    \includegraphics[width=0.24\linewidth]{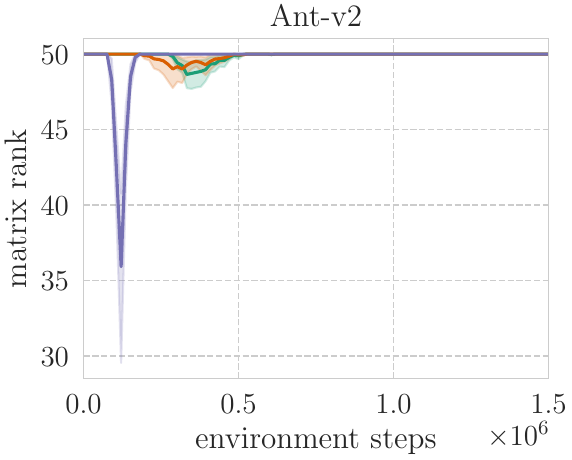}
    \includegraphics[width=0.24\linewidth]{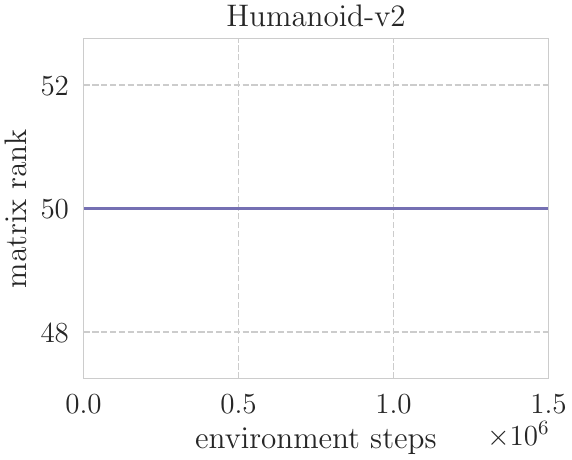}
    
    \caption{\textbf{Representation collapse is less severe with online targets for KL objectives.} On four benchmark tasks, we observe that using the online \ZP target results in lower returns. Rows 1 and 2 show results for the forward KL; rows 3 and 4 show result for the reverse KL.
    }
    \label{fig:additional_kl_rank}
\end{figure}

\begin{figure}[h]
    \centering
    \includegraphics[width=0.24\linewidth]{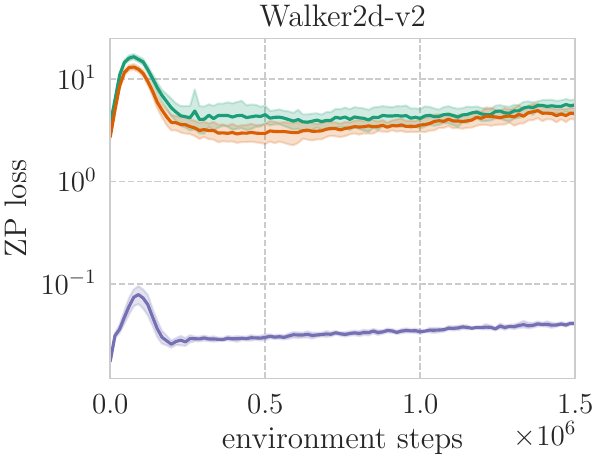}
    \includegraphics[width=0.24\linewidth]{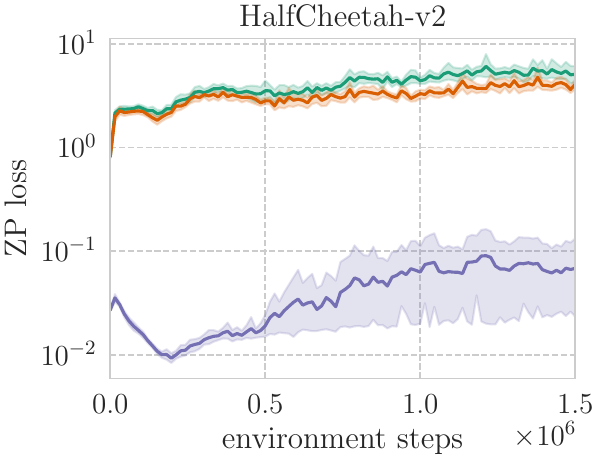}
    \includegraphics[width=0.24\linewidth]{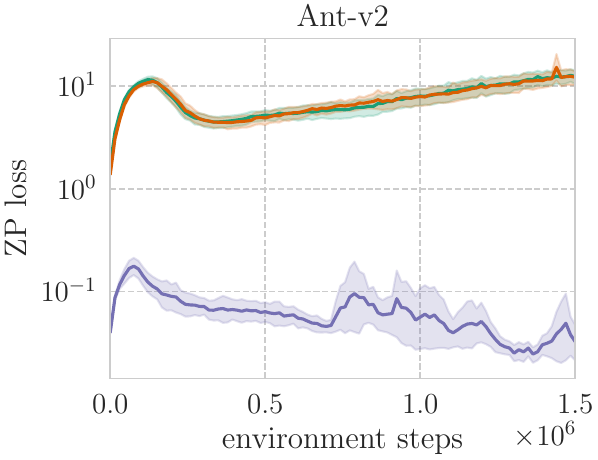}
    \includegraphics[width=0.24\linewidth]{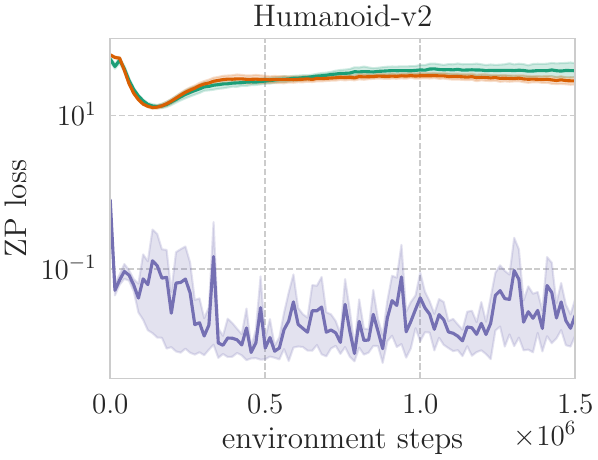}
    \includegraphics[width=0.24\linewidth]{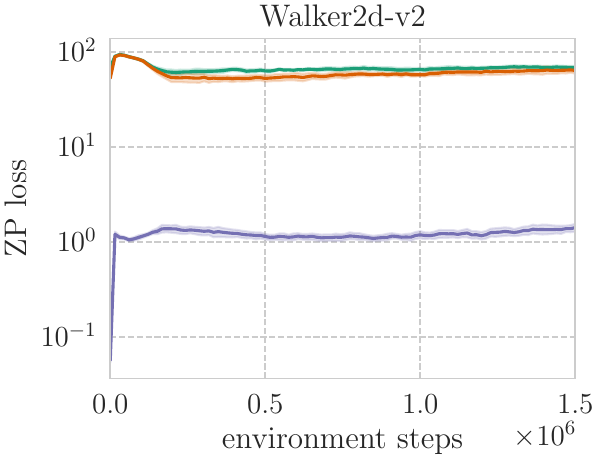}
    \includegraphics[width=0.24\linewidth]{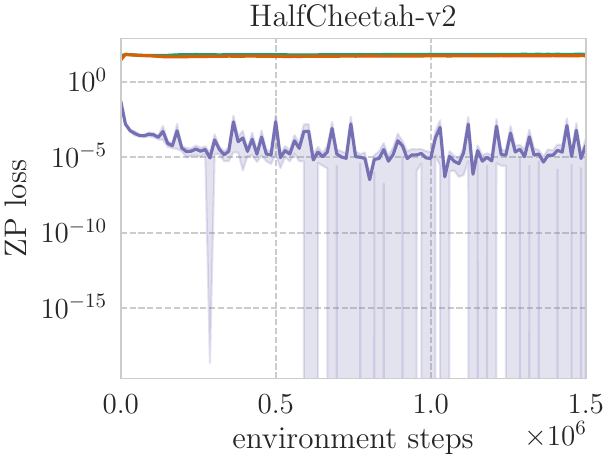}
    \includegraphics[width=0.24\linewidth]{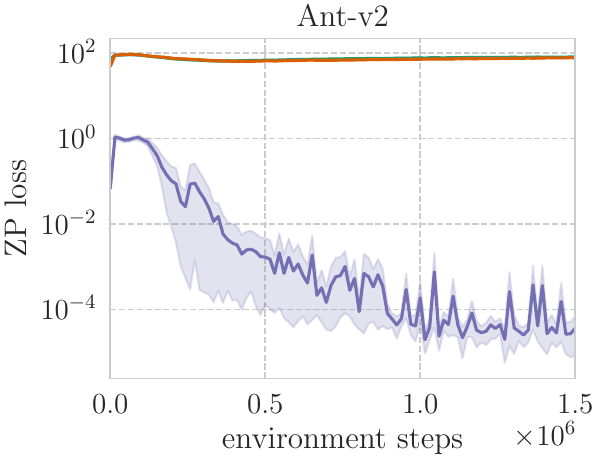}
    \includegraphics[width=0.24\linewidth]{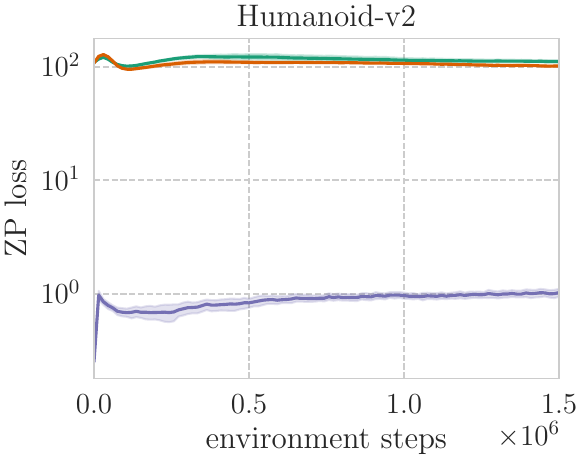}
    \includegraphics[width=0.24\linewidth]{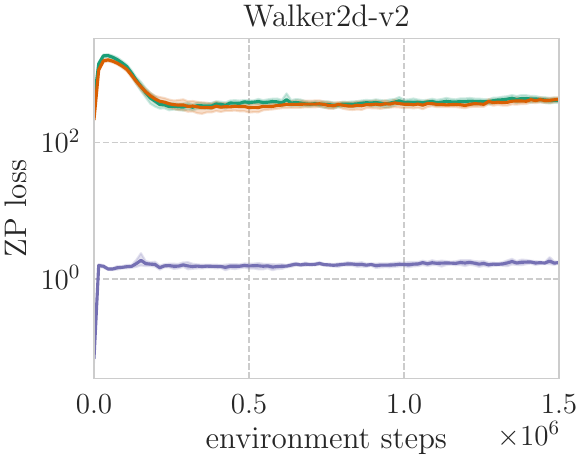}
    \includegraphics[width=0.24\linewidth]{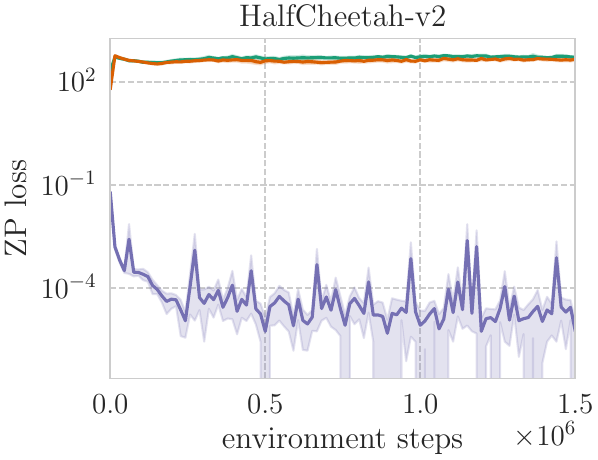}
    \includegraphics[width=0.24\linewidth]{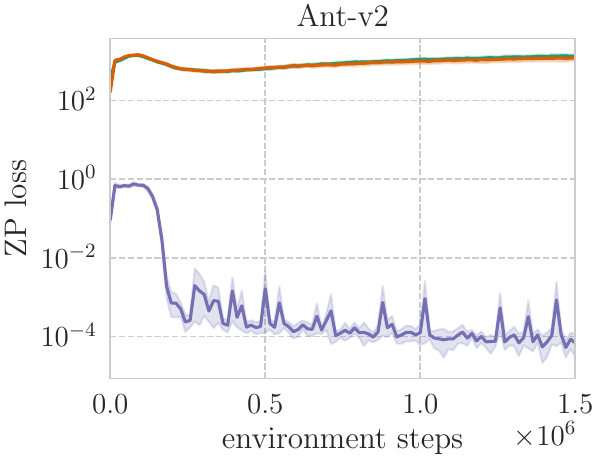}
    \includegraphics[width=0.24\linewidth]{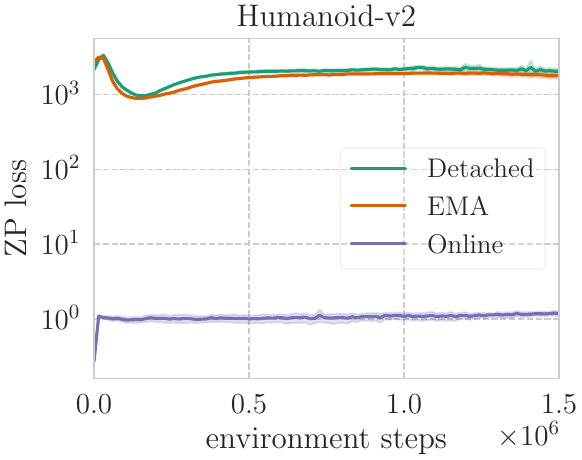}
    \caption{\textbf{Online \ZP targets reach much smaller values on \ZP objectives than stop-gradient \ZP targets in standard MuJoCo.} Top row: \ZP with $\ell_2$ objective; Middle row: \ZP with FKL objective; Bottom row: \ZP with RKL objective.}
    \label{fig:ablation_zp_loss}
\end{figure}

\begin{figure}[h]
    \centering
    \includegraphics[width=0.19\linewidth]{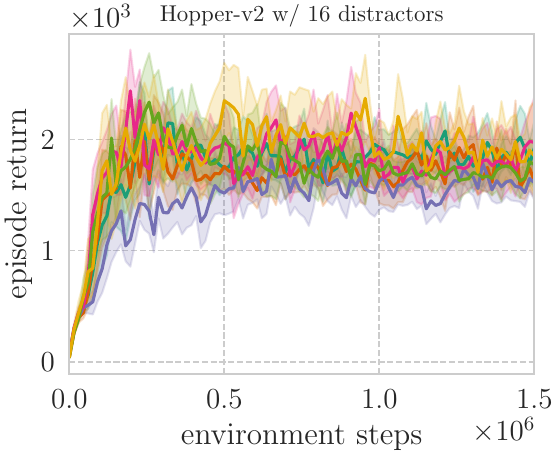}
    \includegraphics[width=0.19\linewidth]{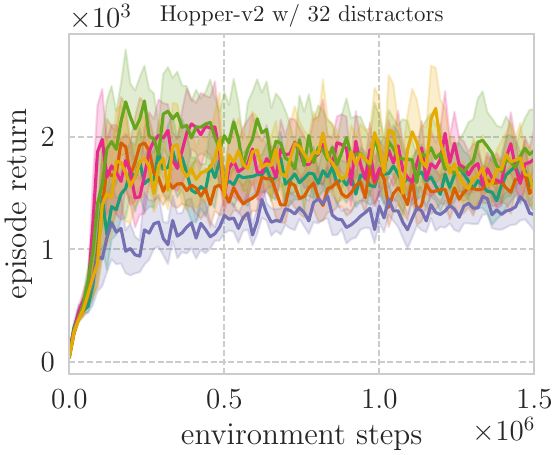}
    \includegraphics[width=0.19\linewidth]{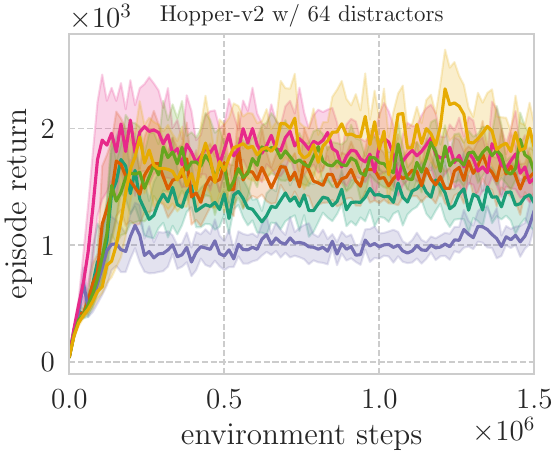}
    \includegraphics[width=0.19\linewidth]{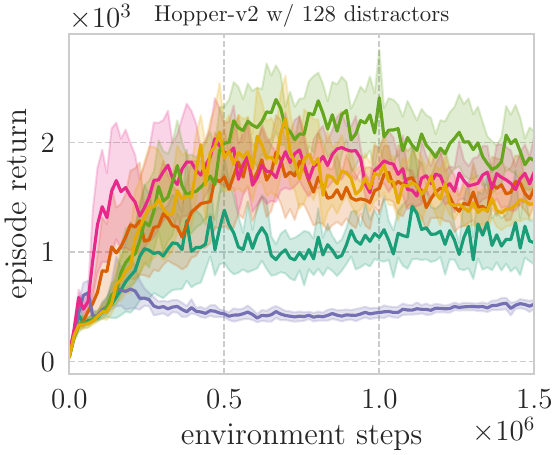}
    \includegraphics[width=0.19\linewidth]{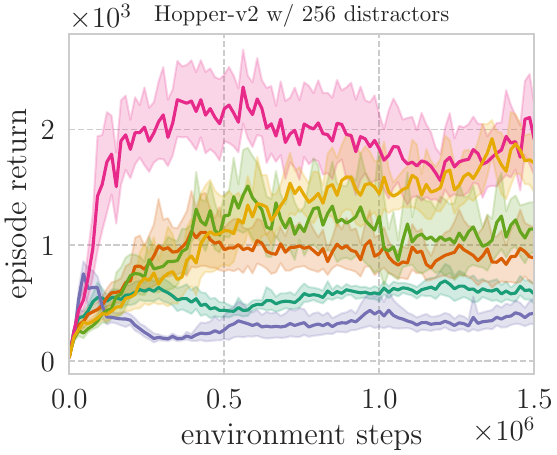}

    \includegraphics[width=0.19\linewidth]{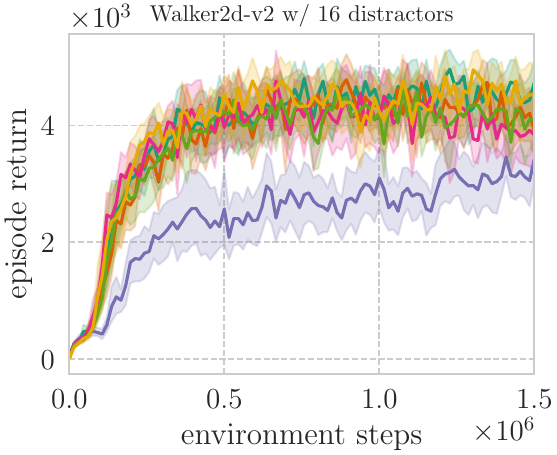}
    \includegraphics[width=0.19\linewidth]{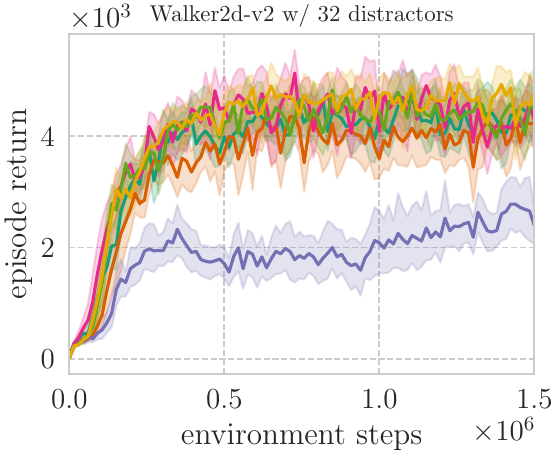}
    \includegraphics[width=0.19\linewidth]{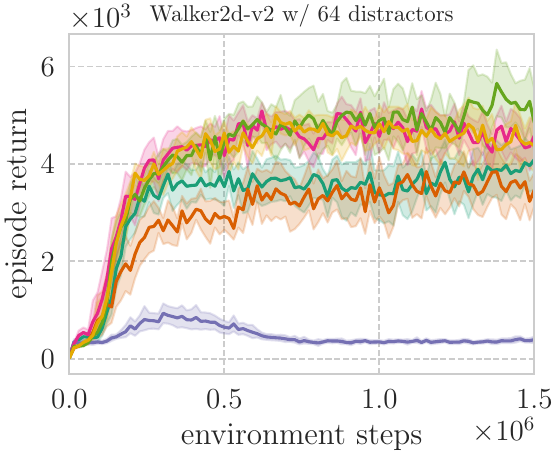}
    \includegraphics[width=0.19\linewidth]{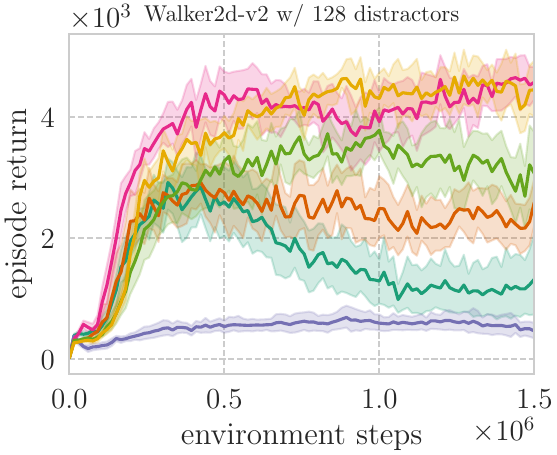}
    \includegraphics[width=0.19\linewidth]{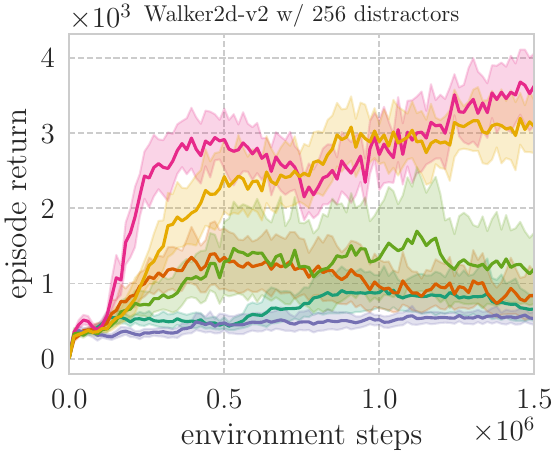}

    \includegraphics[width=0.19\linewidth]{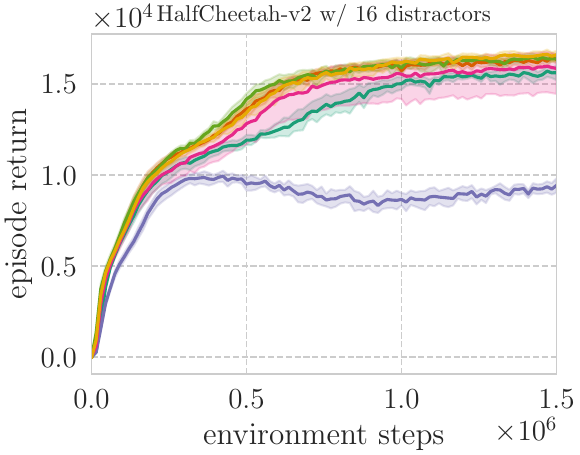}
    \includegraphics[width=0.19\linewidth]{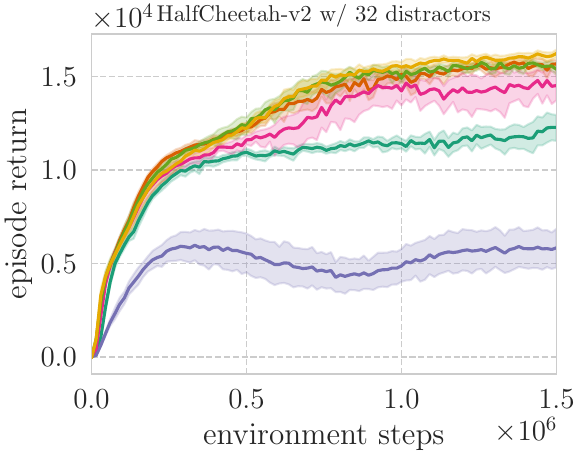}
    \includegraphics[width=0.19\linewidth]{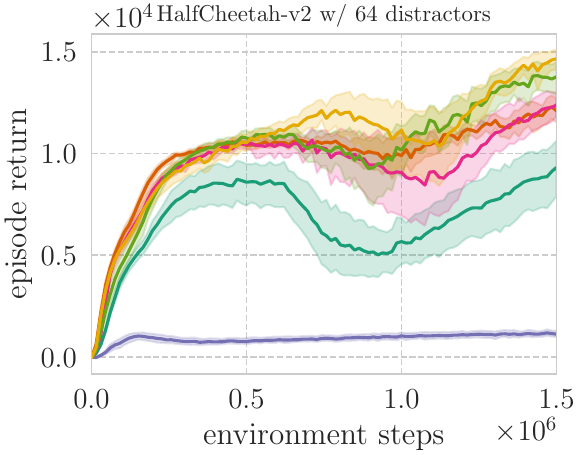}
    \includegraphics[width=0.19\linewidth]{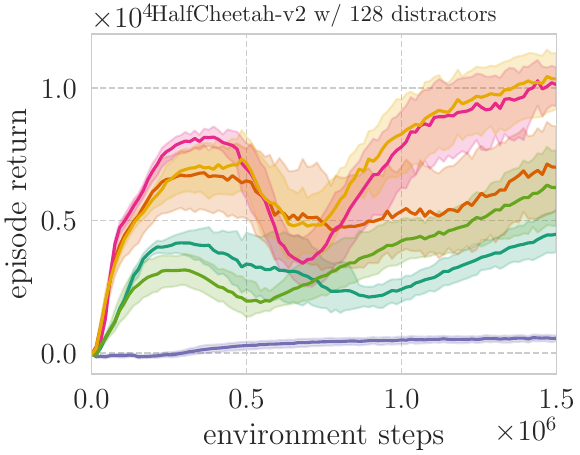}
    \includegraphics[width=0.19\linewidth]{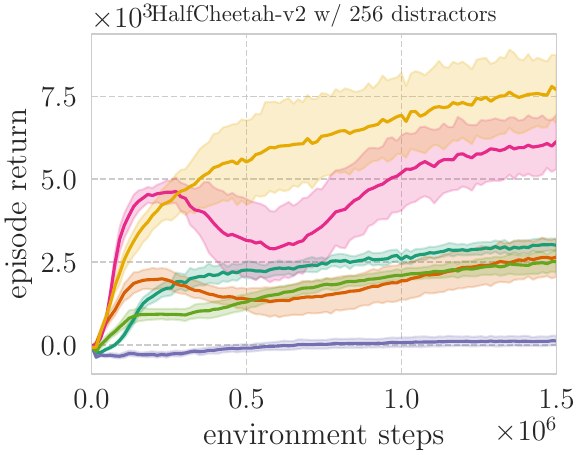}
    
    \includegraphics[width=0.19\linewidth]{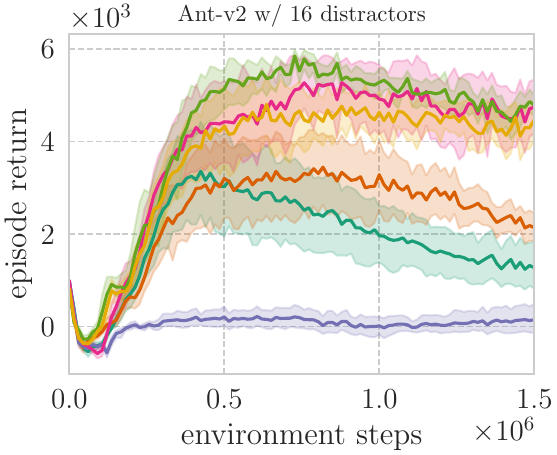}
    \includegraphics[width=0.19\linewidth]{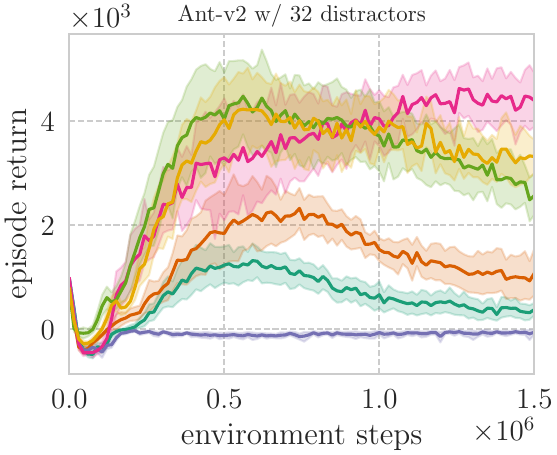}
    \includegraphics[width=0.19\linewidth]{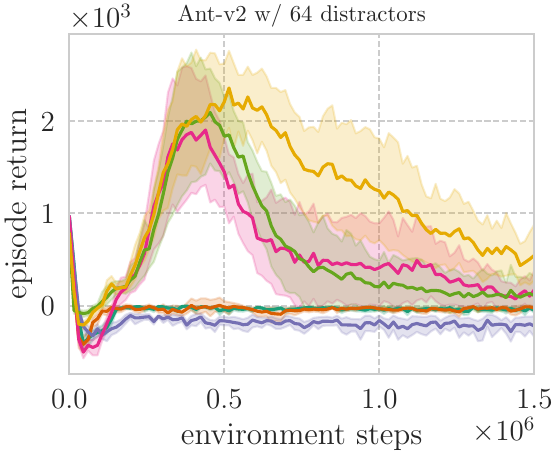}
    \includegraphics[width=0.20\linewidth]{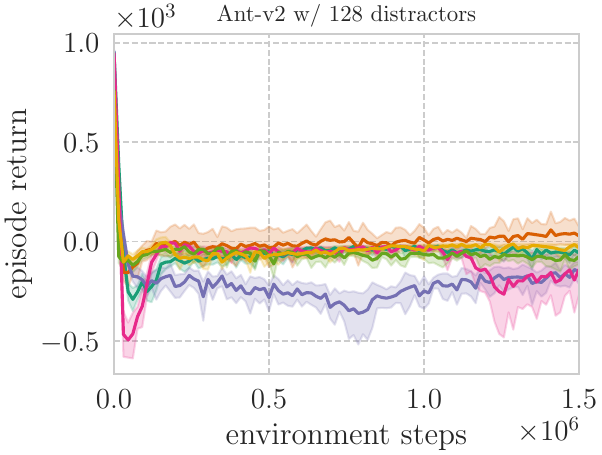}
    \includegraphics[width=0.20\linewidth]{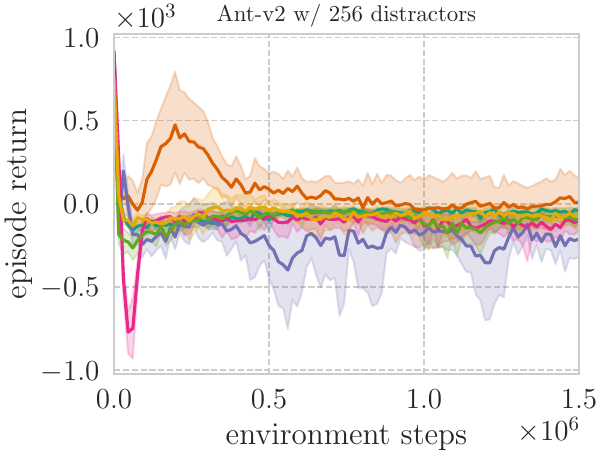}
    
    \includegraphics[width=\linewidth]{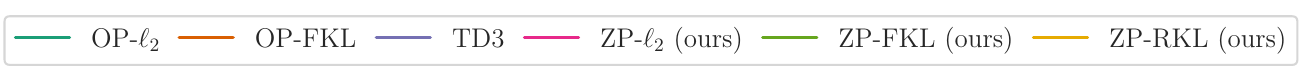}
    \caption{Full learning curves on distracting MuJoCo benchmark.}
    \label{fig:full_curves_distraction}
\end{figure}

\newcommand\ratio{0.19}
\newcommand\ratiorank{0.19}

\begin{figure}
    \centering
    \includegraphics[width=\ratio\linewidth]{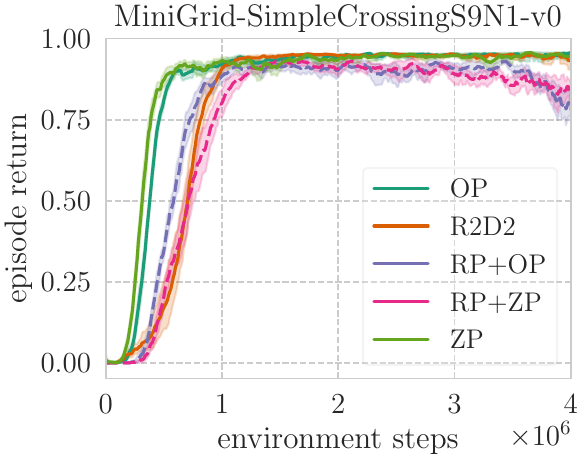}
    \includegraphics[width=\ratio\linewidth]{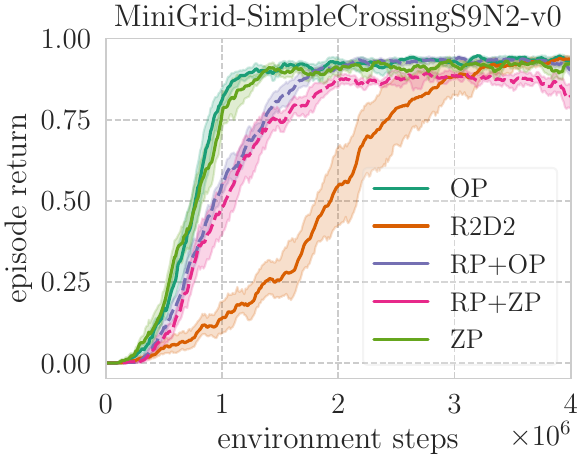}
    \includegraphics[width=\ratio\linewidth]{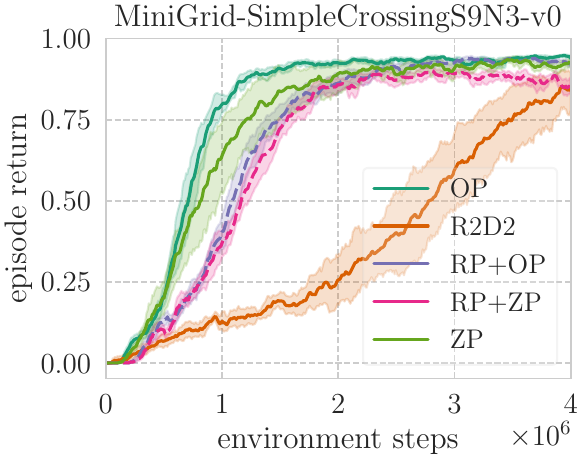}
    \includegraphics[width=\ratio\linewidth]{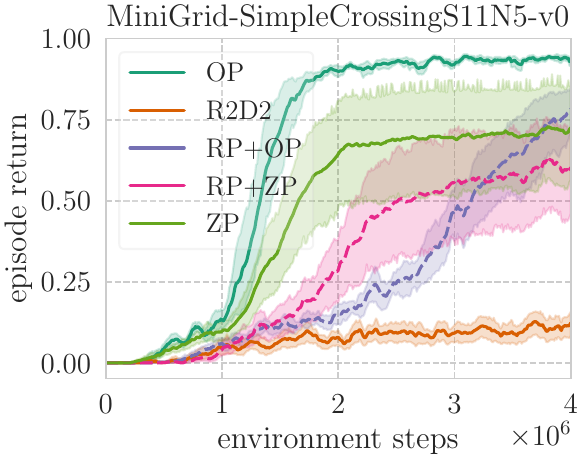}
    \includegraphics[width=\ratio\linewidth]{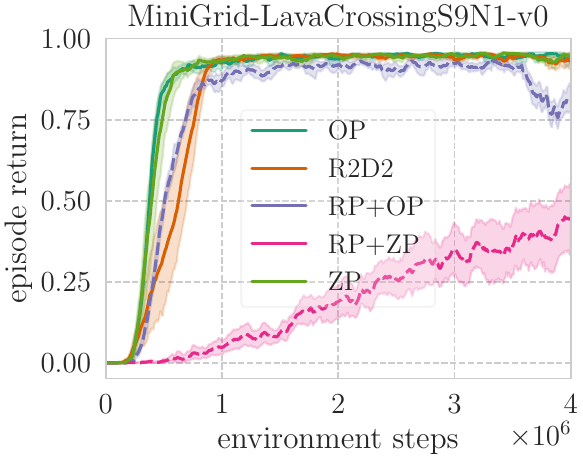}
    \includegraphics[width=\ratio\linewidth]{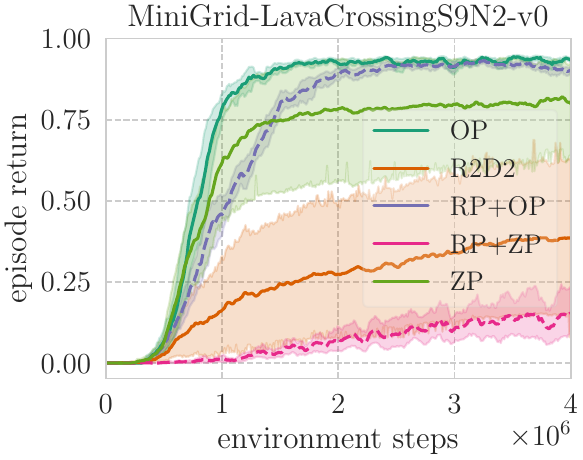}
    \includegraphics[width=\ratio\linewidth]{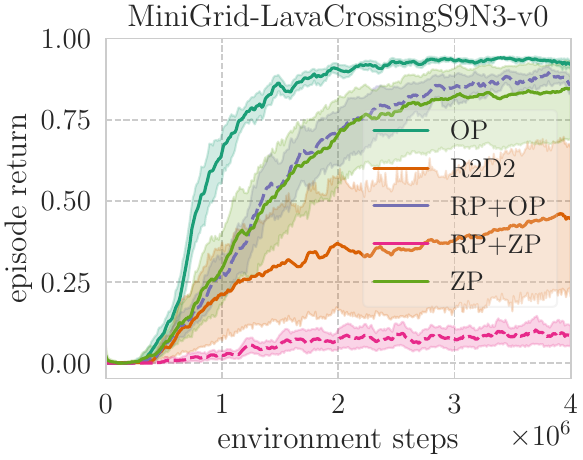}
    \includegraphics[width=\ratio\linewidth]{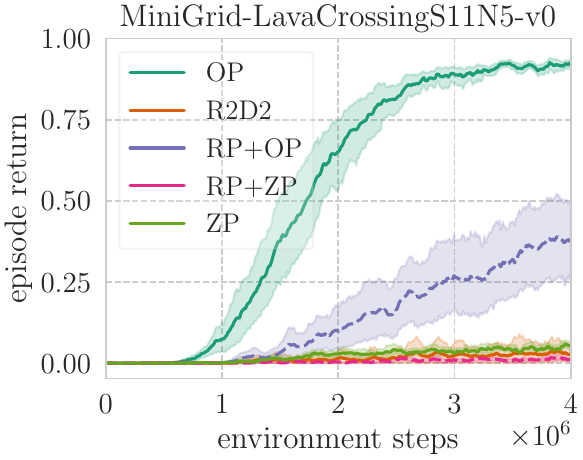}
    \includegraphics[width=\ratio\linewidth]{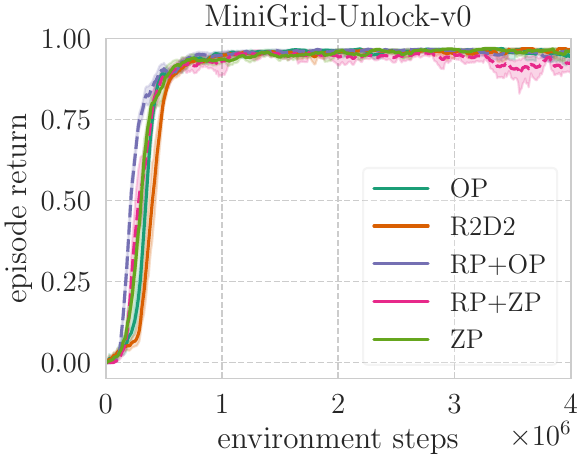}
    \includegraphics[width=\ratio\linewidth]{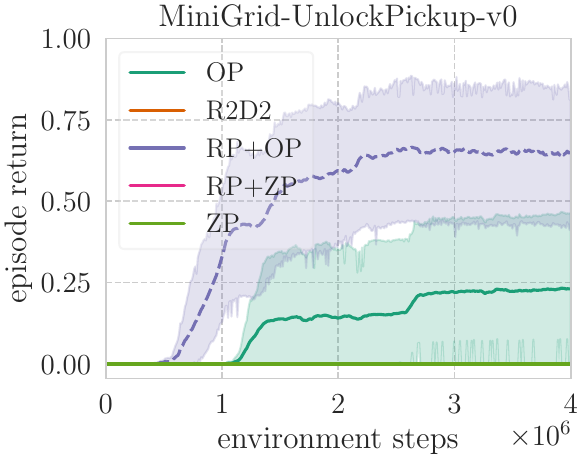}
    \includegraphics[width=\ratio\linewidth]{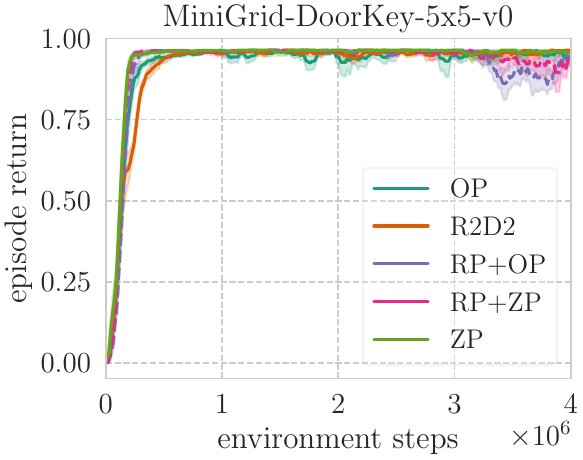}
    \includegraphics[width=\ratio\linewidth]{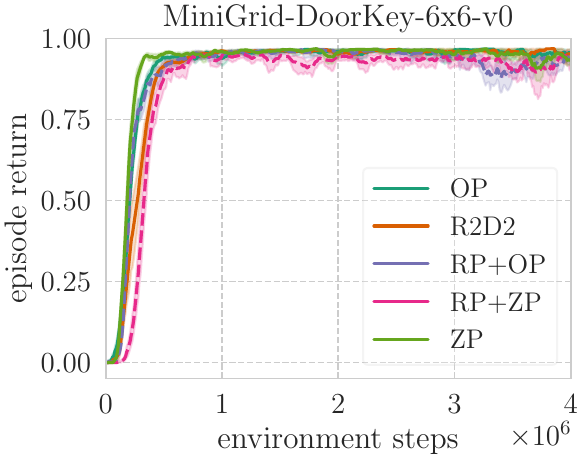}
    \includegraphics[width=\ratio\linewidth]{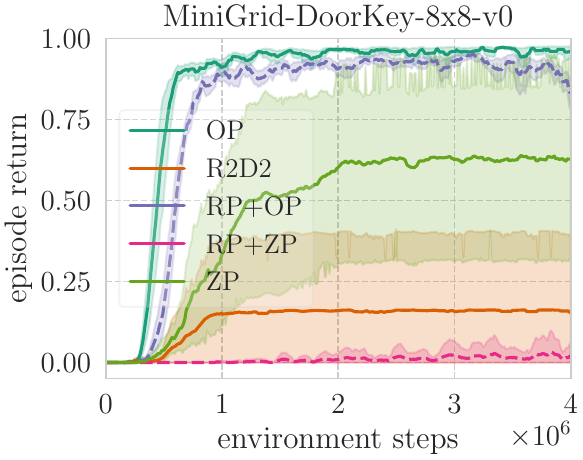}
    \includegraphics[width=\ratio\linewidth]{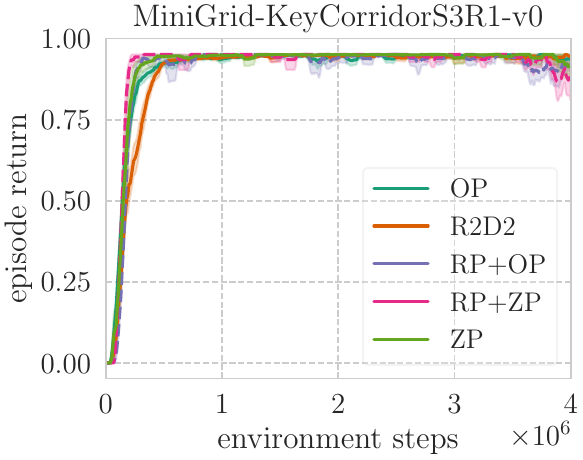}
    \includegraphics[width=\ratio\linewidth]{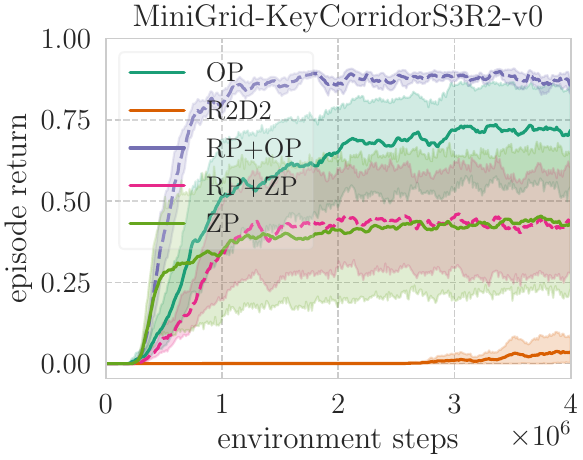}
    \includegraphics[width=\ratio\linewidth]{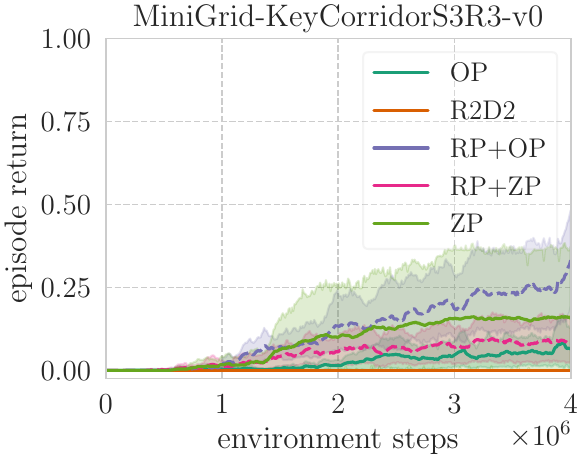}
    \includegraphics[width=\ratio\linewidth]{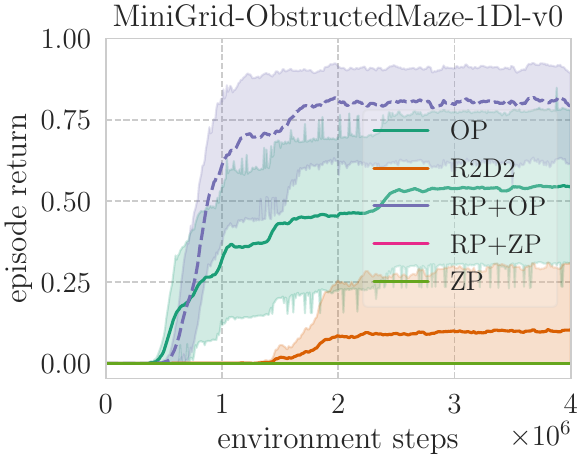}
    \includegraphics[width=\ratio\linewidth]{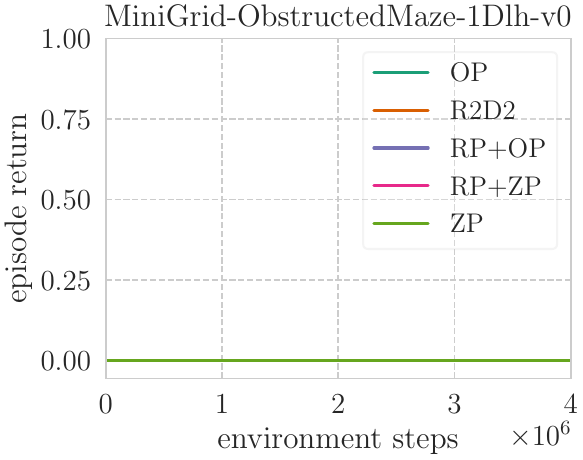}
    \includegraphics[width=\ratio\linewidth]{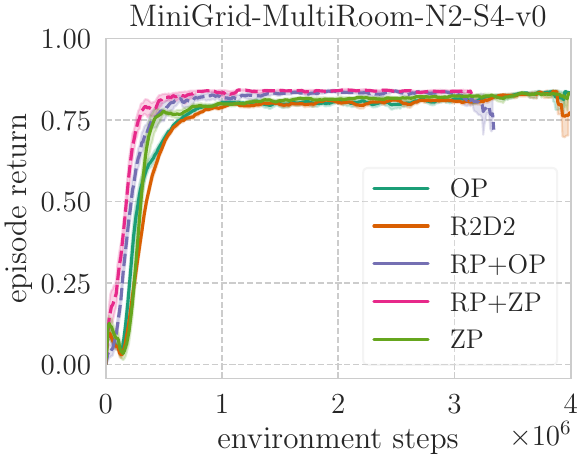}
    \includegraphics[width=\ratio\linewidth]{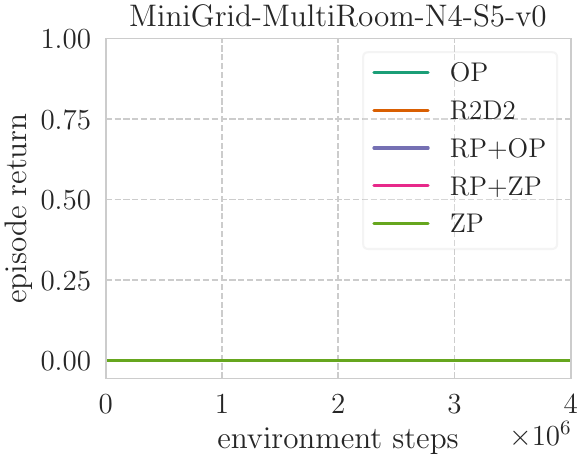}
    \caption{The \textbf{episode return} between $\phi_{Q^*}$ (R2D2), $\phi_L$ (\ZP, \RP + \ZP), $\phi_O$ (\OP, \RP + \OP) in 20 MiniGrid tasks over 4M steps, averaged across $\ge$ 9 seeds. The end-to-end approaches (R2D2, \ZP, \OP) are shown by \textbf{solid} curves, while the phased ones (\RP + \ZP, \RP + \OP) are shown by \textbf{dashed} curves. The \ZP targets use EMA. Tasks sharing the same prefixes (e.g., SimpleCrossing, KeyCorridor) are arranged in order of increasing difficulty.}
    \label{fig:minigrid_full}
\end{figure}

\begin{figure}
    \centering
    \includegraphics[width=\ratiorank\linewidth]{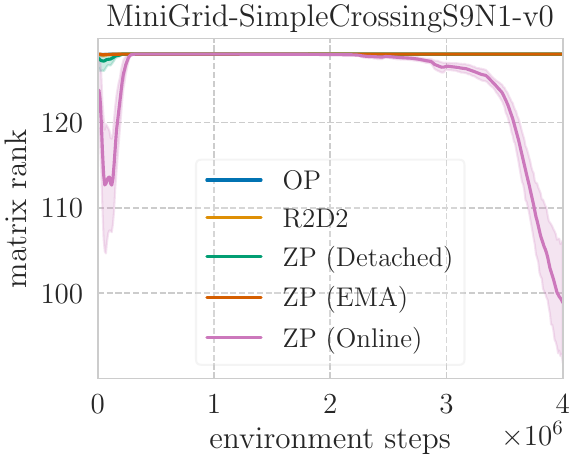}
    \includegraphics[width=\ratiorank\linewidth]{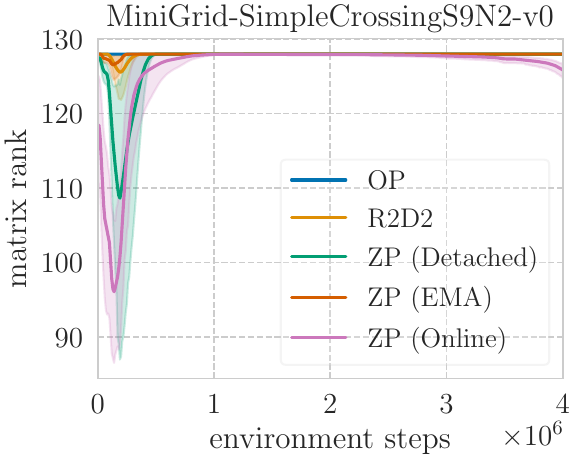}
    \includegraphics[width=\ratiorank\linewidth]{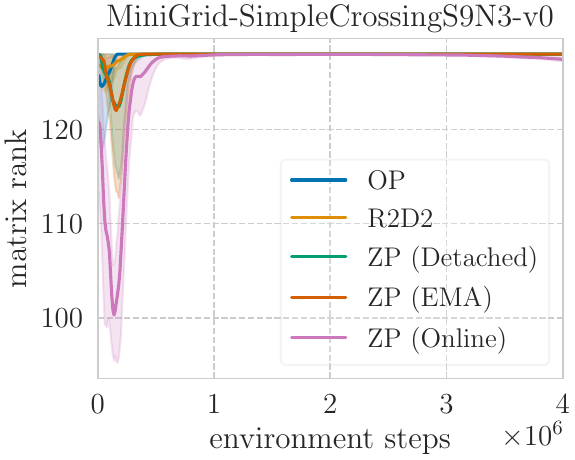}
    \includegraphics[width=\ratiorank\linewidth]{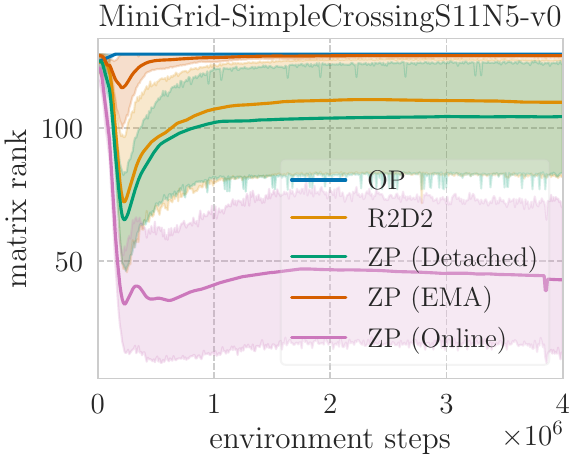}
    \includegraphics[width=\ratiorank\linewidth]{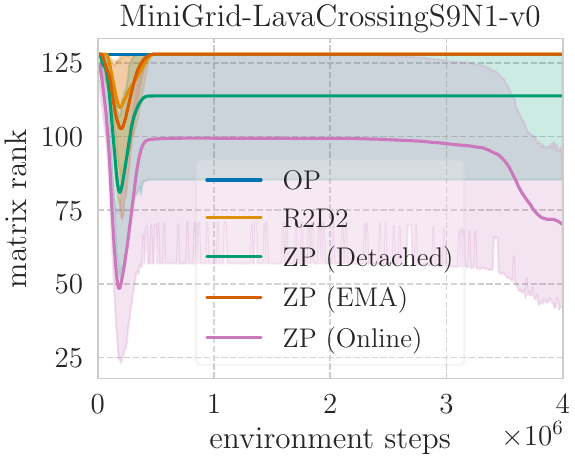}
    \includegraphics[width=\ratiorank\linewidth]{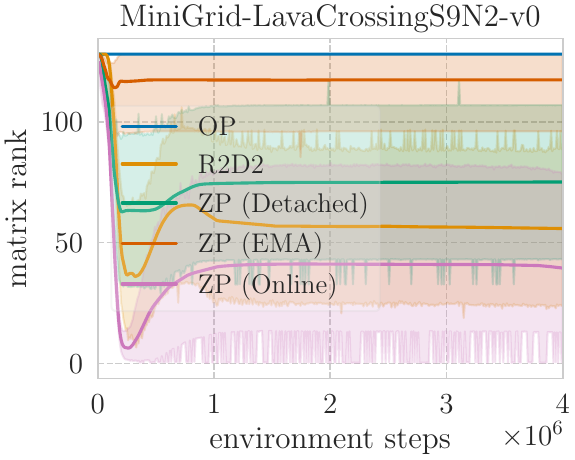}
    \includegraphics[width=\ratiorank\linewidth]{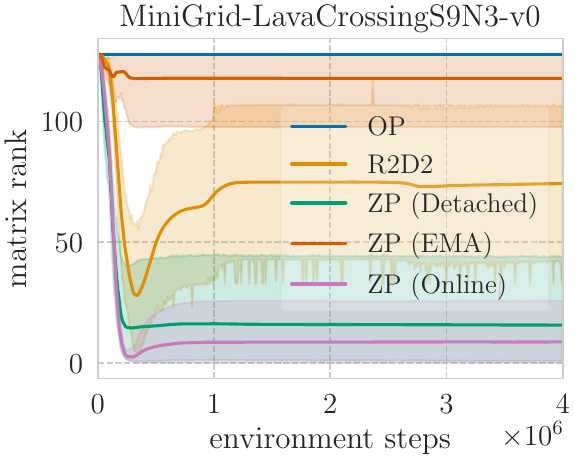}
    \includegraphics[width=\ratiorank\linewidth]{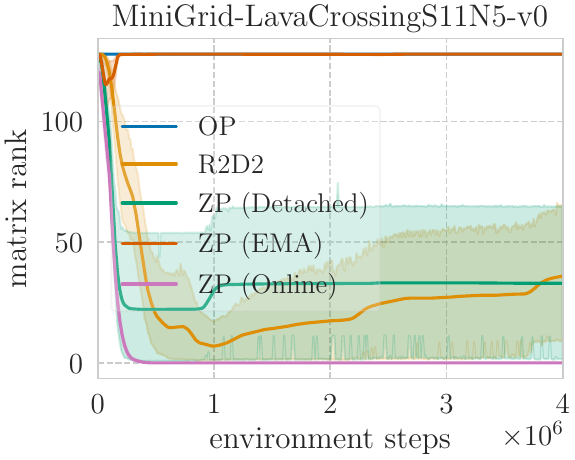}
    \includegraphics[width=\ratiorank\linewidth]{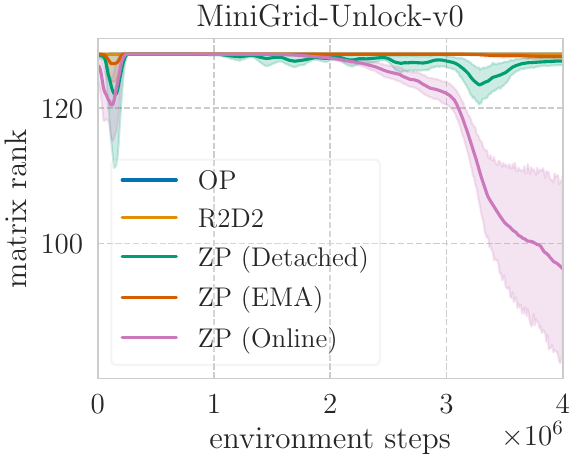}
    \includegraphics[width=\ratiorank\linewidth]{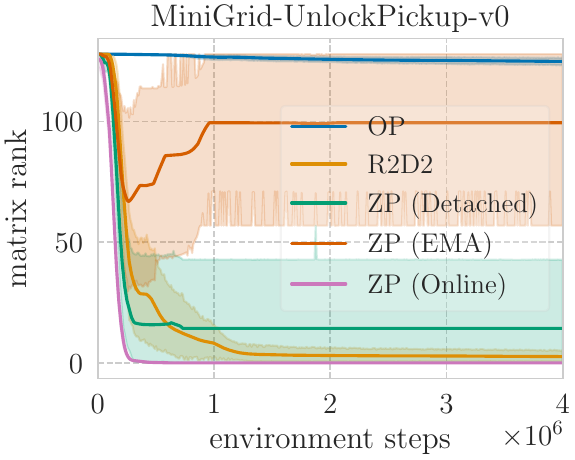}
    \includegraphics[width=\ratiorank\linewidth]{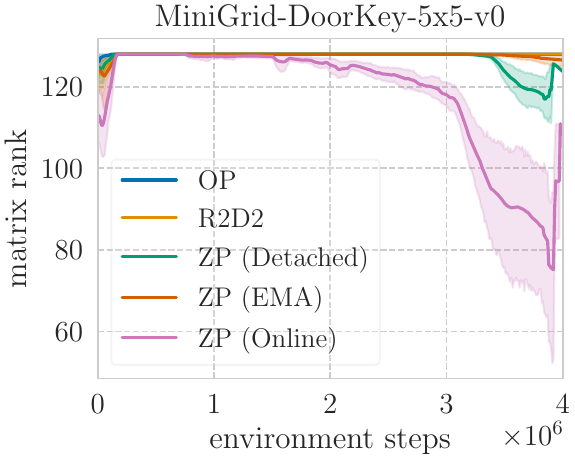}
    \includegraphics[width=\ratiorank\linewidth]{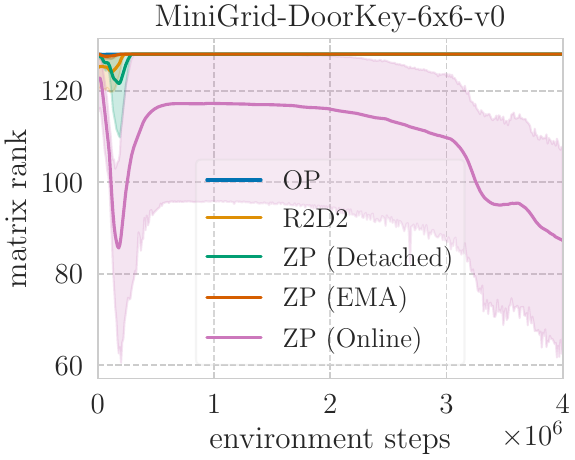}
    \includegraphics[width=\ratiorank\linewidth]{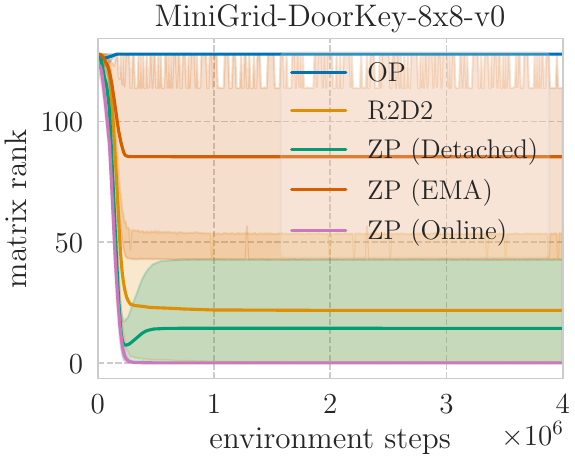}
    \includegraphics[width=\ratiorank\linewidth]{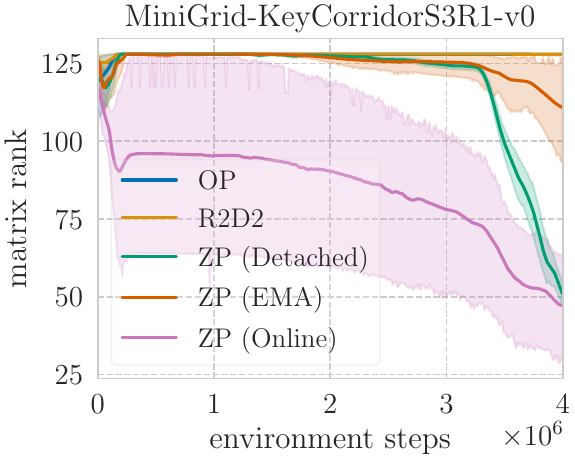}
    \includegraphics[width=\ratiorank\linewidth]{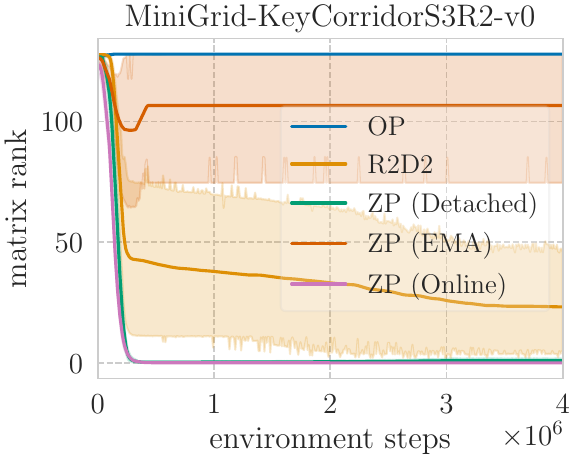}
    \includegraphics[width=\ratiorank\linewidth]{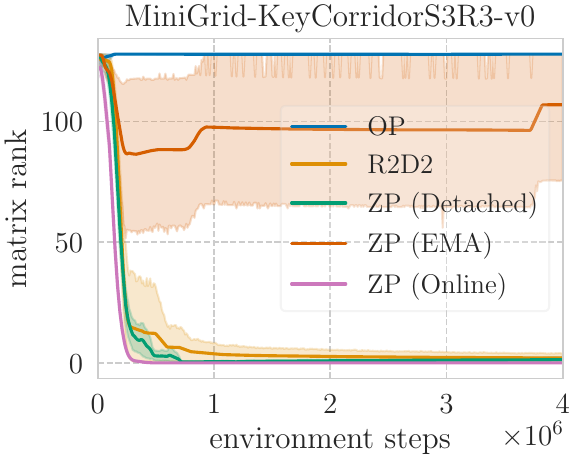}
    \includegraphics[width=\ratiorank\linewidth]{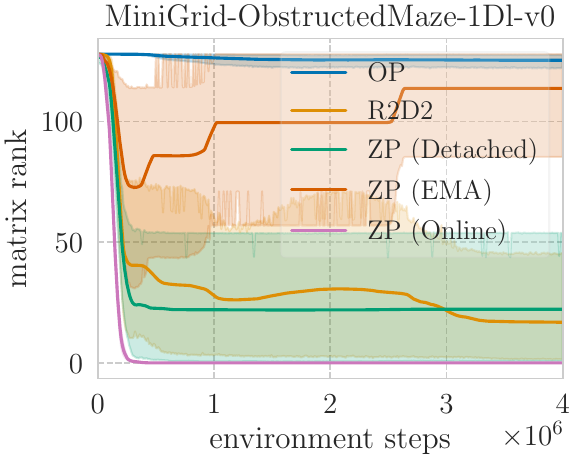}
    \includegraphics[width=\ratiorank\linewidth]{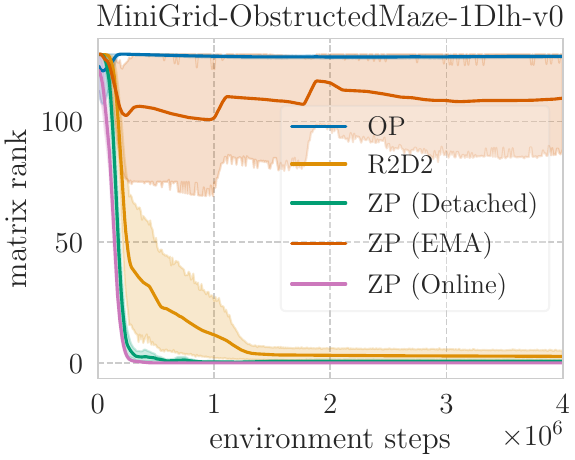}
    \includegraphics[width=\ratiorank\linewidth]{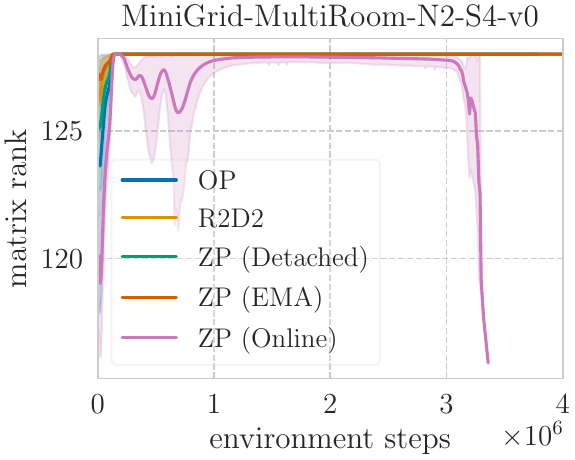}
    \includegraphics[width=\ratiorank\linewidth]{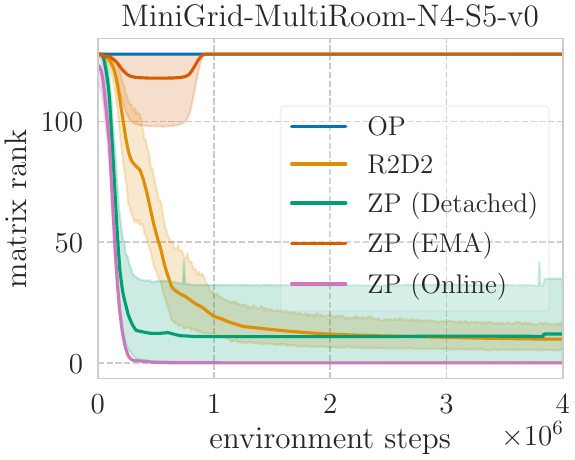}\hfill
    \caption{The estimated \textbf{matrix rank} between \textbf{\ZP targets} (online, detached, EMA), R2D2, and \OP  in 20 MiniGrid tasks over 4M steps, averaged across $\ge$ 9 seeds. The maximal achievable rank is $128$. }
    \label{fig:ZP_optim}
\end{figure}

\end{document}